\documentclass[10pt,journal]{IEEEtran}

\usepackage{amsmath}
\usepackage{amssymb}
\usepackage{amsthm}
\usepackage[ruled, linesnumbered]{algorithm2e}
\usepackage{bbm}
\usepackage{bm}
\usepackage{breakcites}
\usepackage{booktabs}
\usepackage{cases}
\usepackage[font=small, justification=raggedright]{caption}
\usepackage{color}
\usepackage{enumitem}
\usepackage{float}
\usepackage{graphicx}
\usepackage{graphics}
\usepackage{grffile} 
\usepackage{hyperref}
\usepackage{lscape}
\usepackage{listings}
\usepackage{latexsym}
\usepackage{mathrsfs}
\usepackage{multirow}
\usepackage{moresize}
\usepackage{pstricks}
\usepackage{pst-grad} 
\usepackage{pst-plot} 
\usepackage{verbatim}
\usepackage{syntonly}
\usepackage{subfig}
\usepackage{filecontents}

\newtheorem{theorem}{Theorem}
\newtheorem{lemma}{Lemma}
\theoremstyle{definition}

\newtheorem{remark}{Remark}

\newcommand{\changes}[1]{\textcolor{black}{#1}}

\hyphenation{op-tical net-works semi-conduc-tor}

\linespread{1}
\begin{document}
    %
    \title{Subsampling Generative Adversarial Networks: Density Ratio Estimation in Feature Space with Softplus Loss}
    %
    %
    %
    
    \author{Xin~Ding,
        Z.~Jane~Wang,~\IEEEmembership{Fellow,~IEEE}
        and~William~J.~Welch
        \thanks{Xin Ding and William J. Welch are with the Department
            of Statistics, University of British Columbia, Vancouver,
            BC, V6T 1Z4 Canada (e-mail: xin.ding@stat.ubc.ca, will@stat.ubc.ca) \textit{(Corresponding author: Xin Ding)}.}
        \thanks{Jane Z. Wang is with the Department
            of Electrical and Computer Engineering, University of British Columbia, Vancouver, BC, V6T 1Z4 Canada (e-mail: zjanew@ece.ubc.ca).}
        \thanks{Manuscript received \today; revised \today.}}

    \markboth{Preprint}
    {Shell \MakeLowercase{\textit{et al.}}: Bare Demo of IEEEtran.cls for Journals}
    
    \maketitle
    
    \begin{abstract}
        Filtering out unrealistic images from trained generative adversarial networks (GANs) has attracted considerable attention recently. Two density ratio based subsampling methods---Discriminator Rejection Sampling (DRS) and Metropolis-Hastings GAN (MH-GAN)---were recently proposed, and their effectiveness in improving GANs was demonstrated on multiple datasets. However, DRS and MH-GAN are based on discriminator-based density ratio estimation (DRE) methods, so they may not work well if the discriminator in the trained GAN is far from optimal. Moreover, they do not apply to some GANs (e.g., MMD-GAN). In this paper, we propose a novel Softplus (SP) loss for DRE. Based on it, we develop a sample-based DRE method in a feature space learned by a specially designed and pre-trained ResNet-34, termed DRE-F-SP. We derive the rate of convergence of a density ratio model trained under the SP loss. Then, we propose three density ratio based subsampling methods for GANs based on DRE-F-SP. Our subsampling methods do not rely on the optimality of the discriminator and are suitable for all types of GANs. We empirically show our subsampling approach can substantially outperform DRS and MH-GAN on a synthetic dataset, \changes{CIFAR-10, MNIST and CelebA}, using multiple GANs.
    \end{abstract}
    
    \begin{IEEEkeywords}
        Generative adversarial networks, density ratio estimation, subsampling GANs
    \end{IEEEkeywords}

    %
    \IEEEpeerreviewmaketitle

    \section{Introduction}
    \IEEEPARstart{G}{\textit{enerative}} \textit{adversarial networks} (GANs) first introduced by \cite{goodfellow2014generative} are well-known and powerful generative models for image synthesis and have been applied to various types of image-related tasks \cite{wang2018perceptual, wang2018deeply, hsu2018sigan, quan2018compressed, gao2019universal, wei2019facial}. The vanilla GANs proposed by \cite{goodfellow2014generative} consist of two neural networks: a generator and a discriminator. The generator is trained to generate fake images to fool the discriminator while the discriminator is trained to distinguish fake images from real ones. To enhance the quality of fake images generated from a vanilla GAN, many subsequent works have aimed to improve its training, such as large-scale training (e.g., BigGAN \cite{brock2018large}), novel normalization (e.g., SN-GAN \cite{miyato2018spectral}), advanced GAN architectures (e.g., SA-GAN \cite{zhang2019self}), and different loss functions (e.g., WGAN loss \cite{arjovsky2017wasserstein, gulrajani2017improved} based on the \textit{Wasserstein distance} \cite{villani2008optimal} and MMD-GAN loss \cite{li2017mmd} based on \textit{maximum mean discrepancy} \cite{gretton2012kernel}). Instead of improving the training procedure, we are more interested in this article in post-processing fake images from a trained GAN, i.e., subsampling fake images to filter out unrealistic images.
    
    Two density ratio based subsampling methods for GANs were proposed recently and demonstrated to be effective. \textit{Discriminator Rejection Sampling} (DRS) \cite{azadi2018discriminator} is based on \textit{rejection sampling} (RS) to accept or reject a fake image generated from a trained GAN, and
    \textit{Metropolis-Hastings GAN} (MH-GAN) \cite{turner2018metropolis} utilizes the \textit{Metropolis-Hastings algorithm} (MH) to sample from a trained GAN. Denote the true data distribution by $p_r(\bm{x})$ and the distribution of fake images by $p_g(\bm{x})$. The key step of these two subsampling methods is \textit{density ratio estimation} (DRE) where the density ratio $p_r(\bm{x})/p_g(\bm{x})$ is estimated. When GANs are trained with the standard adversarial loss function defined in \cite{goodfellow2014generative}, given a fixed generator, the optimal discriminator $D^*(\bm{x})$ and the density ratio $r(\bm{x})=p_r(\bm{x})/p_g(\bm{x})$ satisfy the  relationship
    \begin{equation}
    \label{eq:relation_DR_Disc}
    r(\bm{x})=\frac{p_r(\bm{x})}{p_g(\bm{x})}=\frac{D^*(\bm{x})}{1-D^*(\bm{x})}.
    \end{equation}
    This property is leveraged by \cite{azadi2018discriminator, turner2018metropolis} to estimate the density ratio $p_r(\bm{x})/p_g(\bm{x})$, and hence DRS and MH-GAN rely heavily on an assumption of optimality of the discriminator.  In practice, however, the quality of the discriminator is difficult to guarantee in GAN training. Moreover, this property no longer holds if a GAN is trained with other loss functions such as the WGAN loss \cite{arjovsky2017wasserstein, gulrajani2017improved} or the MMD-GAN loss \cite{li2017mmd}. Thus, strictly speaking, DRS and MH-GAN are not suitable for WGANs and MMD-GANs. To reduce the reliance of DRS and MH-GAN on the quality of a trained discriminator and broaden their application to different GANs, direct estimation of the density ratio from samples is needed.
    
    Previous research on density ratio estimation for images includes \cite{nam2015direct, khan2019deep, grover2019bias}. \cite{nam2015direct, khan2019deep} propose use of a \textit{convolutional neural network} (CNN) to model the true density ratio function. \cite{nam2015direct} models the density ratio function by a CNN with only two convolutional layers and fits this shallow CNN under the \textit{unconstrained least-squares importance fitting} (uLSIF) loss function. A deeper CNN structure that contains six convolutional layers along with two new loss functions (called DSKL and BARR, respectively) are proposed by \cite{khan2019deep}. However, the loss functions used by \cite{nam2015direct, khan2019deep} to train CNNs are not bounded from below. Hence, if \textit{stochastic gradient descent} (SGD) or a variant is used for the optimization, the training loss keeps decreasing without converging as long as the CNN has enough capacity. Rather than using a neural network to model the true density ratio function, \cite{grover2019bias} leverages the relationship between the true density ratio function and a Bayes optimal classifier (BOC) to estimate density ratios. The BOC is learned from samples and used for classifying real and fake samples. However, this method suffers from the difficulty of achieving the optimality of the BOC. 
    
    In this paper, we focus on improving density ratio based subsampling methods for GANs \cite{azadi2018discriminator, turner2018metropolis} by proposing a novel sample-based \textit{density ratio estimation} (DRE) method. Our contributions can be summarized as follows:
    \begin{itemize}
    	\item We propose in Section \ref{sec:SP_loss} a novel loss function called \textit{Softplus} (SP) loss for density ratio estimation with neural networks.
    	\item We derive in Section \ref{sec:convergence} the rate of convergence of a density ratio model trained with the SP loss under Bregman divergence.
    	\item In Section \ref{sec:DRE-F} we further propose a density ratio estimation method for image data: \textit{Density Ratio Estimation in Feature Space with Softplus Loss} (DRE-F-SP). We model the true density ratio function by a 5-layer multilayer perceptron (MLP) in a feature space learned by a specially designed and pre-trained ResNet-34, and the MLP is trained under the SP loss. 
    	\item Then, in Section \ref{sec:DRE-F-SP_Sampling}, we incorporate the proposed DRE-F-SP into the RS and MH schemes of \cite{azadi2018discriminator, turner2018metropolis}. We also apply the \textit{sampling-importance resampling} (SIR) scheme based on the DRE-F-SP because of the high efficiency of the SIR. These three subsampling methods for GANs are denoted by DRE-F-SP+RS, DRE-F-SP+MH, and DRE-F-SP+SIR, respectively.
    	\item Finally, in Section \ref{sec:experiment}, we conduct experiments on a synthetic dataset and the CIFAR-10 dataset to justify our proposed subsampling methods. The experiments show that they can substantially outperform DRS and MH-GAN. In addition to the main study, we also conduct an ablation study on both the synthetic dataset and the CIFAR-10 dataset, respectively, to demonstrate that the novel SP loss is one source of the improvement. On CIFAR-10, we conduct a second ablation study to show the density ratio estimation in the feature space is another source of the improvement. Moreover, our experiments show that our subsampling methods can improve different types of GANs, e.g., DCGAN, WGAN-GP, and MMD-GAN. \changes{Extra experiments on MNIST and CelebA in the supplemental material also demonstrate the superiority of our methods.} Codes for these experiments can be found at \url{https://github.com/UBCDingXin/DDRE_Sampling_GANs}. 
    \end{itemize}
    
    \section{Related works}
    \subsection{Generative Adversarial Networks}
    A vanilla GAN \cite{goodfellow2014generative} is composed of two neural networks---a generator $G(\bm{z},\theta)$ and a discriminator $D(\bm{x},\phi)$ ($\theta$ and $\phi$ are parameters). The generator takes as input a sample from a simple prior $\bm{z}\in\mathcal{Z}\sim q(\bm{z})$ (e.g., $N(\bm{0},\bm{I})$) and outputs a fake image $\bm{x}^g\in\mathcal{X}\sim p_g(\bm{x})$. The discriminator takes an image $\bm{x}$ from $\mathcal{X}$ as input and outputs the probability $D(\bm{x})$ that $\bm{x}$ is from $p_r(\bm{x})$. These two networks are trained alternately with opposite objective functions. The discriminator is trained to assign a high probability to a real image $\bm{x}^r\sim p_r(\bm{x})$ but a low probability to a fake image $\bm{x}^g\sim p_g(\bm{x})$. Conversely, the training purpose of the generator $G(\bm{z},\theta)$ is to make the discriminator assign a high probability to a fake image $\bm{x}^g$, which is equivalent to making $p_g(\bm{x})$ as close as possible to $p_r(\bm{x})$. The standard loss functions defined by \cite{goodfellow2014generative} for the generator and the discriminator are shown as follows:
    \begin{equation}
    \label{eq:standard_gan_loss}
    \begin{aligned}
    L_D(\phi)&=-\mathbbm{E}_{\bm{x}\sim p_r}\left[\log D(\bm{x},\phi)\right]\\
    &\quad -\mathbbm{E}_{\bm{z}\sim q}\left[\log (1-D(G(\bm{z},\theta),\phi)) \right],\\
    L_G(\theta)&=-\mathbbm{E}_{\bm{z}\sim q}\left[\log D(G(\bm{z},\theta),\phi) \right].
    \end{aligned}
    \end{equation}
    It has been demonstrated by \cite{goodfellow2014generative} that, for a fixed $G$, minimizing $L_D$ results in the optimal discriminator $D^*$:
    \begin{equation}
    \label{eq:optimal_D}
    D^*(\bm{x}) = \frac{p_r(\bm{x})}{p_r(\bm{x})+p_g(\bm{x})}. 
    \end{equation}
    Thus, Eq.\eqref{eq:relation_DR_Disc} can be obtained by simply rearranging Eq.\eqref{eq:optimal_D}. If we denote all layers before the final {\em Sigmoid} layer in a discriminator $D(\bm{x})$ by $\tilde{D}(\bm{x})$, then $D(\bm{x})$ can be rewritten as
    \begin{equation*}
    D(\bm{x}) = \sigma(\tilde{D}(\bm{x})) =\frac{1}{1+e^{-\tilde{D}(\bm{x})}},
    \end{equation*}
    where $\sigma$ denotes a {\em Sigmoid} function. Thus, Eq.\eqref{eq:relation_DR_Disc} can also be rewritten as
    \begin{equation}
    \label{eq:DR_and_D_tilde}
    r(\bm{x})=e^{\tilde{D}^*(\bm{x})}.
    \end{equation}
    
    There are several variants of vanilla GANs, such as WGANs and MMD-GANs. 
    Comparing with vanilla GANs, the generator and discriminator of these variants have different structures and are trained with loss functions different from Eq.\eqref{eq:standard_gan_loss}. In this case, the optimal discriminator $D^*$ in Eq.\eqref{eq:optimal_D} may not be obtained so computing the density ratio as Eq.\eqref{eq:DR_and_D_tilde} may not be applicable. Please see \cite{arjovsky2017wasserstein, gulrajani2017improved, li2017mmd} for more details. 
    
    \subsection{Discriminator Rejection Sampling and Metropolis-Hastings GAN} \label{sec:DRS_and_MH-GAN}
    \textit{Discriminator Rejection Sampling} (DRS) \cite{azadi2018discriminator} filters out bad fake images by using rejection sampling and discriminator-based density ratio estimation. To estimate the density ratio required in rejection sampling, DRS \cite{azadi2018discriminator} takes a pre-trained GAN and proposes to further train the discriminator only on some hold-out real images and the same number of fake images with early stopping.  Then, the trained discriminator is assumed to be the optimal discriminator $D^*(\bm{x})$, and a density ratio at $\bm{x}$ can be computed by evaluating $\exp(\tilde{D}^*(\bm{x}))$ in Eq.\eqref{eq:DR_and_D_tilde}.  A key step in the rejection sampling of DRS is to estimate $M=\max_{\bm{x}}p_r(\bm{x})/p_g(\bm{x})$ by evaluating $\exp(\tilde{D}^*(\bm{x}))$ on 10,000 further fake images. This $M$ may be replaced by a larger density ratio if we find one in subsequent sampling. In regular rejection sampling, a proposed fake sample $\bm{x}^\prime$ is accepted with probability 
    \begin{equation}
    \label{eq:RS_accept_prob}
    p=\frac{p_r(\bm{x}^\prime)}{Mp_g(\bm{x}^\prime)}=\frac{r(\bm{x}^\prime)}{M}.
    \end{equation}
    However, to deal with acceptance probabilities that are too small when the target distribution is high dimensional, \cite{azadi2018discriminator} uses another acceptance probability 
    \begin{equation*}
    p=\sigma(\hat{F}(\bm{x},M,\epsilon,\gamma)),
    \end{equation*}
    where 
    \begin{equation}
    \label{eq:DRS_F_hat}
    \begin{aligned}
    &\hat{F}(\bm{x},M,\epsilon,\gamma)\\
    =&\tilde{D}^*(\bm{x})-\log M-\log\left(1-e^{\tilde{D}^*(\bm{x})-\log M-\epsilon} \right)-\gamma,\\
    \triangleq&F(\bm{x})-\gamma,
    \end{aligned}
    \end{equation}
    $\epsilon$ is a small constant (e.g., $10^{-14}$) for numerical stability and $\gamma$ is a hyper-parameter to control the overall acceptance probability. 
    
    \textit{Metropolis-Hastings GAN} (MH-GAN) \cite{turner2018metropolis} applies the Metropolis-Hastings algorithm to correct the sampling bias of an imperfect generator with information from a calibrated discriminator $D^*$. To be more specific, MH-GAN constructs a Markov chain $\{\bm{x}_1,\bm{x}_2,\ldots\}$ where $\bm{x}_k$ is generated as follows: (1) Draw $\bm{x}^\prime$ from the proposal distribution $p(\bm{x}|\bm{x}_{k-1})=p_g(\bm{x})$ and $u$ from $\textrm{Uniform}(0,1)$; (2) The acceptance probability $p$ is defined as
    \begin{equation}
    \label{eq:MH_accept_prob}
    p=\min\left(1, \frac{r(\bm{x}^\prime)}{r(\bm{x}_{k-1})}\right),
    \end{equation}
    where $r(\bm{x}_{k-1})$ and $r(\bm{x}^\prime)$ are computed based on Eq.\eqref{eq:relation_DR_Disc}; (3) If $u\leq p$, then $\bm{x}_{k}=\bm{x}^\prime$; otherwise $\bm{x}_{k}=\bm{x}_{k-1}$.    This generation-acceptance/rejection procedure is recursively repeated $K$ times and results in a Markov chain of length $K$. To produce independent filtered images, MH-GAN builds one Markov chain per filtered image and for each chain only the last image $\bm{x}_K$ is taken. MH-GAN also includes calibration to refine the trained discriminator. It places either a logistic, isotonic, or beta regression on top of $\tilde{D}$ and trains the regression model on $n_{\text{hold}}$ fake images and $n_{\text{hold}}$ hold-out real images to distinguish between fake and real. Then the calibrated discriminator is built via $D^*(\bm{x})=C(\tilde{D}(\bm{x}))$, where $C$ is the trained regression model. In our experiment, by default, we use the calibrated discriminator to compute density ratios when implementing MH-GAN. This calibration technique can also be applied to WGANs (or similar GANs) to let the calibrated discriminator output class probabilities rather than class scores. However, this calibration is not suitable for MMD-GAN because the ``discriminator'' of MMD-GAN outputs a reconstructed image instead of class scores or class probabilities.
    
    Both of the above methods rely heavily on the optimality of the discriminator to estimate the density ratio, but such optimality is hard to guarantee in practice. In this paper, we focus on improving the density ratio estimation step while keeping most of the other procedures in DRS and MH-GAN unchanged.
    
    \subsection{Sampling-Importance Resampling} \label{sec:SIR}
    
    When a target distribution $p_r(\bm{x})$ is difficult to sample directly,
    sampling-importance resampling (SIR) \cite{robert2010introducing, bolic2005resampling} generates samples from an easier proposal distribution $p_g(\bm{x})$ and then takes subsamples. Specifically, SIR generates  $\left\{\bm{x}_1^g,\cdots,\bm{x}_n^g\right\}$ from $p_g$ and takes subsamples with replacement from them using probability 
    $$w_i=\frac{p_r(\bm{x}_i^g)/p_g(\bm{x}_i^g)}{\sum_{i=1}^np_r(\bm{x}_i^g)/p_g(\bm{x}_i^g)} $$ 
    for $\bm{x}_i^g$.
    The probability $w_i$ is also known as the normalized importance weight for $\bm{x}_i^g$. If $n$ is large enough, resampling from $\left\{\bm{x}_1^g,\cdots,\bm{x}_n^g\right\}$ in this way approximates samples generated from $p_r$.
    
    \subsection{Density Ratio Estimation in Pixel Space}\label{sec:DRE_images}
    To estimate the density ratio for a given image $\bm{x}$, \cite{nam2015direct, khan2019deep} model the true density ratio function $r(\bm{x})=p_r(\bm{x})/p_g(\bm{x})$ by a CNN $\hat{r}(\bm{x};\bm{\alpha})$, i.e.,
    \begin{equation}
    \label{eq:r_hat}
    \hat{r}(\bm{x};\bm{\alpha})\longrightarrow r(\bm{x}),
    \end{equation}
    where $\bm{\alpha}$ is the learnable parameter. The CNN $\hat{r}(\bm{x};\bm{\alpha})$ is trained on samples from both $p_r$ and $p_g$ to map a given image to its density ratio and the estimated density ratio at $\bm{x}$ can be obtained by evaluating the fitted CNN at $\bm{x}$. This type of density ratio estimation method consists of two components: a neural network $\hat{r}(\bm{x};\bm{\alpha})$ (used to model the true density ratio function $r(\bm{x})$) and a loss function. \cite{nam2015direct} proposes a CNN with only two convolutional layers to model the density ratio function and trains this CNN by the uLSIF loss defined as
    \begin{equation}
    \label{eq:uLSIF_loss}
    \widehat{L}_{\text{uLSIF}}(\bm{\alpha}) = \frac{1}{2n_g}\sum_{i=1}^{n_g}\hat{r}^2(\bm{x}_i^g;\bm{\alpha}) - \frac{1}{n_r}\sum_{i=1}^{n_r}\hat{r}(\bm{x}^r_i;\bm{\alpha}).
    \end{equation}
    We denote this DRE method by \textbf{DRE-P-uLSIF}, where \textbf{P} stands for working in the pixel space in contrast to the feature-based methods of Section \ref{sec:DRE-F}. There are two reasons, however, why uLSIF loss is not well-defined for training a neural network to model the true density ratio function:
    \begin{enumerate}[label=\alph*)]
    	\item Due to the strong expression capacity of neural networks, training $\hat{r}(\bm{x};\bm{\alpha})$ under the uLSIF loss may encourage $\hat{r}(\bm{x};\bm{\alpha})$ to memorize all training data by simply assigning almost zero density ratio to all fake images (no matter realistic or not) but very large density ratio to all real images. In this case, if we use the SGD optimizer or its variants, the training loss \changes{may keep decreasing without converging}. 
    	
    	\item To prevent $\hat{r}(\bm{x};\bm{\alpha})$ from simply ``memorizing" training data, we may add extra constraints on $\hat{r}(\bm{x};\bm{\alpha})$. Since $\int r(\bm{x})p_g(\bm{x})d\bm{x} = \int p_r(\bm{x})d\bm{x} = 1$, a natural constraint on $\hat{r}(\bm{x};\bm{\alpha})$ is
    	\begin{equation}
    	\label{eq:expectation_dr_fake_samples}
    	\int \hat{r}(\bm{x};\bm{\alpha})p_g(\bm{x})d\bm{x} = 1.
    	\end{equation}
    	An empirical approximation to this constraint is
    	\begin{equation}
    	\frac{1}{n_g}\sum_{i=1}^{n_g}\hat{r}(\bm{x}_i^g;\bm{\alpha}) = 1.
    	\end{equation}
    	We can apply this constraint by adding a penalty term to the uLSIF loss, i.e., 
    	\begin{equation}
    	\label{eq:optim_uLSIF_penlaty}
    	\min_{\bm{\alpha}} \left\{\widehat{L}_{\text{uLSIF}}(\bm{\alpha}) + \lambda \hat{Q}(\bm{\alpha})\right\},
    	\end{equation} 
    	where
    	\begin{equation}
    	\label{eq:penalty_hat}
    	\hat{Q}(\bm{\alpha}) = \left(\frac{1}{n_g}\sum_{i=1}^{n_g}\hat{r}(\bm{x}_i^g;\bm{\alpha}) - 1\right)^2.
    	\end{equation}
    	However, due to the unbounded nature of the uLSIF loss (the range of $\widehat{L}_{\text{uLSIF}}(\bm{\alpha})$ is $(-\infty, \infty)$ given $\hat{r}(\bm{x})\geq 0$), the penalty term $\lambda \hat{Q}(\bm{\alpha})$ \changes{can not stop $\widehat{L}_{\text{uLSIF}}(\bm{\alpha})$ from going to negative infinity during training}, no matter how large $\lambda$ is. In this case, the penalty term has no effect.
    \end{enumerate}

    Two new DRE methods are given by \cite{khan2019deep} in which a 6-layer-CNN is adapted to model the true density ratio function.  The methods differ only in their two new training loss functions---DSKL and BARR---and they are denoted by \textbf{DRE-P-DSKL} and \textbf{DRE-P-BARR}, respectively.  The two new loss functions are defined as follows:
    \begin{equation}
    \label{eq:DSKL}
    \widehat{L}_{\text{DSKL}}(\bm{\alpha})=-\frac{1}{n_r}\sum_{i=1}^{n_r}\log \hat{r}(\bm{x}_i^r;\bm{\alpha})+\frac{1}{n_g}\sum_{i=1}^{n_g}\log\hat{r}(\bm{x}_i^g;\bm{\alpha}),
    \end{equation}
    \begin{equation}
    \label{eq:BARR}
    \widehat{L}_{\text{BARR}}(\bm{\alpha})=-\frac{1}{n_r}\sum_{i=1}^{n_r}\log \hat{r}(\bm{x}_i^r;\bm{\alpha})+\lambda\left|\frac{1}{n_g}\sum_{i=1}^{n_g}\hat{r}(\bm{x}_i^g;\bm{\alpha})-1\right|.
    \end{equation}
    \cite{khan2019deep} suggests setting $\lambda=10$ in Eq.\eqref{eq:BARR}. Unfortunately, these two new loss functions still suffer from the same problems besetting uLSIF, so they are still unsuitable for density ratio estimation with neural networks. 
    
    Different from \cite{nam2015direct, khan2019deep}, \cite{grover2019bias} estimates the density ratio using a relationship between the true density ratio $r(\bm{x})$ and a BOC $c(\bm{x})$:
    \begin{equation}
    r(\bm{x})=\frac{p_r(\bm{x})}{p_g(\bm{x})}=\gamma\frac{c(\bm{x})}{1-c(\bm{x})},
    \end{equation}
    where $\gamma$ is a prior odds that an image is fake and $c$ is a binary classifier which distinguishes between images from $p_r$ and $p_g$. A CNN is trained by \cite{grover2019bias} on an equal number of real and fake samples. This trained CNN is used as the BOC and $\gamma$ is assumed to be 1.
    
    \subsection{Fitting Density Ratio Models Under Bregman Divergence}\label{sec:BR_div}
    The uLSIF loss \eqref{eq:uLSIF_loss} is a special case of the Bregman (BR) divergence, based on which we propose a novel loss call Softplus loss in Section \ref{sec:SP_loss}. BR divergence \cite{bregman1967relaxation, varshney2011bayes}, an extension of the squared Euclidean distance, measures the distance between two points $t^*$ and $t$ in terms of a function $f$ as follows:
    \begin{equation}
    \label{eq:BR_div}
    BR^\prime_f(t^*|t)= f(t^*)-f(t)-\triangledown f(t)(t^*-t),
    \end{equation}
    where $f:\Omega\rightarrow \mathbbm{R}$ is a continuously differentiable and strictly convex function defined on a closed set $\Omega$. Assume $f$ is defined on $\Omega=[\min(m_1,m_2), \allowbreak \max(M_1,M_2)]$, where $m_1=\min r(\bm{x})$, $m_2=\min \hat{r}(\bm{x};\bm{\alpha})$, $M_1=\max r(\bm{x})$, $M_2=\max \hat{r}(\bm{x};\bm{\alpha})$, and $\hat{r}(\bm{x};\bm{\alpha})$ is a density ratio model with a learnable parameter $\bm{\alpha}$. The BR divergence defined based on $f$ is used by \cite{sugiyama2012density} to quantify the discrepancy between $r(\bm{x})$ and $\hat{r}(\bm{x};\bm{\alpha})$ as follows:
    \begin{equation}
    \label{eq:BR_Prime_div_DR}
    \begin{aligned}
    BR_f^\prime(\bm{\alpha}) = &\int{p_g(x)}\left[ f(r(\bm{x}))-f(\hat{r}(\bm{x};\bm{\alpha})) - \triangledown f(\hat{r}(\bm{x}))(r(\bm{x})\right.\\
    &\quad\left. -\hat{r}(\bm{x};\bm{\alpha})) \right]d\bm{x}\\
    =&\quad C+BR_f(\bm{\alpha}),
    \end{aligned}
    \end{equation} 
    where $C=\int p_g(\bm{x})f(r(\bm{x}))d\bm{x}$ does not depend on $\hat{r}(\bm{x};\bm{\alpha})$ and 
    \begin{equation}
    \label{eq:BR_div_DR}
    \begin{aligned}
    BR_f(\bm{\alpha}) &= \int p_g(\bm{x})\triangledown f(\hat{r}(\bm{x};\bm{\alpha}))\hat{r}(\bm{x};\bm{\alpha})d\bm{x} \\
    &- \int p_g(\bm{x})f(\hat{r}(\bm{x};\bm{\alpha}))d\bm{x} - \int p_r(\bm{x})\triangledown f(\hat{r}(\bm{x};\bm{\alpha})) d\bm{x}.
    \end{aligned}
    \end{equation}
    An empirical approximation to $BR_f(\bm{\alpha})$ is 
    \begin{equation}
    \label{eq:BR_div_DR_emp}
    \begin{aligned}
    \widehat{BR}_f(\bm{\alpha}) &= \frac{1}{n_g}\sum_{i=1}^{n_g}\triangledown f(\hat{r}(\bm{x}_i^g;\bm{\alpha}))\hat{r}(\bm{x}_i^g;\bm{\alpha}) \\
    &- \frac{1}{n_g}\sum_{i=1}^{n_g}f(\hat{r}(\bm{x}_i^g;\bm{\alpha})) - \frac{1}{n_r}\sum_{i=1}^{n_r}\triangledown f(\hat{r}(\bm{x}_i^r;\bm{\alpha})).
    \end{aligned}
    \end{equation}
    With $f$ appropriately chosen,  $\widehat{BR}_f(\bm{\alpha})$ can be used as a loss function to fit $\hat{r}(\bm{x};\bm{\alpha})$. For example, the uLSIF loss \eqref{eq:uLSIF_loss} is a special case of Eq.\eqref{eq:BR_div_DR_emp} when $f(t) = 0.5(t-1)^2$.

    \section{Method}\label{sec:method}
    \subsection{Softplus Loss Function for Density Ratio Estimation}\label{sec:SP_loss}
    Motivated by the two shortcomings of uLSIF \eqref{eq:uLSIF_loss}, DSKL \eqref{eq:DSKL} and BARR \eqref{eq:BARR}, we propose a novel loss function called Softplus (SP) loss for density ratio estimation with neural networks. The SP loss is a special case of $BR_f(\bm{\alpha})$ in \eqref{eq:BR_div_DR} when $f(t)$ is the softplus function 
    \begin{equation}
    \label{eq:softplus_function}
    \eta(t)=\ln(1+e^t).
    \end{equation}
    The derivative of the softplus function $\eta(t)$ is the sigmoid function 
    \begin{equation}
    \label{eq:sigmoid_function}
    \sigma(t)=\frac{e^t}{1+e^t}.
    \end{equation}
    The second derivative of $\eta(t)$ is $\sigma(t)(1-\sigma(t))$ which is positive so the softplus function is strictly convex. Then, the SP loss and its empirical approximation are defined as
    \begin{equation}
    \label{eq:SP_loss}
    \begin{aligned}
    SP(\bm{\alpha}) &= \int \sigma(\hat{r}(\bm{x};\bm{\alpha}))\hat{r}(\bm{x};\bm{\alpha})p_g(\bm{x})d\bm{x}\\
    &-\int \eta(\hat{r}(\bm{x}))p_g(\bm{x})d\bm{x}-\int \sigma(\hat{r}(\bm{x};\bm{\alpha}))p_r(\bm{x})d\bm{x},
    \end{aligned}
    \end{equation}
    and   
    \begin{equation}
    \label{eq:SP_loss_emp}
    \begin{aligned}
    \widehat{SP}(\bm{\alpha})&=\frac{1}{n_g}\sum_{i=1}^{n_g}\sigma(\hat{r}(\bm{x}_i^g;\bm{\alpha}))\hat{r}(\bm{x}_i^g;\bm{\alpha}) \\
    &\quad- \frac{1}{n_g}\sum_{i=1}^{n_g}\eta(\hat{r}(\bm{x}_i^g;\bm{\alpha}))-\frac{1}{n_r}\sum_{i=1}^{n_r}\sigma(\hat{r}(\bm{x}_i^r;\bm{\alpha}))\\
    &=\frac{1}{n_g}\sum_{i=1}^{n_g}\left[\sigma(\hat{r}(\bm{x}_i^g;\bm{\alpha}))\hat{r}(\bm{x}_i^g;\bm{\alpha}) - \eta(\hat{r}(\bm{x}_i^g;\bm{\alpha})) \right]\\ &\quad-\frac{1}{n_r}\sum_{i=1}^{n_r}\sigma(\hat{r}(\bm{x}_i^r;\bm{\alpha})),
    \end{aligned}
    \end{equation}
    where $\hat{r}(\bm{x};\bm{\alpha})$ is the density ratio model in Eq. \eqref{eq:r_hat}.
    
    \begin{theorem}\label{thm:SP_bound}
    	The empirical SP loss \eqref{eq:SP_loss_emp} is bounded from below, i.e., $\widehat{SP}(\bm{\alpha}) > -\ln 2 -1$.
    \end{theorem}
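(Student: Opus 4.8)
The plan is to bound $\widehat{SP}(\bm{\alpha})$ term by term, separating the two sums in the last line of \eqref{eq:SP_loss_emp}. First I would record the structural facts we are allowed to use: the density ratio model satisfies $\hat{r}(\bm{x};\bm{\alpha})\ge 0$ (it estimates a ratio of densities), $\eta'=\sigma$, and $\eta''(t)=\sigma(t)(1-\sigma(t))>0$, as already noted in the text. The key observation is that the per-sample contribution of the fake data, namely $g(t):=\sigma(t)\,t-\eta(t)$, is uniformly bounded below on $[0,\infty)$.

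To prove that, I would differentiate: $g'(t)=\sigma(t)+t\,\sigma'(t)-\eta'(t)=t\,\sigma(t)\bigl(1-\sigma(t)\bigr)$, using $\sigma'=\sigma(1-\sigma)$ and $\eta'=\sigma$. Since $\sigma(t)\in(0,1)$, we get $g'(t)\ge 0$ for every $t\ge 0$, so $g$ is nondecreasing on $[0,\infty)$ and hence $g(t)\ge g(0)=0\cdot\sigma(0)-\eta(0)=-\ln 2$ for all $t\ge 0$. Applying this with $t=\hat{r}(\bm{x}_i^g;\bm{\alpha})\ge 0$ for each $i$ gives $\frac{1}{n_g}\sum_{i=1}^{n_g}\bigl[\sigma(\hat{r}(\bm{x}_i^g;\bm{\alpha}))\hat{r}(\bm{x}_i^g;\bm{\alpha})-\eta(\hat{r}(\bm{x}_i^g;\bm{\alpha}))\bigr]\ge -\ln 2$.

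For the real-data sum, I would simply use that the sigmoid is strictly less than $1$: $\sigma(\hat{r}(\bm{x}_i^r;\bm{\alpha}))<1$ for every $i$ (the argument is finite), so $-\frac{1}{n_r}\sum_{i=1}^{n_r}\sigma(\hat{r}(\bm{x}_i^r;\bm{\alpha}))>-1$. Combining the two bounds yields $\widehat{SP}(\bm{\alpha})>-\ln 2-1$, which is the claim; the strictness comes entirely from the real-data term. There is essentially no serious obstacle here — the only point requiring a little care is justifying the lower bound $g(t)\ge -\ln 2$ on the whole half-line $[0,\infty)$ via the monotonicity argument (rather than just at isolated points), and making sure the nonnegativity constraint $\hat{r}\ge 0$ is genuinely in force so that $t=0$ is the relevant minimizer; if instead $\hat r$ could be negative, $g$ would still be bounded below (its global infimum over $\mathbb{R}$ is still finite), but the constant would change, so I would flag the $\hat r\ge 0$ convention explicitly.
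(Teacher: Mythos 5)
Your proposal is correct and follows essentially the same route as the paper's own proof: bound the fake-sample term via the monotonicity of $g(t)=\sigma(t)t-\eta(t)$ on $[0,\infty)$ (so $g(t)\ge g(0)=-\ln 2$) and the real-sample term via $\sigma(\cdot)<1$, giving $\widehat{SP}(\bm{\alpha})>-\ln 2-1$. If anything, your write-up is slightly more careful than the paper's (you state $g'(t)\ge 0$ rather than the strict inequality, which fails at $t=0$, and you make the convention $\hat{r}\ge 0$ explicit), but the argument is the same.
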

    \begin{proof}
    	We define
    	\begin{equation*}
    	g(t) =\sigma(t)\cdot t-\eta(t)= e^t\cdot t \cdot (1+e^t)^{-1}-\ln(1+e^t),\quad t\geq 0.
    	\end{equation*}
    	Since $g^\prime(t) = e^t\cdot t \cdot (1+e^t)^{-2}>0$ when $t\geq 0$, $g(t)$ is monotonically increasing on its domain and $\min_tg(t)=g(0)=-\ln2$. Moreover, $\sigma(t)$ is lower bounded by -1. Therefore, the empirical SP loss has a lower bound, i.e., $\widehat{SP}(\bm{\alpha}) > -\ln 2 -1$.
    \end{proof}
    
    Then we propose to train the density ratio model $\hat{r}(\bm{x};\bm{\alpha})$ by minimizing the following penalized SP loss:
    \begin{equation}
    \label{eq:penalized_SP}
    \min_{\bm{\alpha}} \left\{\widehat{SP}(\bm{\alpha}) + \lambda \hat{Q}(\bm{\alpha})\right\},
    \end{equation}
    where $\hat{Q}(\bm{\alpha})$ is defined in Eq.\eqref{eq:penalty_hat} and $\lambda$ (a hyper-parameter) controls the penalty strength. Now the penalty term may take effect if a proper $\lambda$ is chosen.
    
    {\setlength{\parindent}{0cm}
    	\textbf{Hyperparameter Selection.}} To select the optimal hyperparameter $\lambda^*$, we evaluate a trained density ratio model $\hat{r}(\bm{x};\bm{\alpha})$ on $n_r$ real images which are used for training and $n_v$ hold-out real images $\{\bm{x}_1^v,\cdots,\bm{x}_{n_v}^v\}$ separately. Then we have two sets of density ratios: $\{\hat{r}(\bm{x}_1^r;\bm{\alpha}),\cdots,\hat{r}(\bm{x}_{n_r}^r;\bm{\alpha})\}$ and $\{\hat{r}(\bm{x}_1^v;\bm{\alpha}),\cdots,\hat{r}(\bm{x}_{n_v}^v;\bm{\alpha})\}$. If the model does not overfit the training images, these two sets should have similar distributions. We use the two-sample Kolmogorov-Smirnov (KS) test statistic \cite{Chakravarti1967KStest} to quantify the divergence between these two distributions. The optimal hyperparameter $\lambda^*$ is selected to minimize this KS test statistic. Other metrics (e.g., Kullback–Leibler (KL) divergence \cite{van2014renyi}) may also be useful for the hyperparameter selection.
    
    \subsection{Rate of Convergence}\label{sec:convergence}
    In this section, we derive the rate of convergence of a density ratio model trained with our proposed Softplus loss under the Bregman divergence in the GAN setting (i.e., $n_g$ is large enough). Let $\mathcal{H}=\{h\in\mathcal{H}: \bm{x}\mapsto h(\bm{x}) \}$ denote the set of potential functions that can be represented by the density ratio model $\hat{r}(\bm{x},\bm{\alpha})$ (i.e., the \textit{Hypothesis Space}). Also let $\sigma    \circ\mathcal{H}=\{h\in\mathcal{H}: \bm{x}\mapsto\sigma(h(\bm{x})) \}$, where $\sigma$ is the Sigmoid function. 
    
    \begin{lemma}
    	\label{lem:Rademacher_complexity_SP}
    	Let $\hat{\mathcal{R}}_{p_r,n_r}(\mathcal{H})$ and $\hat{\mathcal{R}}_{p_r,n_r}(\sigma\circ\mathcal{H})$ denote the empirical Rademacher complexities of $\mathcal{H}$ and $\sigma\circ\mathcal{H}$ respectively, where $\hat{\mathcal{R}}_{p_r,n_r}(\mathcal{H})$ and $\hat{\mathcal{R}}_{p_r,n_r}(\sigma\circ\mathcal{H})$ are defined based on independent samples $\{\bm{x}_1,\cdots,\bm{x}_{n_r}\}$ from $p_r(\bm{x})$. The following inequality holds:
    	\begin{equation*}
    	\hat{\mathcal{R}}_{p_r,n_r}(\sigma\circ\mathcal{H})\leq \frac{1}{4}\hat{\mathcal{R}}_{p_r,n_r}(\mathcal{H}),
    	\end{equation*}
    	where 
    	\begin{equation*}
    	\begin{aligned}
    	\hat{\mathcal{R}}_{p_r,n_r}(\mathcal{H})&=\mathbb{E}_\rho\left\{\sup_{h\in\mathcal{H}}\left|\frac{1}{n_r}\sum_{i=1}^{n_r}\rho_ih(\bm{x}_i^r) \right| \right\}, \\ \hat{\mathcal{R}}_{p_r,n_r}(\sigma\circ\mathcal{H})&=\mathbb{E}_\rho\left\{\sup_{h\in\mathcal{H}}\left|\frac{1}{n_r}\sum_{i=1}^{n_r}\rho_i\sigma(h(\bm{x}_i^r)) \right| \right\},
    	\end{aligned}
    	\end{equation*}
    	and $\rho_1,\cdots,\rho_{n_r}$ are independent Rademacher random variables whose distribution is $P(\rho_i=1)=P(\rho_i=-1)=0.5$.
    \end{lemma}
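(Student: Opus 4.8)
The plan is to recognize Lemma~\ref{lem:Rademacher_complexity_SP} as a direct consequence of the Ledoux--Talagrand contraction principle for Rademacher complexities (Talagrand's lemma): if $\phi$ is an $L$-Lipschitz scalar function, then composing every hypothesis with $\phi$ scales the empirical Rademacher complexity down by at most the factor $L$. Since $\sigma\circ\mathcal{H}$ is obtained from $\mathcal{H}$ by applying the sigmoid coordinatewise, the whole statement reduces to showing that $\sigma$ is $\tfrac14$-Lipschitz and then quoting the contraction inequality with $L=\tfrac14$.

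First I would pin down the Lipschitz constant of $\sigma$. From Eq.\eqref{eq:sigmoid_function}, $\sigma'(t)=\sigma(t)\bigl(1-\sigma(t)\bigr)$; writing $u=\sigma(t)\in(0,1)$ this equals $u(1-u)$, which is strictly positive and attains its maximum $\tfrac14$ at $u=\tfrac12$, i.e.\ at $t=0$. Hence $0<\sigma'(t)\le\tfrac14$ for all $t$, and the mean value theorem gives $|\sigma(a)-\sigma(b)|\le\tfrac14\,|a-b|$. Applying the contraction inequality with $\phi=\sigma$ to the definition of $\hat{\mathcal{R}}_{p_r,n_r}(\sigma\circ\mathcal{H})=\mathbb{E}_\rho\{\sup_{h\in\mathcal{H}}|\tfrac1{n_r}\sum_i\rho_i\sigma(h(\bm{x}_i^r))|\}$ then replaces each $\sigma(h(\bm{x}_i^r))$ inside the supremum by $h(\bm{x}_i^r)$ at the price of the factor $\tfrac14$, which is exactly $\hat{\mathcal{R}}_{p_r,n_r}(\sigma\circ\mathcal{H})\le\tfrac14\,\hat{\mathcal{R}}_{p_r,n_r}(\mathcal{H})$.

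The one place that needs care is reconciling the contraction principle with the absolute-value form of the empirical Rademacher complexities used here, since the textbook versions of Talagrand's lemma are most cleanly stated for $\mathbb{E}_\rho\sup_h \tfrac1{n_r}\sum_i\rho_i h(\bm{x}_i^r)$ without the outer $|\cdot|$. I would handle this either by invoking the variant of the contraction inequality that retains the absolute value, or by running the standard ``peel off one $\rho_i$ at a time'' argument directly: conditioning on the remaining sign variables, the average over $\rho_i\in\{\pm1\}$ yields a supremum over two copies $h,h'$ of $\sigma(h(\bm{x}_i^r))-\sigma(h'(\bm{x}_i^r))$, which the $\tfrac14$-Lipschitz bound converts into $\tfrac14\bigl(h(\bm{x}_i^r)-h'(\bm{x}_i^r)\bigr)$, the outer absolute value being absorbed using the symmetry of the Rademacher vector in distribution together with the symmetry of the partial sum in $h$ and $h'$; iterating over $i=1,\dots,n_r$ then gives the claim. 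Apart from this bookkeeping with the absolute values, the proof is just the elementary calculus of the second paragraph, so I do not expect any further obstacle.
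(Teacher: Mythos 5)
Your proposal is essentially the paper's own proof: the paper disposes of the lemma in one line by noting that $\sigma$ is $\tfrac{1}{4}$-Lipschitz (exactly your derivative computation $\sigma'(t)=\sigma(t)(1-\sigma(t))\leq\tfrac14$) and citing Talagrand's contraction lemma, so your first two paragraphs reproduce the intended argument in slightly more detail. The one point where you go beyond the paper is the absolute-value issue, and you are right to flag it: the textbook contraction lemma is stated for the version of $\hat{\mathcal{R}}$ without the outer $|\cdot|$, whereas the paper's definitions include it, and the paper simply ignores this mismatch. Be aware, however, that your sketched fix is not airtight as stated: the absolute-value form of the contraction principle requires $\sigma(0)=0$ (and in the Ledoux--Talagrand formulation also costs a factor $2$), while here $\sigma(0)=\tfrac12$, so the constant offset cannot simply be ``absorbed by symmetry''---indeed, for the degenerate class $\mathcal{H}=\{h\equiv 0\}$ the displayed inequality with absolute values fails, since the left side is of order $n_r^{-1/2}$ while the right side is $0$. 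A clean repair is either to drop the absolute values in the definitions (matching the cited Lemma~4.2) or to first center $\sigma$ by $\tfrac12$ and track the resulting $O(n_r^{-1/2})$ term; this is a defect of the paper's statement rather than of your overall approach.
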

    \begin{proof}
    	Since the Sigmoid function $\sigma$ is $\frac{1}{4}$-Lipschitz continuous, the inequality can be obtained by applying Talagrand's Lemma (Lemma 4.2 in \cite{foundation_ml}).
    \end{proof}
    
    \begin{theorem}[Rademacher Bound]
    	\label{thm:Rademacher_bound}
    	If a hypothesis space $\mathcal{H}$ is a class of functions $h$ such that $0\leq h(\bm{x})\leq1$, then for $\forall\delta\in(0,1)$ with probability at least $1-\delta$,
    	\begin{equation}
    	\label{eq:Rademacher_bound}
    	\sup_{h\in\mathcal{H}}\left|\mathbb{E}_{\bm{x}\sim p}h(\bm{x})-\frac{1}{n}\sum_{i=1}^nh(\bm{x}_i)\right|\leq 2\hat{\mathcal{R}}_{p,n}(\mathcal{H})+\sqrt{\frac{4}{n}\log\left(\frac{2}{\delta}\right)},
    	\end{equation}
    	where the $\bm{x}_i$'s are independently drawn from a distribution $p$ and $\hat{\mathcal{R}}_{p,n}(\mathcal{H})$ is the empirical Rademacher complexity of the hypothesis space $\mathcal{H}$ defined on these $n$ samples. 
    \end{theorem}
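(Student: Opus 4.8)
The plan is to treat this as the classical two–sided uniform–deviation bound and prove it with the two standard ingredients: McDiarmid's bounded–differences inequality and a symmetrization (``ghost sample'') argument; it is essentially Theorem~3.3 of \cite{foundation_ml}, written for the absolute–value form of the Rademacher complexity used in this paper. Write $S=(\bm{x}_1,\dots,\bm{x}_n)$ with the $\bm{x}_i$ i.i.d.\ from $p$, and set
\[
\Phi(S)=\sup_{h\in\mathcal{H}}\left|\mathbb{E}_{\bm{x}\sim p}h(\bm{x})-\frac{1}{n}\sum_{i=1}^{n}h(\bm{x}_i)\right|.
\]
The first step is to verify that $\Phi$ has bounded differences: if $S$ and $S'$ differ in a single coordinate, then for every $h\in\mathcal{H}$ the term $\frac1n\sum_i h(\bm{x}_i)$ moves by at most $1/n$ (because $0\le h\le 1$), so $|\Phi(S)-\Phi(S')|\le 1/n$. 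McDiarmid's inequality with constants $c_i=1/n$ then gives, with probability at least $1-\delta/2$,
\[
\Phi(S)\le \mathbb{E}_{S}\!\left[\Phi(S)\right]+\sqrt{\tfrac{1}{2n}\log\tfrac{2}{\delta}}.
\]

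The second step is to bound $\mathbb{E}_{S}[\Phi(S)]$ by symmetrization. Introduce an independent ghost sample $S'=(\bm{x}_1',\dots,\bm{x}_n')\sim p^{\otimes n}$, replace $\mathbb{E}_{\bm{x}\sim p}h(\bm{x})$ by $\mathbb{E}_{S'}[\frac1n\sum_i h(\bm{x}_i')]$, move the supremum out of the inner expectation with Jensen's inequality, and then insert i.i.d.\ Rademacher signs $\rho_i$; the last move is valid because, conditionally on $S,S'$, swapping $\bm{x}_i\leftrightarrow\bm{x}_i'$ preserves the joint law and is the same as flipping the sign of the $i$-th summand. Splitting $|\cdot|$ with the triangle inequality and taking the supremum of each half separately yields
\[
\mathbb{E}_{S}\!\left[\Phi(S)\right]\le 2\,\mathbb{E}_{S}\!\left[\hat{\mathcal{R}}_{p,n}(\mathcal{H})\right],
\]
where $\hat{\mathcal{R}}_{p,n}(\mathcal{H})$ is exactly the empirical Rademacher complexity defined in the statement.

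The third step converts the expected Rademacher complexity into the empirical one that appears on the right–hand side of \eqref{eq:Rademacher_bound}. The map $S\mapsto\hat{\mathcal{R}}_{p,n}(\mathcal{H})$ again has bounded differences with constants $1/n$ (since $|h|\le 1$), so a second application of McDiarmid gives $\mathbb{E}_{S}[\hat{\mathcal{R}}_{p,n}(\mathcal{H})]\le \hat{\mathcal{R}}_{p,n}(\mathcal{H})+\sqrt{\tfrac{1}{2n}\log\tfrac{2}{\delta}}$ with probability at least $1-\delta/2$. A union bound over the two failure events, followed by collecting the concentration terms, produces a bound of the form \eqref{eq:Rademacher_bound} (the additive slack being of order $\sqrt{\tfrac1n\log\tfrac{2}{\delta}}$, with the precise constant depending on how the $\delta$-budget is split between the two bounded–differences steps).

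Since this is a textbook–style result, there is no genuine obstacle; the only places needing care are (i) keeping the symmetrization consistent with the \emph{absolute–value} form of $\hat{\mathcal{R}}_{p,n}(\mathcal{H})$ used here, so that the triangle–inequality split produces two copies of the same quantity rather than an asymmetric pair, and (ii) the bookkeeping of constants and of the two $\delta/2$ events so that the final slack term has the stated form.
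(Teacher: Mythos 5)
The paper does not actually prove this theorem: its ``proof'' is a one-line citation to the lecture notes \cite{lafferty2010}, so your write-up supplies an argument where the paper defers to a reference. Your route is the standard one (McDiarmid on the supremum, ghost-sample symmetrization with Rademacher signs, then a second McDiarmid step to pass from the expected to the empirical Rademacher complexity), and structurally it is sound, including your care about the absolute-value form of $\hat{\mathcal{R}}_{p,n}(\mathcal{H})$, for which two-sided symmetrization indeed gives $\mathbb{E}_S[\Phi(S)]\le 2\,\mathbb{E}_S[\hat{\mathcal{R}}_{p,n}(\mathcal{H})]$.

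The one point you wave off as bookkeeping does not actually come out in your favour: with the $\delta/2$--$\delta/2$ split your argument yields a slack of $\sqrt{\tfrac{1}{2n}\log\tfrac{2}{\delta}}+2\sqrt{\tfrac{1}{2n}\log\tfrac{2}{\delta}}=3\sqrt{\tfrac{1}{2n}\log\tfrac{2}{\delta}}=\sqrt{\tfrac{9}{2n}\log\tfrac{2}{\delta}}$, whereas the theorem asserts $\sqrt{\tfrac{4}{n}\log\tfrac{2}{\delta}}=\sqrt{\tfrac{8}{2n}\log\tfrac{2}{\delta}}$, i.e.\ coefficient $2\sqrt{2}\approx 2.83$ versus your $3$ in units of $\sqrt{\log(2/\delta)/(2n)}$. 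Moreover, no re-allocation of the $\delta$-budget between the two McDiarmid events closes this gap as $\delta\to 0$, since the sum of the two slack terms always scales like $3\sqrt{\log(1/\delta)/(2n)}$; applying McDiarmid once to the combined quantity $\Phi(S)-2\hat{\mathcal{R}}_{p,n}(\mathcal{H})$ (bounded differences $3/n$) changes the trade-off but again does not reproduce the stated constant uniformly in $\delta$. So, strictly, what you have proved is the theorem with a slightly larger constant in the deviation term. This is harmless for how the bound is used downstream (Theorem 3 only needs a bound of this order), but if you want the literal inequality \eqref{eq:Rademacher_bound} you must either follow the specific argument in the cited notes or restate the theorem with your constant.
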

    \begin{proof}
    	The proof of Theorem \ref{thm:Rademacher_bound} can be found in \cite{lafferty2010}.
    \end{proof}
    
    Let $BR_f(h)$ be the Bregman divergence between the true density ratio function $r$ in Eq.\eqref{eq:relation_DR_Disc} and a function $h$ in the hypothesis space $\mathcal{H}$. Let $\widehat{BR}_f(h)$ be the empirical approximation of $BR_f(h)$. If $f$ is replaced by the Softplus function $\eta$, then 
    \begin{equation}
    \begin{aligned}
    BR_f(h)&=\mathbb{E}_{\bm{x}\sim p_g(\bm{x})}\left[\sigma(h(\bm{x}))h(\bm{x})-\eta(h(\bm{x})) \right]\\
    &\quad-\mathbb{E}_{\bm{x}\sim p_r(\bm{x})}\left[\sigma(h(\bm{x})) \right],
    \end{aligned}
    \end{equation}
    \begin{equation}
    \begin{aligned}
    \widehat{BR}_f(h)&=\frac{1}{n_g}\sum_{i=1}^{n_g}\left[\sigma(h(\bm{x}^g_i))h(\bm{x}^g_i)-\eta(h(\bm{x}^g_i)) \right]\\
    &\quad-\frac{1}{n_r}\sum_{i=1}^{n_r}\sigma(h(\bm{x}^r_i)).
    \end{aligned}
    \end{equation} 
    Note that the $BR_f(h)$ and $\widehat{BR}_f(h)$ are equivalent to Eq.\eqref{eq:SP_loss} and Eq.\eqref{eq:SP_loss_emp} respectively. For simplicity, we only consider the Softplus loss without a penalty term. \changes{Following the notation in Appendix D of \cite{uehara2016generative},} we define $r_0$ and $r_s$ as
    \begin{equation*}
    r_0=\arg\min_{h\in\mathcal{H}}BR_f(h), \quad
    r_s=\arg\min_{h\in\mathcal{H}}\widehat{BR}_f(h).
    \end{equation*}
    Note that $BR_f(h)$ reaches its minimum if and only if $h=r$ but $r$ may be not in $\mathcal{H}$. If $r\notin\mathcal{H}$, then $BR_f(r_0)-BR_f(r)$ is a positive constant; otherwise $BR_f(r_0)-BR_f(r)=0$. However, in practice, we can only optimize $\widehat{BR}_f(h)$ instead of $BR_f(h)$. Therefore, we are interested in the distance of $r_s$ from $r$ under the Bregman divergence, i.e., $BR_f(r_s)-BR_f(r)$.
    
    Before we introduce our main theorem for the rate of convergence, we need some more notation. Denote by $\mathcal{\bm{A}}$ the parameter space of the density ratio model $\hat{r}(\bm{x}_i^r;\bm{\alpha})$. Note that the hypothesis space $\mathcal{H}$ is determined by the parameter space $\mathcal{\bm{A}}$. Denote $\sigma(h(\bm{x}))h(\bm{x})-\eta(h(\bm{x}))$ by $g(\bm{x};\bm{\alpha})$.
    \begin{theorem}
    	\label{thm:rate_of_convergence}
    	If (i) the $f$ in the Bregman divergence is the Softplus function $\eta$ in \eqref{eq:softplus_function}, (ii) $n_g$ is large enough, (iii), $\mathcal{\bm{A}}$ is compact, (iv) $\forall g(\bm{x};\bm{\alpha})$ is continuous at $\bm{\alpha}$, (v) $\forall g(\bm{x};\bm{\alpha}), \exists $ a function $g^u(\bm{x})$ that does not depend on $\bm{\alpha}$, s.t. $|g(\bm{x};\bm{\alpha})|\leq g^u(\bm{x})$, and (vi) $\mathbbm{E}_{\bm{x}\sim p_g}g^u(\bm{x})<\infty$, then $\forall \delta\in(0,1)$ and $\forall \delta^\prime\in (0,\delta]$ with probability at least $1-\delta$,
    	\begin{equation}
    	\label{eq:rate_of_convergence}
    	\begin{aligned}
    	BR_f(r_s)-BR_f(r)&\leq \frac{1}{n_g}+\hat{\mathcal{R}}_{p_r,n_r}(\mathcal{H})+2\sqrt{\frac{4}{n_r}\log\left(\frac{2}{\delta^\prime}\right)}\\
    	&\quad+BR_f(r_0)-BR_f(r).
    	\end{aligned}
    	\end{equation}
    \end{theorem}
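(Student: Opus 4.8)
The plan is the classical decomposition of the excess Bregman risk into estimation and approximation error,
\[
BR_f(r_s)-BR_f(r)=\bigl(BR_f(r_s)-BR_f(r_0)\bigr)+\bigl(BR_f(r_0)-BR_f(r)\bigr),
\]
where the second summand is exactly the last term on the right-hand side of \eqref{eq:rate_of_convergence} and requires no work. For the estimation error I would insert the empirical risk at both $r_s$ and $r_0$,
\[
BR_f(r_s)-BR_f(r_0)=\bigl(BR_f(r_s)-\widehat{BR}_f(r_s)\bigr)+\bigl(\widehat{BR}_f(r_s)-\widehat{BR}_f(r_0)\bigr)+\bigl(\widehat{BR}_f(r_0)-BR_f(r_0)\bigr),
\]
discard the middle bracket because it is $\le 0$ by definition of $r_s=\arg\min_{h\in\mathcal H}\widehat{BR}_f(h)$, and bound each of the remaining two brackets by $\sup_{h\in\mathcal H}\lvert BR_f(h)-\widehat{BR}_f(h)\rvert$. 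Existence of the two $\arg\min$ (and measurability of $r_s$) follows from assumption (iii) together with continuity of $\bm\alpha\mapsto BR_f(h_{\bm\alpha})$ and $\bm\alpha\mapsto\widehat{BR}_f(h_{\bm\alpha})$, which in turn follows from (iv)--(vi) and dominated convergence; I would dispatch this in one line. This yields $BR_f(r_s)-BR_f(r_0)\le 2\sup_{h\in\mathcal H}\lvert BR_f(h)-\widehat{BR}_f(h)\rvert$.

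Next I would split the uniform deviation along the two expectations defining the Softplus Bregman loss,
\[
BR_f(h)-\widehat{BR}_f(h)=\Bigl(\mathbb{E}_{p_g}g(\bm x;\bm\alpha)-\tfrac{1}{n_g}\textstyle\sum_{i=1}^{n_g}g(\bm x_i^g;\bm\alpha)\Bigr)-\Bigl(\mathbb{E}_{p_r}\sigma(h(\bm x))-\tfrac{1}{n_r}\textstyle\sum_{i=1}^{n_r}\sigma(h(\bm x_i^r))\Bigr),
\]
so $\sup_{h\in\mathcal H}\lvert BR_f(h)-\widehat{BR}_f(h)\rvert$ is at most a $p_g$-term plus a $p_r$-term. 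For the $p_g$-term, assumptions (iii) (compact $\mathcal A$), (iv) (continuity of $g(\bm x;\bm\alpha)$ in $\bm\alpha$), (v) (integrable envelope $g^u$) and (vi) are exactly the hypotheses of a uniform strong law of large numbers, so $\sup_{h\in\mathcal H}\lvert\tfrac{1}{n_g}\sum_i g(\bm x_i^g;\bm\alpha)-\mathbb{E}_{p_g}g(\bm x;\bm\alpha)\rvert\to 0$; under the ``$n_g$ large enough'' reading of assumption (ii) I would bound this quantity by $\tfrac{1}{2n_g}$, so that its contribution after the factor $2$ is at most $\tfrac{1}{n_g}$. For the $p_r$-term, note that $\sigma\circ\mathcal H$ is a class of functions with values in $[0,1]$, so Theorem~\ref{thm:Rademacher_bound} applied with $n=n_r$, $p=p_r$ and confidence $\delta^\prime$ gives, with probability at least $1-\delta^\prime$,
\[
\sup_{h\in\mathcal H}\Bigl\lvert\mathbb{E}_{p_r}\sigma(h(\bm x))-\tfrac{1}{n_r}\textstyle\sum_i\sigma(h(\bm x_i^r))\Bigr\rvert\le 2\hat{\mathcal R}_{p_r,n_r}(\sigma\circ\mathcal H)+\sqrt{\tfrac{4}{n_r}\log(2/\delta^\prime)};
\]
Lemma~\ref{lem:Rademacher_complexity_SP} then replaces $\hat{\mathcal R}_{p_r,n_r}(\sigma\circ\mathcal H)$ by $\tfrac{1}{4}\hat{\mathcal R}_{p_r,n_r}(\mathcal H)$, and the factor $2$ turns this into $\hat{\mathcal R}_{p_r,n_r}(\mathcal H)+2\sqrt{\tfrac{4}{n_r}\log(2/\delta^\prime)}$.

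Adding the $p_g$-term, the $p_r$-term, and the untouched approximation error $BR_f(r_0)-BR_f(r)$ reproduces the right-hand side of \eqref{eq:rate_of_convergence}. For the probability bookkeeping, the only stochastic inequality used is the one from Theorem~\ref{thm:Rademacher_bound}, which holds with probability at least $1-\delta^\prime\ge 1-\delta$ since $\delta^\prime\in(0,\delta]$; the $p_g$-term is deterministic under the ``large $n_g$'' reading, so no further union bound is needed.

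I expect the genuine obstacle to be the $p_g$-term: assumptions (iii)--(vi) deliver only a \emph{qualitative} uniform law of large numbers, so equating that deviation with the explicit $\tfrac{1}{2n_g}$ is legitimate only under the informal reading of assumption (ii). A fully quantitative argument would instead place a Rademacher or covering-number bound on the class $\{\bm x\mapsto g(\bm x;\bm\alpha)\}$, exploiting that $g(t)=\sigma(t)t-\eta(t)$ is Lipschitz on the bounded range of $h$ (as already used in the proof of Theorem~\ref{thm:SP_bound}), producing an $O(n_g^{-1/2})$-type term in place of $\tfrac{1}{n_g}$. The remaining ingredients---the decomposition, the optimality of $r_s$ for $\widehat{BR}_f$, and the appeals to Lemma~\ref{lem:Rademacher_complexity_SP} and Theorem~\ref{thm:Rademacher_bound}---are routine.
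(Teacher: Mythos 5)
Your proposal is correct and follows essentially the same route as the paper: the same decomposition into estimation plus approximation error bounded by $2\sup_{h\in\mathcal H}\lvert BR_f(h)-\widehat{BR}_f(h)\rvert$, the same split of the uniform deviation into a $p_g$-term handled by the uniform law of large numbers under the informal ``$n_g$ large enough'' reading (giving $\tfrac{1}{2n_g}$, hence $\tfrac{1}{n_g}$ after the factor $2$) and a $p_r$-term handled by Theorem~\ref{thm:Rademacher_bound} together with Lemma~\ref{lem:Rademacher_complexity_SP}. The only cosmetic differences are that you make the use of $\widehat{BR}_f$-optimality of $r_s$ explicit and treat the $p_g$-deviation as deterministic, whereas the paper phrases it with two confidence levels $\delta_1,\delta_2$ and takes $\delta=\max\{\delta_1,\delta_2\}$; your closing caveat about the non-quantitative nature of the $1/n_g$ term applies equally to the paper's own argument.
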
    
    
    \begin{proof}
    	\changes{Following Eq.(11) of Appendix D in \cite{uehara2016generative},} we first decompose $BR_f(r_s)-BR_f(r)$ as follows
    	\begin{equation}
    	\label{eq:decompose_genearlization_gap}
    	\begin{aligned}
    	&BR_f(r_s)-BR_f(r) \\
    	= &BR_f(r_s)-\widehat{BR}_f(r_s)+\widehat{BR}_f(r_s)-BR_f(r_0)\\
    	&\quad+BR_f(r_0)-BR_f(r)\\
    	\leq& \left|BR_f(r_s)-\widehat{BR}_f(r_s)\right|+\left|\widehat{BR}_f(r_s)-BR_f(r_0)\right|\\
    	&\quad+BR_f(r_0)-BR_f(r)\\
    	\leq& 2\sup_{h\in\mathcal{H}}|BR_f(h)-\widehat{BR}_f(h)|+BR_f(r_0)-BR_f(r).\\
    	\end{aligned}
    	\end{equation}
    	The second term in Eq.\eqref{eq:decompose_genearlization_gap} is a constant so we just need to bound the first term, and if $f=\eta$, the first term can be further decomposed with an upper bound as follows
    	\begin{equation*}
    	\begin{aligned}
    	&\sup_{h\in\mathcal{H}}\left|BR_f(h)-\widehat{BR}_f(h)\right|\\
    	\leq&\sup_{h\in\mathcal{H}}\left|\mathbb{E}_{\bm{x}\sim p_g(\bm{x})}\left[\sigma(h(\bm{x}))h(\bm{x})-\eta(h(\bm{x})) \right] \vphantom{\sum_{i=1}^{n_g}}  \right. \\
    	&\quad\left.- \frac{1}{n_g}\sum_{i=1}^{n_g}\left[\sigma(h(\bm{x}^g_i))h(\bm{x}^g_i)-\eta(h(\bm{x}^g_i)) \right]\right|\\
    	&\quad+\sup_{h\in\mathcal{H}}\left| \mathbb{E}_{\bm{x}\sim p_r(\bm{x})}\left[\sigma(h(\bm{x})) \right]- \frac{1}{n_r}\sum_{i=1}^{n_r}\sigma(h(\bm{x}^r_i))\right|.
    	\end{aligned}
    	\end{equation*}
    	
    	Since $\mathcal{\bm{A}}$ is compact, $g(\bm{x};\bm{\alpha})$ is continuous at $\bm{\alpha}$, $|g(\bm{x};\bm{\alpha})|\leq g^u(\bm{x})$, and $\mathbbm{E}_{\bm{x}\sim p_g}g^u(\bm{x})<\infty$, based on the uniform law of large numbers \cite{noteULLN, jennrich1969asymptotic}, for $\forall \epsilon>0$,
    	\begin{equation*}
    	\begin{aligned}
    	\lim_{n_g\rightarrow\infty}&P\left\{\sup_{h\in\mathcal{H}}\left|\mathbb{E}_{\bm{x}\sim p_g(\bm{x})}\left[\sigma(h(\bm{x}))h(\bm{x})-\eta(h(\bm{x})) \right] \vphantom{\sum_{i=1}^{n_g}}\right.\right.\\
    	&\quad\left.\left.- \frac{1}{n_g}\sum_{i=1}^{n_g}\left[\sigma(h(\bm{x}^g_i))h(\bm{x}^g_i)-\eta(h(\bm{x}^g_i)) \right]\right|>\epsilon \right\}=0.
    	\end{aligned}
    	\end{equation*}
    	Since $n_g$ is large enough, let $\epsilon={1}/{2n_g}$, $\forall \delta_1\in(0,1)$ with probability at least $1-\delta_1$, whereupon
    	\begin{equation}
    	\label{eq:first_bound}
    	\begin{aligned}
    	\sup_{h\in\mathcal{H}}&\left|\mathbb{E}_{\bm{x}\sim p_g(\bm{x})}\left[\sigma(h(\bm{x}))h(\bm{x})-\eta(h(\bm{x})) \right]\vphantom{\sum_{i=1}^{n_g}}\right. \\
    	&\left.- \frac{1}{n_g}\sum_{i=1}^{n_g}\left[\sigma(h(\bm{x}^g_i))h(\bm{x}^g_i)-\eta(h(\bm{x}^g_i)) \right]\right|\leq \frac{1}{2n_g}.
    	\end{aligned}
    	\end{equation}
    	Moreover, based on Theorem \ref{thm:Rademacher_bound} and Lemma \ref{lem:Rademacher_complexity_SP}, $\forall \delta_2\in(0,1)$ with probability at least $1-\delta_2$,
    	\begin{equation}
    	\label{eq:second_bound}
    	\begin{aligned}
    	&\sup_{h\in\mathcal{H}}\left| \mathbb{E}_{\bm{x}\sim p_r(\bm{x})}\left[\sigma(h(\bm{x})) \right]- \frac{1}{n_r}\sum_{i=1}^{n_r}\sigma(h(\bm{x}^r_i))\right|\\
    	\leq& 2\hat{\mathcal{R}}_{p_r,n_r}(\sigma\circ\mathcal{H})+\sqrt{\frac{4}{n_r}\log\left(\frac{2}{\delta_2} \right)}\\
    	\leq&\frac{1}{2}\hat{\mathcal{R}}_{p_r,n_r}(\mathcal{H})+\sqrt{\frac{4}{n_r}\log\left(\frac{2}{\delta_2} \right)}.
    	\end{aligned}
    	\end{equation}
    	With $\delta=\max\{\delta_1,\delta_2 \}$ and $\delta^\prime=\delta_2$, combining Eq.\eqref{eq:first_bound} and Eq.\eqref{eq:second_bound} leads to the upper bound in Theorem \ref{thm:rate_of_convergence}.
    \end{proof}    
    
    \begin{remark}
    	If $f(t)=0.5(t-1)^2$, then $BR_f(h)$ is the uLSIF loss. In this case, \cite{uehara2016generative} gives an upper bound for $BR_f(r_s)-BR_f(r)$ and at least one term in this upper bound is proportional to a constant $M$ (\cite{uehara2016generative} assumes all elements in $\mathcal{H}$ are bounded by $M$). However, in real practice, $M$ may be quite large so the upper bound provided by \cite{uehara2016generative} may be too loose, which helps explain why the SP loss outperforms the uLSIF loss in our experiments.
    \end{remark}
    
    \begin{remark}
    	The $\hat{\mathcal{R}}_{p_r,n_r}(\mathcal{H})$ term on the right hand side of the inequality \eqref{eq:rate_of_convergence} implies we should not use a density ratio model that is too complex. Therefore, we propose to estimate the density ratio by a simple multilayer perceptron in the feature space learned by a pre-trained deep CNN in Section \ref{sec:DRE-F}.
    \end{remark}
    
    \subsection{Density Ratio Estimation in Feature Space}\label{sec:DRE-F}
    
    In this section, we propose a novel density ratio estimation method called \textit{density ratio estimation in feature space under Softplus loss} (DRE-F-SP). 
    Assume we have $n_r$ real images 
    $\bm{x}_1^r, \bm{x}_2^r, \cdots, \bm{x}_{n_r}^r \sim p_r(\bm{x}),$
    and $n_g$ fake images $\bm{x}_1^g, \bm{x}_2^g, \cdots, \bm{x}_{n_g}^g \sim p_g(\bm{x}).$ The distributions $p_r(\bm{x})$ and $p_g(\bm{x})$ are both unknown. Rather than estimating density ratios in the pixel space \cite{nam2015direct, khan2019deep} (the density ratio model $\hat{r}(\bm{x};\bm{\alpha})$ directly maps an image to its density ratio) or using the property of a well-trained GAN model \cite{azadi2018discriminator, turner2018metropolis}, we model the true density ratio function via a small-scale \textit{multilayer perceptron} (MLP) in a feature space learned by a pre-trained deep CNN. This deep CNN takes an image as input and outputs a class label. The architecture of this CNN is specially designed to let one of its hidden layers output a feature map $\bm{y}$ that has the same dimension as the input $\bm{x}$. In our experiment, we build such a CNN by adding an extra fully connected layer which can output such feature map $\bm{y}$ on top of all convolutional layers of the ResNet-34 \cite{he2016deep}. We train this specially designed ResNet-34 on a set of labelled samples with the cross-entropy loss. Denote the fully connected layer which is used to output the feature map $\bm{y}$ and other layers before it in this pre-trained ResNet-34 as $\phi(\bm{x})$, then $\phi$ defines a mapping of a raw image $\bm{x}$ in the pixel space $\mathcal{X}$ to a high-level feature $\bm{y}$ in the feature space $\mathcal{Y}$, i.e,
    \begin{equation}
    \label{eq:pre_trained_CNN}
    \bm{y}=\phi(\bm{x}).
    \end{equation}
    In the remainder of this paper, we simply call $\phi(\bm{x})$ ResNet-34. We define $q_r$ and $q_g$ as the distribution of real and fake features respectively. Since $\mathcal{X}$ and $\mathcal{Y}$ have the same dimension, the Jacobian matrix ${\partial \bm{y}}/{\partial \bm{x}}$ is a square matrix and the relationship between the distributions of $\bm{x}$ and $\bm{y}$ can be summarized as follows:
    \begin{equation*}
    p_r(\bm{x}) = q_r(\bm{y})\cdot\left\|\frac{\partial \bm{y}}{\partial \bm{x}}\right\|,\quad p_g(\bm{x}) = q_g(\bm{y})\cdot\left\|\frac{\partial \bm{y}}{\partial \bm{x}}\right\|,
    \end{equation*}
    where $\left\|\frac{\partial \bm{y}}{\partial \bm{x}}\right\|$ is the absolute value of the Jacobian determinant \changes{and assumed to be positive}. Then the true density ratio function $r(\bm{x})$ can be equivalently expressed in the features space via
    \begin{equation}
    \label{eq:DR_F}
    \psi(\phi(\bm{x}))=\psi(\bm{y})= \frac{q_r(\bm{y})}{q_g(\bm{y})}=\frac{q_r(\bm{y})\cdot \left\|\frac{\partial \bm{y}}{\partial \bm{x}}\right\|}{q_g(\bm{y})\cdot \left\|\frac{\partial \bm{y}}{\partial \bm{x}}\right\|}=\frac{p_r(\bm{x})}{p_g(\bm{x})}=r(\bm{x}),
    \end{equation}
    where $\psi(\bm{y})$ denotes the true density ratio function in the feature space. Note that the Jacobian determinant is cancelled so we only need to model the density ratio function $\psi(\bm{y})$ in the feature space. We propose to model $\psi(\bm{y})$ by a 5-layer multilayer perceptron $\hat{\psi}(\bm{y};\bm{\beta})$ with a learnable parameter $\bm{\beta}$, and $\hat{\psi}(\bm{y};\bm{\beta})$ is trained by minimizing the following penalized SP loss:
    \begin{equation}
    \label{eq:DRE_F_SP}
    \begin{aligned}
    &\min_{\bm{\beta}} \left\{\widehat{SP}(\bm{\beta}) + \lambda \hat{Q}(\bm{\beta})\right\}\\
    =&\min_{\bm{\beta}}\left\{\frac{1}{n_g}\sum_{i=1}^{n_g}\left[\sigma(\hat{\psi}(\bm{y}_i^g;\bm{\beta}))\hat{\psi}(\bm{y}_i^g;\bm{\beta}) - \eta(\hat{\psi}(\bm{y}_i^g;\bm{\beta})) \right]\right.\\ 
    &-\frac{1}{n_r}\sum_{i=1}^{n_r}\sigma(\hat{\psi}(\bm{y}_i^r;\bm{\beta})) \left. + \lambda \left(\frac{1}{n_g}\sum_{i=1}^{n_g}\hat{\psi}(\bm{y}_i^g;\bm{\beta})-1 \right)^2
    \right\}.
    \end{aligned}
    \end{equation}
    Eq.\eqref{eq:DRE_F_SP} is adapted from Eq.\eqref{eq:SP_loss_emp} by replacing $\hat{r}(\bm{x};\bm{\alpha})$ with $\hat{\psi}(\bm{y};\bm{\beta})$. Then $\hat{\psi}(\bm{y};\bm{\beta})$ can be seen as a density ratio model in the feature space, and  $\hat{\psi}(\phi(\bm{x});\bm{\beta})$ can be seen as a density ratio model in the pixel space. Their workflows are visualized in Fig. \ref{fig:digram_DRE_DR}. We implement DRE-F-SP by Alg. \ref{alg:DRE-F-SP}.
    
    \changes{
    	\begin{remark}
    		\label{remark:CNN_in_feature_space}
    		The density ratio model in the feature space is not necessarily a MLP. It can be another small-scale neural network such as a CNN as long as its complexity is moderate.
    	\end{remark}
    }
    
    \begin{figure*}[h]
    	\centering
    	\includegraphics[width=0.80\textwidth, height=6.2cm]{./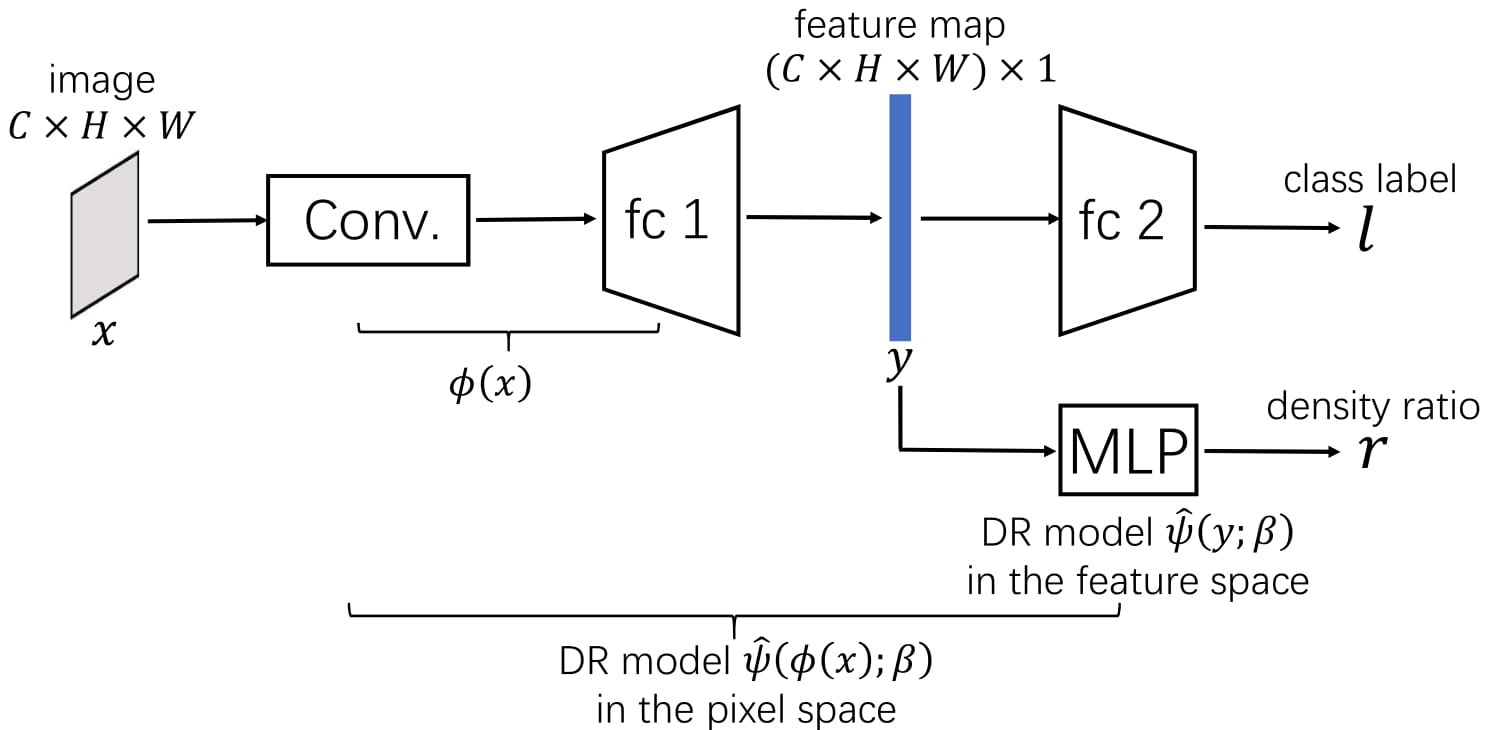}
    	\caption{The workflow of our proposed density ratio model $\hat{\psi}(\bm{y};\beta)$ in the feature space in Eq.\eqref{eq:DRE_F_SP}. The composition of $\hat{\psi}$ and $\phi$ leads to the density ratio model $\hat{\psi}(\phi(\bm{x});\beta)$ in the pixel space. The pre-trained ResNet-34 $\phi(\bm{x})$ \eqref{eq:pre_trained_CNN} takes an image $\bm{x}$ as input and outputs a feature map $\bm{y}$ from the fully-connected layer fc 1. If we flatten the image $\bm{x}$, then $\text{flat}(\bm{x})$ and $\bm{y}$ are two $CHW\times 1$ vectors, where $C$, $H$ and $W$ denote the number of channels, height and width of the image $\bm{x}$. }
    	\label{fig:digram_DRE_DR}
    \end{figure*}

    \begin{algorithm}[h]
    	\footnotesize
    	\SetAlgoLined
    	\KwData{$n_r$ real samples $\{\bm{x}^r_1,\cdots,\bm{x}^r_{n_r}\}$, a generator $G$, a pre-trained CNN $\phi(\bm{x})$ \eqref{eq:pre_trained_CNN}, a untrained MLP $\hat{\psi}(\bm{y};\bm{\beta})$ in \eqref{eq:DRE_F_SP} and a preset hyperparameter $\lambda$}
    	\KwResult{a trained density ratio model $\hat{r}(\bm{x})=\hat{\psi}(\phi(\bm{x});\bm{\beta})=\hat{\psi}(\bm{y};\bm{\beta})$ }
    	Initialize $\bm{\beta}$\;
    	\For{$k=1$ \KwTo $K$}{
    		Sample a mini-batch of $m$ real samples $\bm{x}^r_{(1)},\cdots,\bm{x}^r_{(m)}$ from $\{\bm{x}^r_1,\cdots,\bm{x}^r_{n_r}\}$\;
    		Sample a mini-batch of $m$ fake samples $\bm{x}^g_{(1)},\cdots,\bm{x}^g_{(m)}$ from $G$\;
    		Update $\bm{\beta}$ via the SGD or its variants with the gradient of Eq.\eqref{eq:DRE_F_SP}
    		\begin{equation*}
    		\begin{aligned}
    		&\frac{\partial}{\partial\bm{\beta}} \left\{\frac{1}{m}\sum_{i=1}^{m}\left[\sigma(\hat{\psi}(\phi(\bm{x}_{(i)}^g);\bm{\beta}))\hat{\psi}(\phi(\bm{x}_{(i)}^g);\bm{\beta}) - \eta(\hat{\psi}(\phi(\bm{x}_{(i)}^g);\bm{\beta})) \right] \right.\\ 
    		&\quad\left. -\frac{1}{m}\sum_{i=1}^{m}\sigma(\hat{\psi}(\phi(\bm{x}_{(i)}^r);\bm{\beta})) + \lambda \left(\frac{1}{m}\sum_{i=1}^{m}\hat{\psi}(\phi(\bm{x}_{(i)}^g);\bm{\beta})-1 \right)^2\right\}
    		\end{aligned}
    		\end{equation*} 
    	}
    	\caption{Density Ratio Estimation in the Feature Space Under Penalized SP Loss (DRE-F-SP)}
    	\label{alg:DRE-F-SP}
    \end{algorithm}

    \subsection{Application of DRE-F-SP in Subsampling GANs}\label{sec:DRE-F-SP_Sampling}
    Fig. \ref{fig:digram_DRE_sampler} describes the workflow of a density ratio based subsampling method for GANs. Each density ratio based subsampling method consists of two components: a DRE method and a sampler. DRE methods can be as proposed in \cite{azadi2018discriminator, turner2018metropolis, nam2015direct, khan2019deep, grover2019bias} or our DRE-F-SP. A sampler here is a density ratio based sampling scheme such as the rejection sampling scheme (RS sampler) in DRS, the Metropolis-Hastings algorithm (MH sampler) in MH-GAN and the sampling-importance resampling scheme (SIR sampler) in Section \ref{sec:SIR}. Moreover, a neural network based DRE method can also be decomposed into two components: a density ratio model and a loss function. For example, our DRE-F-SP uses the composition of a pre-trained ResNet-34 and a 5-layer MLP as the density ratio model and trains the density ratio model with the SP loss.
    
    We propose three density ratio based subsampling methods for GANs, which are called DRE-F-SP+RS, DRE-F-SP+MH, and DRE-F-SP+SIR, respectively. These three methods utilize the same DRE method (i.e., DRE-F-SP) but three different samplers (i.e., RS, MH, and SIR). We provide three corresponding algorithms Alg. \ref{alg:DRE-F-SP+RS}--\ref{alg:DRE-F-SP+SIR} to implement them. In some scenarios, the RS sampler and MH sampler suffer from low acceptance rates, and consequently they may take a very long time. The SIR sampler does not suffer from this problem, so it is more efficient than the RS and MH samplers, but the SIR sampler may perform poorly if we subsample from a small pool of fake images.
    
    \begin{figure*}[h]
    	\centering
    	\includegraphics[width=0.90\textwidth]{./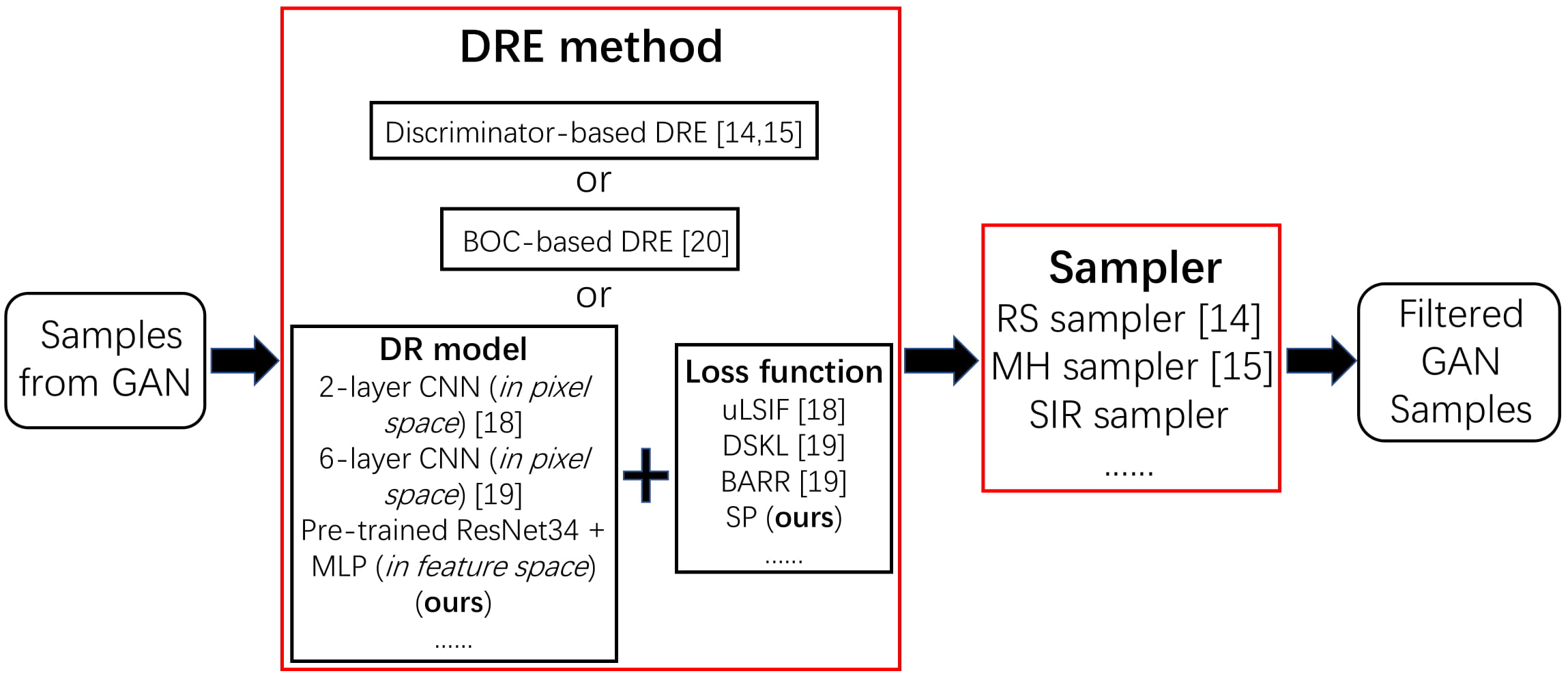}
    	\caption{Workflow of density ratio based subsampling for GANs.}
    	\label{fig:digram_DRE_sampler}
    \end{figure*}
    
    \begin{algorithm}[h]
    	\footnotesize
    	\SetAlgoLined
    	\KwData{a generator $G$, a trained CNN $\phi(\bm{x})$ \eqref{eq:pre_trained_CNN}, a trained MLP $\hat{\psi}(\bm{y};\bm{\beta})$ in \eqref{eq:DRE_F_SP}}
    	\KwResult{$images=\{N\text{ filtered images from }G\}$}
    	Generate $N^\prime$ fake images from $G$\;
    	Estimate the density ratios of these $N^\prime$ fake images by evaluating $\hat{\psi}(\phi(\bm{x});\bm{\beta})$\;
    	$M\leftarrow\max\{N^\prime \text{ estimated density ratios}\}$\;
    	$images\leftarrow \emptyset$\;
    	\While{$|images|<N$}{
    		$\bm{x}\leftarrow \text{get a fake image from }G$\;
    		$ratio\leftarrow \hat{\psi}(\phi(\bm{x});\bm{\beta})$\;
    		$M\leftarrow\text{max}\{M,ratio\}$\;
    		$p\leftarrow ratio/M$ (based on Eq.\eqref{eq:RS_accept_prob})\;
    		$u\leftarrow \text{Uniform}(0,1)$\;
    		\If{$u\leq p$}{
    			$\text{Append}(\bm{x}, images)$\;
    		}
    	}
    	\caption{DRE-F-SP+RS}
    	\label{alg:DRE-F-SP+RS}
    \end{algorithm}
    
    \begin{algorithm}[h]
    	\footnotesize
    	\SetAlgoLined
    	\KwData{a generator $G$, a trained CNN $\phi(\bm{x})$ \eqref{eq:pre_trained_CNN}, a trained MLP $\hat{\psi}(\bm{y};\bm{\beta})$ in \eqref{eq:DRE_F_SP}, real images}
    	\KwResult{$images=\{N\text{ filtered images from }G\}$}
    	$images\leftarrow \emptyset$\;
    	\While{$|images|<N$}{
    		$\bm{x}\leftarrow\text{a real image}$\;
    		\For{$i=1$ \KwTo $K$}{
    			$\bm{x}^\prime\leftarrow \text{get a fake image from }G$\;
    			$u\leftarrow \text{Uniform}(0,1)$\;
    			$p=\min\left(1,\frac{\hat{\psi}(\phi(\bm{x}^\prime);\bm{\beta})}{\hat{\psi}(\phi(\bm{x});\bm{\beta})}\right)$ (based on Eq.\eqref{eq:MH_accept_prob})\;
    			\If{$u\leq p$}{
    				$\bm{x}\leftarrow \bm{x}^\prime$\;
    			}
    		}
    		\If{$\bm{x}$ is not a real image}{
    			$\text{Append}(\bm{x}, images)$\;
    		}
    	}
    	\caption{DRE-F-SP+MH}
    	\label{alg:DRE-F-SP+MH}
    \end{algorithm}
    
    \begin{algorithm}[h]
    	\footnotesize
    	\SetAlgoLined
    	\KwData{a generator $G$, a trained CNN $\phi(\bm{x})$ \eqref{eq:pre_trained_CNN}, a trained MLP $\hat{\psi}(\bm{y};\bm{\beta})$ in \eqref{eq:DRE_F_SP}}
    	\KwResult{$images=\{N\text{ filtered images from }G\}$}
    	Generate a pool of $N_p$ samples $\{\bm{x}^g_1,\cdots,\bm{x}^g_{N_p}\}$ from $G$\;
    	Compute $N_p$ normalized importance weights $\{w_1,\cdots,w_{N_p}\}$ for these fake samples via
    	$$w_i=\frac{\hat{\psi}(\phi(\bm{x}_i^g);\bm{\beta})}{\sum_{i=1}^n\hat{\psi}(\phi(\bm{x}_i^g);\bm{\beta})};$$
    	
    	$images\leftarrow \emptyset$\;
    	\While{$|images|<N$}{
    		Sample an integer $j$ from $\{1,2,\cdots,N_p\}$ where $j$ is drawn with probabability $w_j$\;
    		$\text{Append}(\bm{x}^g_j, images)$\;
    	}
    	\caption{DRE-F-SP+SIR}
    	\label{alg:DRE-F-SP+SIR}
    \end{algorithm}

    \section{Experiment}\label{sec:experiment}
    In this section, our main objective is to justify that DRE-F-SP+RS, DRE-F-SP+MH, and DRE-F-SP+SIR perform better than DRS and MH-GAN for subsampling GANs. Hence, we conduct experiments on a synthetic dataset and a real dataset---CIFAR-10 \cite{krizhevsky2009learning}. We also conduct several ablation studies to empirically demonstrate that the power of three proposed subsampling methods comes from the novel Softplus loss and the scheme of estimating density ratio in the feature space. \changes{Besides the experiments reported here, the results of some extra experiments on the synthetic dataset, CIFAR-10 \cite{krizhevsky2009learning}, MNIST \cite{lecun1998gradient} and CelebA \cite{liu2015faceattributes} are shown in the supplemental material.}

    \subsection{Mixture of 25 2-D Gaussians}\label{sec:exp_simulation}
    We first test the performance of our proposed subsampling methods on synthetic data generated from a mixture of 25 two-dimensional Gaussians (the 25 mixture components have equal weights). This mixture model is used as a toy example in \cite{azadi2018discriminator, turner2018metropolis} and is very popular in the GAN literature.
    
    {\setlength{\parindent}{0cm}\textbf{Experimental setup of the main study:}} The means of these 25 Gaussians are arranged on a 2-D grid $\bm{\mu}\in\{-2,-1,0,1,2\}\times\{-2,-1,0,1,2\}$ and the common covariance matrix is set to $\sigma\bm{I}_{2\times 2}$, where $\sigma=0.05$. From this mixture model, we generate 50,000 training samples, 50,000 validation samples and 10,000 test samples. 
    
    Following \cite{azadi2018discriminator, turner2018metropolis}, we train a GAN model with the standard loss \eqref{eq:standard_gan_loss} on the training set. Both the generator and discriminator in this GAN consist of four fully connected layers with ReLU activation functions, and all hidden layers have size 100. The last layer of the discriminator is a Sigmoid function, and the noise $\bm{z}\in\mathbbm{R}^2$ fed into the generator is drawn from a 2-D Gaussian with mean 0 and standard deviation 1. We deliberately train the generator and discriminator for only 50 epochs to prevent them from reaching optimality, so density ratio estimation in terms of Eq.\eqref{eq:relation_DR_Disc} is not reliable.
    
    When implementing DRS, we follow the setting in \cite{azadi2018discriminator} and set $\gamma$ dynamically for each batch of fake samples drawn from the GAN to the $95$th percentile of $F(\bm{x})$ in Eq.\eqref{eq:DRS_F_hat} for each $\bm{x}$ in this batch. We also keep training the discriminator on the validation set for another 20 epochs to further improve performance of the DRS. When implementing MH-GAN \cite{turner2018metropolis}, we calibrate the trained discriminator on the validation set with logistic regression and set the MCMC iteration $K$ to 100 (more iterations do not show significant improvement).
    
    In our proposed sampling method, at the density ratio estimation stage, we use a 5-layer MLP as the density ratio model $\hat{r}(\bm{x};\bm{\alpha})$ in Eq.\eqref{eq:SP_loss_emp} to directly map a sample to its density ratio without a pre-trained CNN since our synthetic data are not images; its architecture is shown in Supp. \ref{appendix:Sim_nets}. The 5-layer MLP is trained with our proposed penalized SP loss \eqref{eq:penalized_SP}. To select the optimal $\lambda$, we generate a grid of values between 0 and 0.1 and select the one which minimizes the KS test statistic on the validation set (shown in Table \ref{tab:sim_parameter_selection} of the supplemental material). To show the superiority of our proposed SP loss, we also train the 5-layer MLP with the uLSIF \cite{nam2015direct}, DSKL, and BARR \cite{khan2019deep} losses. Following the setting in \cite{khan2019deep}, the $\lambda$ in BARR is set to 10. At the sampling stage, all three samplers---RS, MH, and SIR---are considered. The number of burn-in samples $N^\prime$ for RS in Alg.\ref{alg:DRE-F-SP+RS} is 50,000. The MCMC iterations $K$ for MH in Alg.\ref{alg:DRE-F-SP+MH} is set to 100. The pool size $N_p$ for SIR in Alg.\ref{alg:DRE-F-SP+SIR} is set to 20,000.
    
    We subsample 10,000 fake samples from the trained GAN with each method, and the quality of these fake samples is evaluated. We repeat the whole experiment (i.e., data generation, GAN training, MLP training, subsampling) three times and report in Table \ref{tab:results_simulation_main} the average quality of 10,000 fake samples from each subsampling method over the three repetitions. 
    
    {\setlength{\parindent}{0cm}\textbf{Experimental setup of an ablation study:}} To evaluate the effectiveness of the SP loss, we conduct an ablation study by training the 5-layer MLP in DRE-F-SP with other losses: uLSIF \cite{nam2015direct}, DSKL \cite{khan2019deep}, and BARR \cite{khan2019deep}. We subsample 10,000 fake samples under different losses and three samplers and evaluate the quality of these samples. Similar to the main study, we repeat the whole setting three times and report in Table \ref{tab:results_simulation_different_loss} the average quality of 10,000 fake samples under each loss and each sampler.
    
    {\setlength{\parindent}{0cm}\textbf{Evaluation metrics:}} To measure performance, following \cite{azadi2018discriminator, turner2018metropolis}, we assign each fake sample to its closest mixture component. A fake sample is defined as ``high-quality" if its Euclidean distance to the mean of its mixture component is smaller than $4\sigma=0.2$. Also, we define that a mode (i.e., a mixture component) is recovered if at least one ``high-quality" fake sample is assigned to it. For each sampling method in the main study and the ablation study, we report in Tables \ref{tab:results_simulation_main} and \ref{tab:results_simulation_different_loss} the average percentage of high-quality samples and the average percentage of recovered modes.
    
    {\setlength{\parindent}{0cm}\textbf{Quantitative results:}}
    From Table \ref{tab:results_simulation_main}, we can see that three proposed sampling methods almost perfectly correct the sampling bias of the imperfect generator and significantly outperform DRS and MH-GAN without trading off mode coverage for quality. Table \ref{tab:results_simulation_different_loss} shows that the power of three proposed sampling methods comes from the novel SP loss. 
    
    \begin{table}[htbp]
    	\centering
    	\scriptsize
    	\caption{Average quality of 10,000 fake synthetic samples from different subsampling methods over three repetitions. Higher \% high-quality samples and higher \% recovered modes are better. Each setting is repeated three times, and we report the averaged \% high-quality samples and averaged \% recovered modes. The optimal $\lambda^*$ in each round is shown in Table \ref{tab:sim_parameter_selection}.}
    	\begin{tabular}{cccc}
    		\toprule
    		& No Subsampling   & DRS \cite{azadi2018discriminator}  & MH-GAN \cite{turner2018metropolis} \\
    		\midrule
    		\% High Quality & $69.8\pm 15.4$ & $96.3\pm 1.3$ & $89.7\pm 5.3$ \\
    		\% Rec. Modes & $100.0\pm 0.0$ & $100.0\pm 0.0$ & $100.0\pm 0.0$ \\
    		\bottomrule
    		& DRE-F-SP+RS & DRE-F-SP+MH & DRE-F-SP+SIR \\
    		\midrule
    		\% High Quality     & $\bm{99.1\pm 0.5}$ & $\bm{99.2\pm 0.5}$ & $\bm{99.2\pm 0.4}$ \\
    		\% Rec. Modes  & $100.0\pm 0.0$ & $100.0\pm 0.0$ & $100.0\pm 0.0$ \\
    		\bottomrule
    	\end{tabular}%
    	\label{tab:results_simulation_main}%
    \end{table}%
    
    \begin{table}[htbp]
    	\centering
    	\scriptsize
    	\caption{Ablation study on synthetic data. We train the 5-layer MLP with different loss functions. Each setting is repeated three times, and we report the averaged \% high-quality samples and averaged \% recovered modes.}
    	\begin{tabular}{ccccc}
    		\toprule
    		& \multicolumn{4}{c}{RS} \\
    		\cline{2-5}
    		& uLSIF \cite{nam2015direct} & DSKL \cite{khan2019deep}  & BARR \cite{khan2019deep}  & SP \\
    		\midrule
    		\% High Quality & $89.7\pm 2.4$ & $66.4\pm 7.6$ & $71.9\pm 14.9$ & $\bm{99.1\pm 0.5}$ \\
    		\% Rec. Modes & $100.0\pm 0.0$ & $42.7\pm 5.0$ & $100.0\pm 0.0$ & $100.0\pm 0.0$ \\
    		\midrule
    		& \multicolumn{4}{c}{MH} \\
    		\cline{2-5}
    		& uLSIF \cite{nam2015direct} & DSKL \cite{khan2019deep}  & BARR \cite{khan2019deep}  & SP \\
    		\midrule
    		\% High Quality & $89.6\pm 2.6$ & $66.3\pm 7.3$ & $72.1\pm 15.0$ & $\bm{99.2\pm 0.5}$ \\
    		\% Rec. Modes & $98.7\pm 1.9$ & $38.7\pm 5.0$ & $100.0\pm 0.0$ & $100.0\pm 0.0$ \\
    		\midrule
    		& \multicolumn{4}{c}{SIR} \\
    		\cline{2-5}
    		& uLSIF \cite{nam2015direct} & DSKL \cite{khan2019deep}  & BARR \cite{khan2019deep}  & SP \\
    		\midrule
    		\% High Quality & $89.5\pm 2.3$ & $66.2\pm 7.8$ & $72.1\pm 15.1$ & $\bm{99.2\pm 0.4}$ \\
    		\% Rec. Modes & $98.7\pm 1.9$ & $40.0\pm 5.7$ & $100.0\pm 0.0$ & $100.0\pm 0.0$ \\
    		\bottomrule
    	\end{tabular}%
    	\label{tab:results_simulation_different_loss}%
    \end{table}%
    
    {\setlength{\parindent}{0cm}\textbf{Visual results:}} We visualize the first-round results of the main study in Fig. \ref{fig:sim_visual_results}. In Fig. \ref{fig:sim_visual_gan}, we can see that many samples directly drawn from the generator locate between two neighboring modes. Fig. \ref{fig:sim_visual_DRS} and \ref{fig:sim_visual_MHGAN} show that DRS and MH-GAN can remove some ``bad-quality" points, but many between-modes points still exist. Fig. \ref{fig:sim_visual_SP_RS} to \ref{fig:sim_visual_SP_SIR} show that fake samples from our proposed methods are close to their assigned mixture components where between-modes samples only account for a small portion. 
    
    \begin{figure*}[h]
    	\begin{minipage}[b]{.25\textwidth}
    		\raisebox{+0.5\height}{\subfloat[][Real Samples]{\includegraphics[width=0.9\textwidth, height=4cm]{./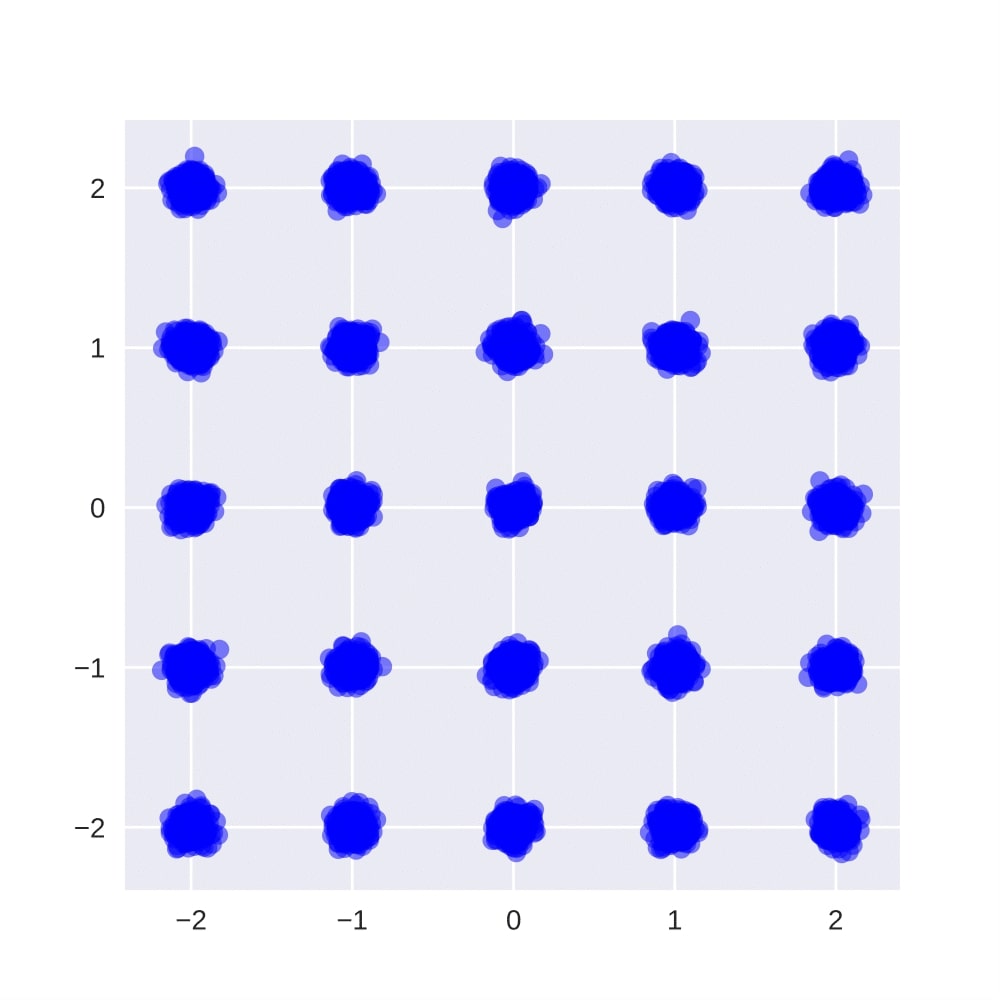}\label{fig:sim_visual_data}}}
    	\end{minipage}
    	\begin{minipage}[b]{.75\linewidth}
    		\subfloat[][No subsampling]{\includegraphics[width=0.3\textwidth, height=4cm]{./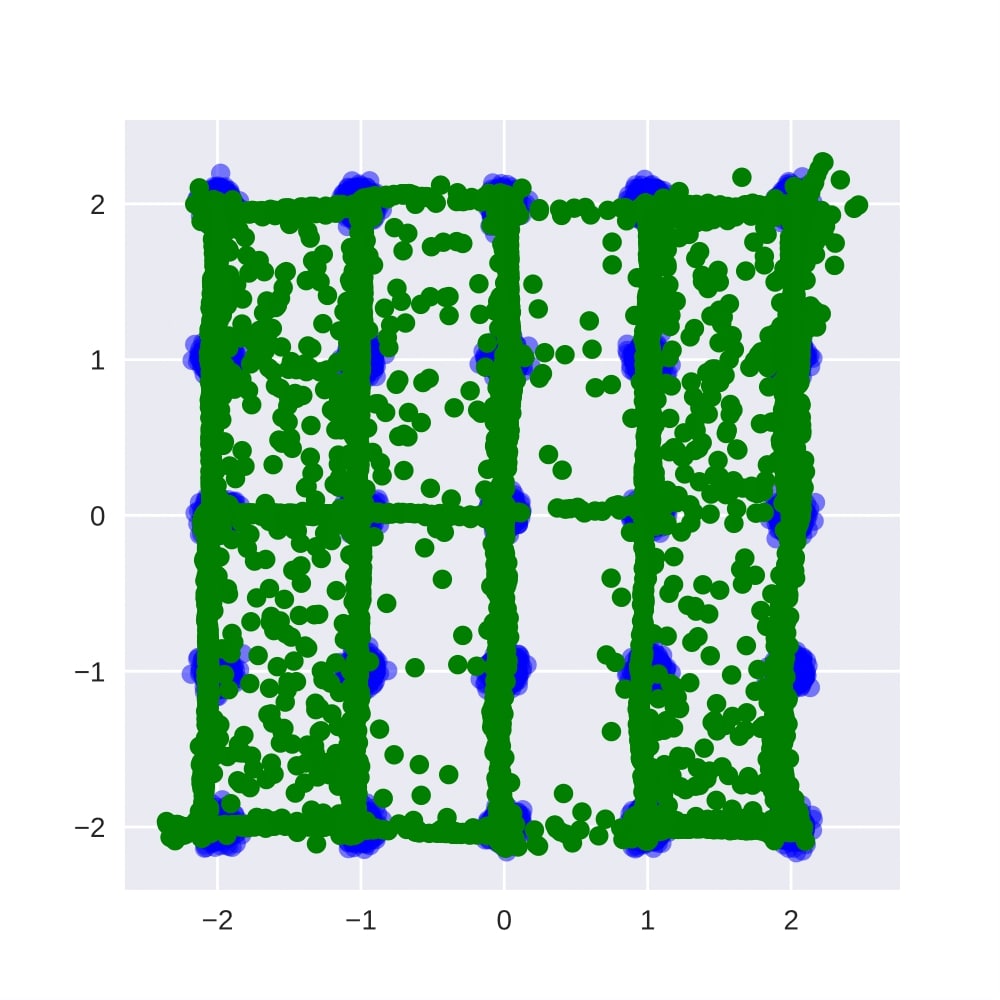}\label{fig:sim_visual_gan}}
    		\subfloat[][DRS \cite{azadi2018discriminator}]{\includegraphics[width=0.3\textwidth, height=4cm]{./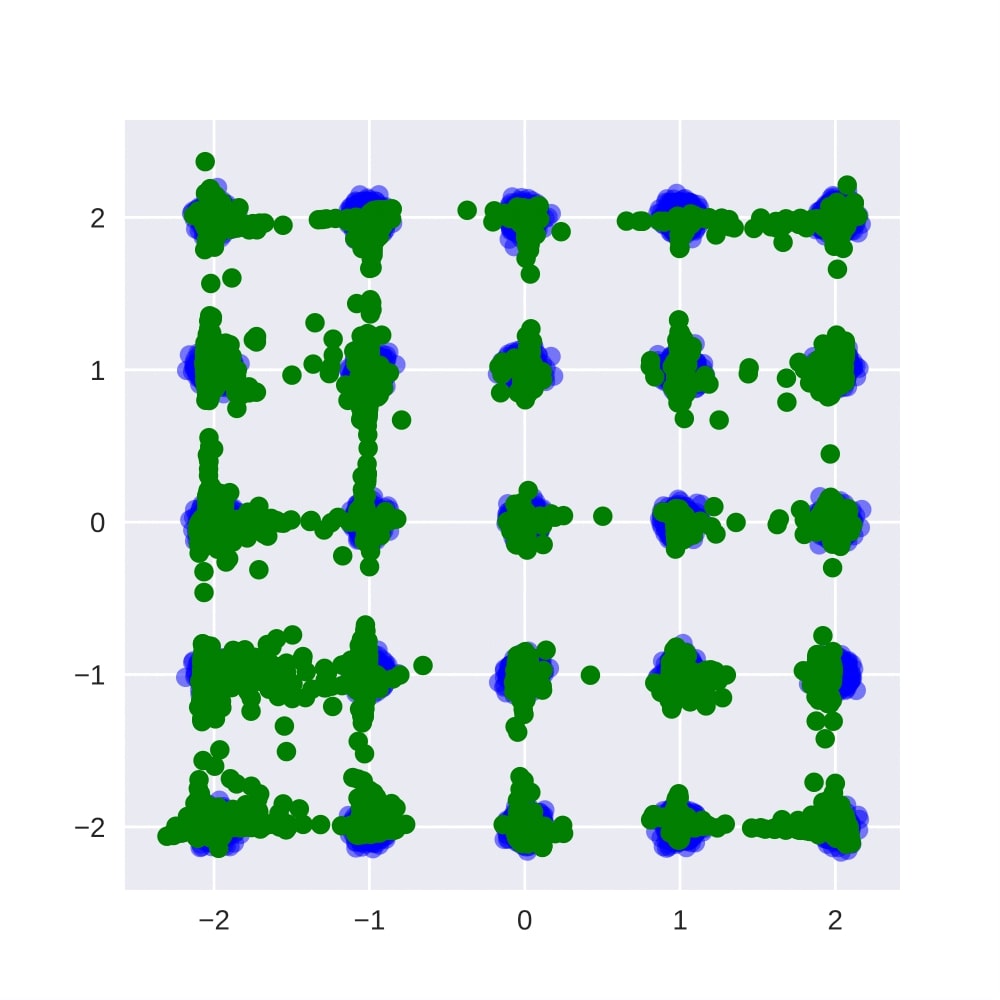}\label{fig:sim_visual_DRS}}
    		\subfloat[][MH-GAN \cite{turner2018metropolis}]{\includegraphics[width=0.3\textwidth, height=4cm]{./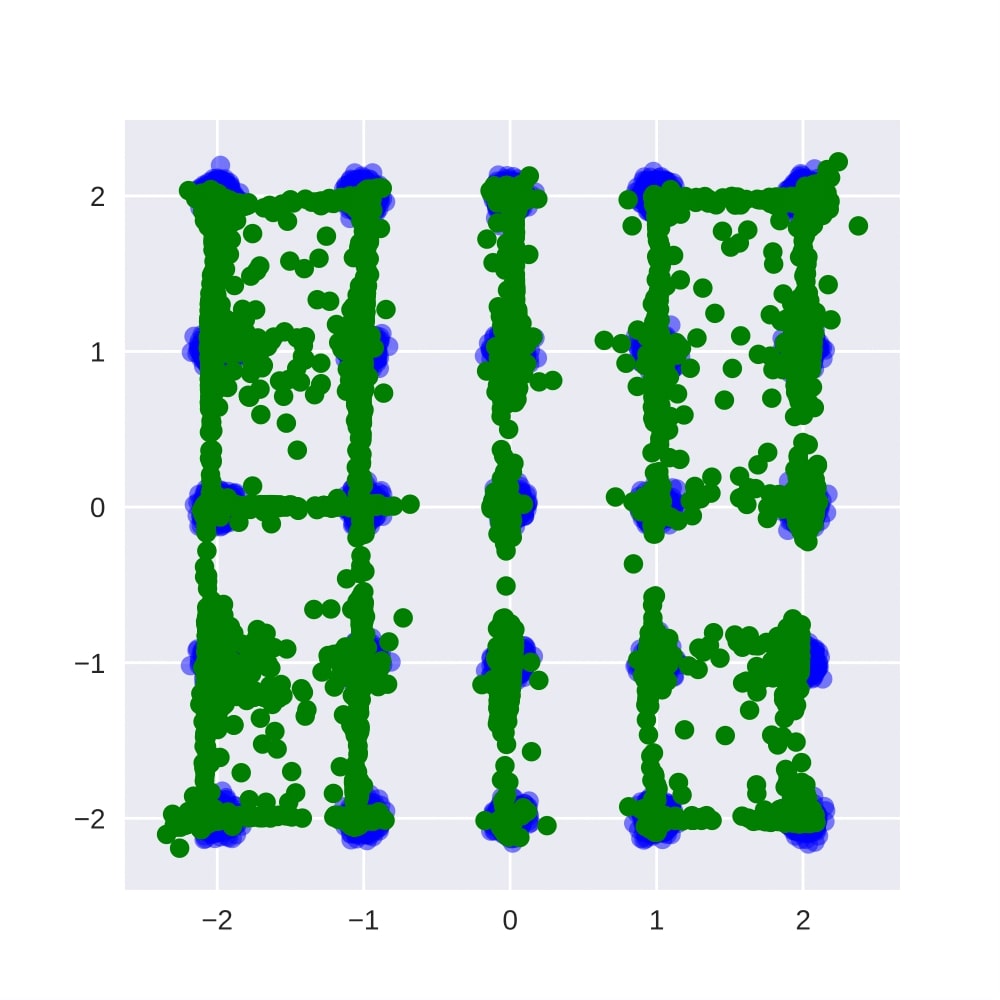}\label{fig:sim_visual_MHGAN}}\\
    		\subfloat[][DRE-F-SP+RS]{\includegraphics[width=0.3\textwidth, height=4cm]{./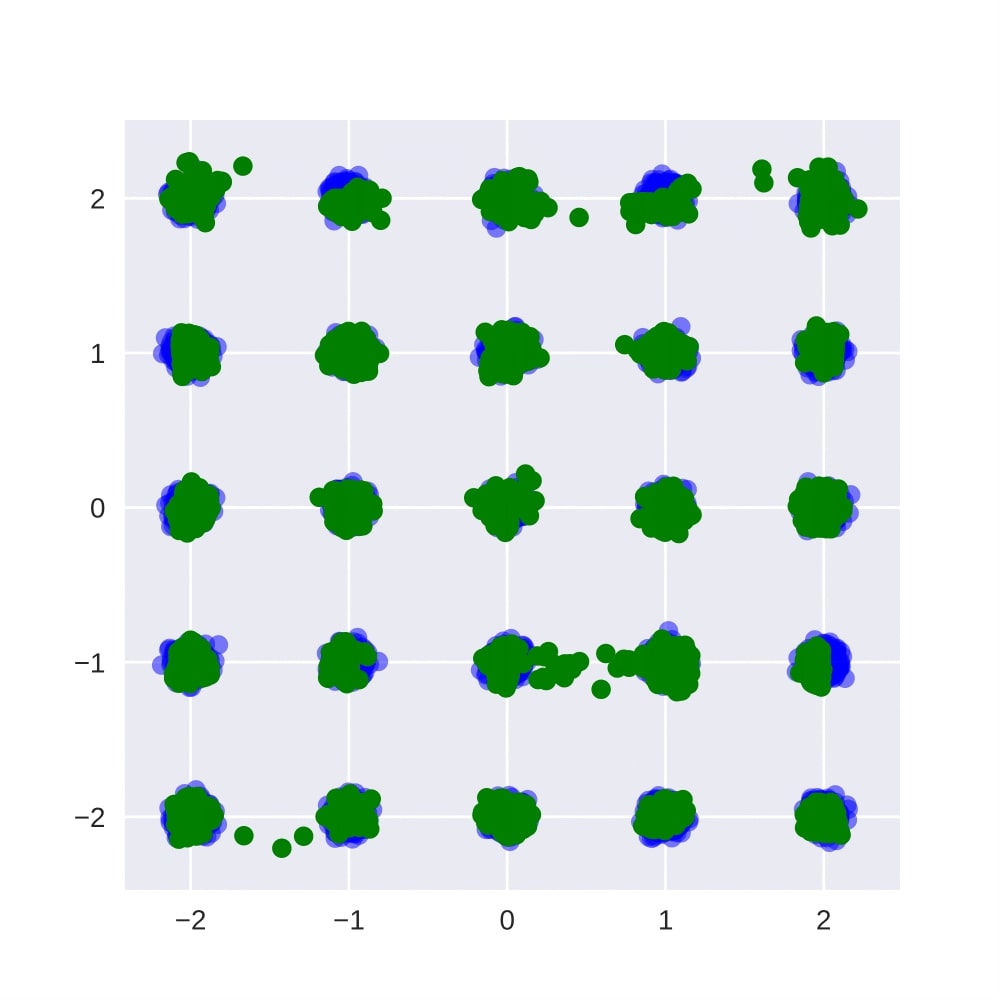}\label{fig:sim_visual_SP_RS}}
    		\subfloat[][DRE-F-SP+MH]{\includegraphics[width=0.3\textwidth, height=4cm]{./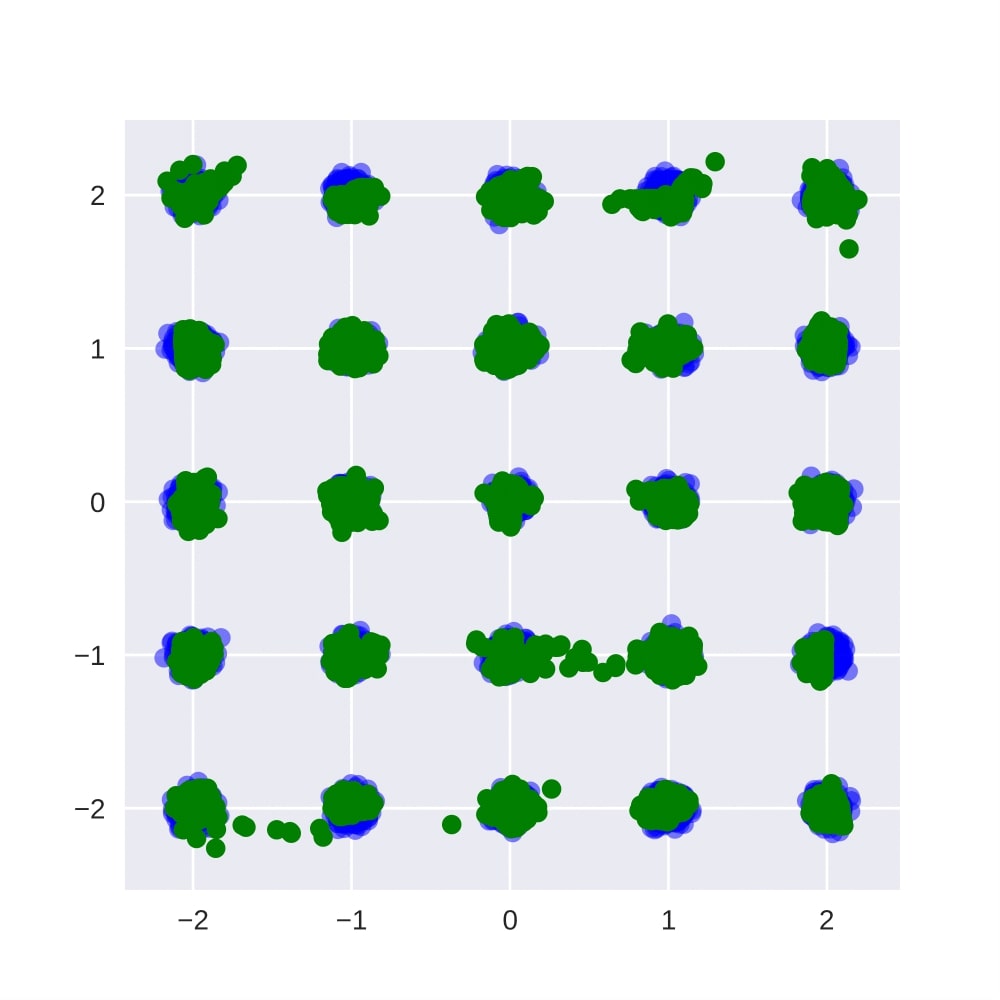}\label{fig:sim_visual_SP_MH}}
    		\subfloat[][DRE-F-SP+SIR]{\includegraphics[width=0.3\textwidth, height=4cm]{./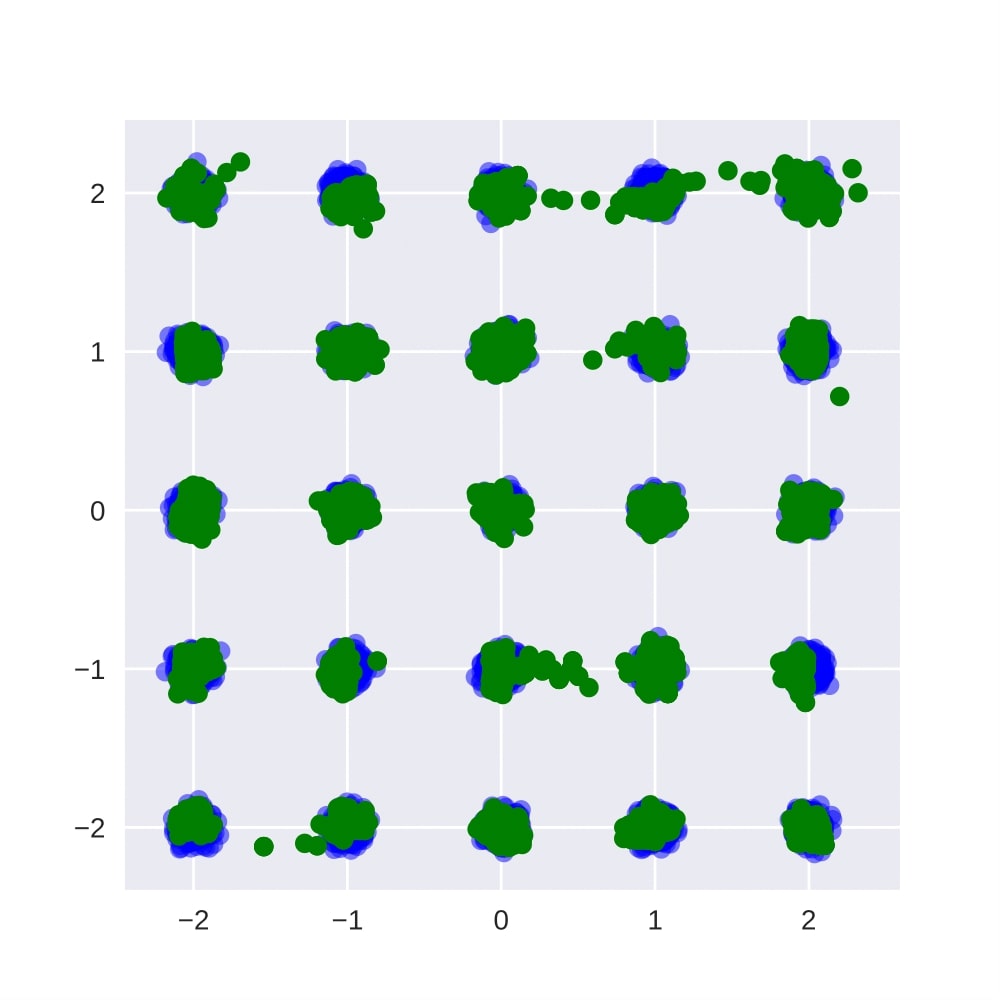}\label{fig:sim_visual_SP_SIR}}
    	\end{minipage}
    	\caption{Visual results of the 25 2-D Gaussians example. Each setting is repeated for three times and we visualize the results of the first round here. In each figure, blue dots denote 10,000 real samples in the test set and green dots denote 10,000 fake samples from each method. The GAN model is trained for only 50 epochs so the discriminator and generator do not reach their optimality. Thus many samples generated by this GAN shown in Fig. \ref{fig:sim_visual_gan} are between-modes. DRS and MH-GAN are effective but we still observe many between-modes samples in Fig. \ref{fig:sim_visual_DRS} and \ref{fig:sim_visual_MHGAN}. On the other hand, our proposed methods can nearly correct the bias in the generator and almost all generated samples in Fig. \ref{fig:sim_visual_SP_RS} to \ref{fig:sim_visual_SP_SIR} are ``high quality".}
    	\label{fig:sim_visual_results}
    \end{figure*}

    {\setlength{\parindent}{0cm}\changes{\textbf{The superiority of the Softplus loss:}}}
    \changes{Besides the main study and the ablation study, we conduct an extra experiment to empirically show why the SP loss performs better than other loss functions such as the uLSIF loss. In this study, the 5-layer MLP as a density ratio model is trained for 5000 epochs with the penalized uLSIF loss \eqref{eq:optim_uLSIF_penlaty} and our penalized SP loss \eqref{eq:penalized_SP} respectively when $\lambda=0.05$. The initial learning rate is $10^{-3}$ and decayed every 1000 epochs with factor $0.1$ (see Supp. \ref{appendix:convergence_uLSIF_SP} for details). From Fig. \ref{fig:convergence_uLSIF} and \ref{fig:convergence_SP}, we can see the SP loss converges after 1000 epochs while the \textit{uLSIF loss does not stop decreasing until the 3000th epoch and then starts fluctuating over a large range}. The uLSIF loss stops decreasing because of a too small learning rate (Fig. \ref{fig:convergence_uLSIF_not_decay_lr} of the supplemental material shows the training curve of the 5-layer MLP under the uLSIF loss with a constant learning rate $10^{-5}$ where we can observe a constantly descending trend).  We draw 10,000 fake samples from the trained GAN and evaluate the 5-layer MLP on the high and low quality fake samples separately. Fig. \ref{fig:dr_HQ_vs_epoch} and \ref{fig:dr_LQ_vs_epoch} show the average density ratios on high/low quality samples and the percentage high quality samples versus epoch. We estimate $p_g$ by a Gaussian mixture model \cite{reynolds2009gaussian} and $p_r$ is known so we can get the true density ratio function which is used to compute the ground truth. From Fig. \ref{fig:dr_HQ_vs_epoch}, when using the SP loss, the average density ratio of high quality samples is slightly above the ground truth and does not decrease over epochs. \textit{This implies the SP loss does not overfit the training data and the penalty term takes effect.} Note that the SP loss may overfit training data if $\lambda=0$; see Supp.\ref{appendix:convergence_uLSIF_SP} for details. In contrast, when using uLSIF loss, the average density ratio of high quality samples decreases after around 900 epochs and is always below the ground truth implying that the \textit{uLSIF loss overfits the training data and the penalty term does not effectively control its unboundedness}. Fig. \ref{fig:dr_LQ_vs_epoch} shows that the uLSIF loss tends to overestimate the density ratios of low quality samples while the SP loss performs optimally. The underestimation and overestimation of the uLSIF loss results in a small difference between high and low quality samples from the density ratio perspective and makes it difficult for the subsequent sampler to distinguish between high and low quality samples. These findings explain why SP loss outperforms uLSIF loss when subsampling.}
    
    \begin{figure*}[ht]
    	\centering
    	\subfloat[][Penalized uLSIF loss \eqref{eq:optim_uLSIF_penlaty}]{
    		\includegraphics[width=0.48\textwidth, height=5cm]{./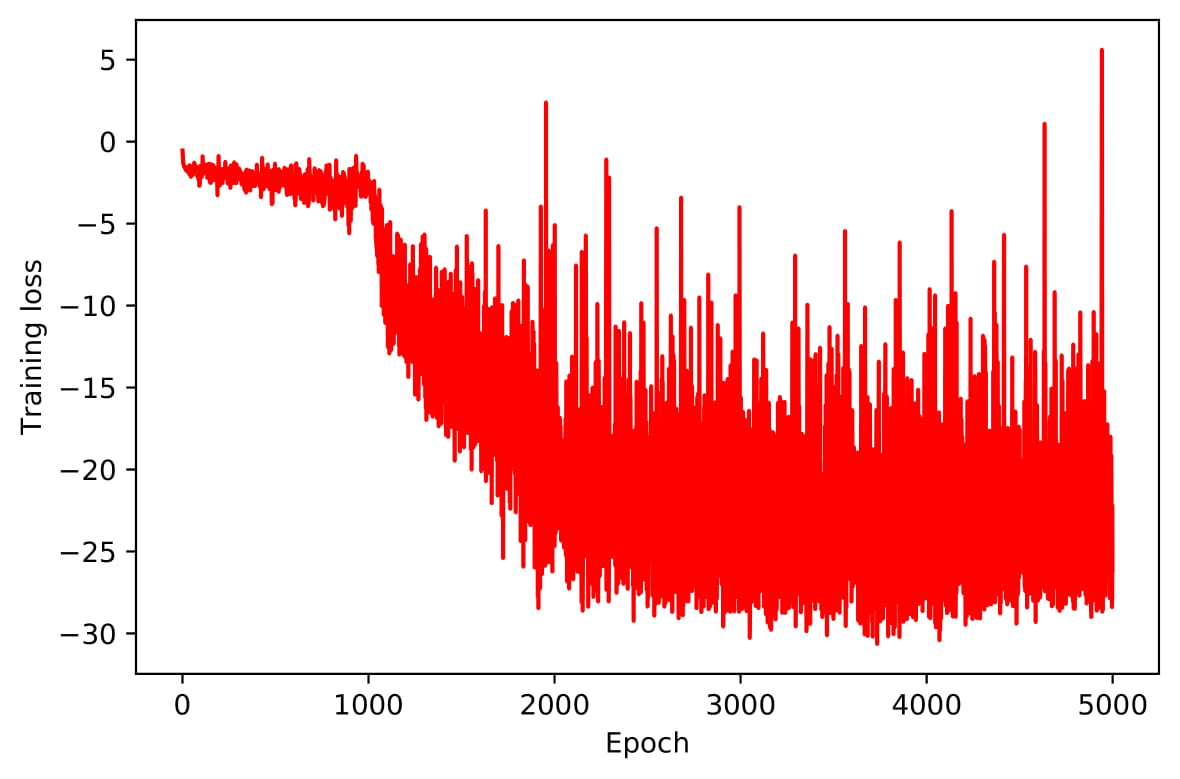}
    		\label{fig:convergence_uLSIF}}
    	\subfloat[][Penalized Softplus loss (ours) \eqref{eq:penalized_SP}]{
    		\includegraphics[width=0.48\textwidth, height=5cm]{./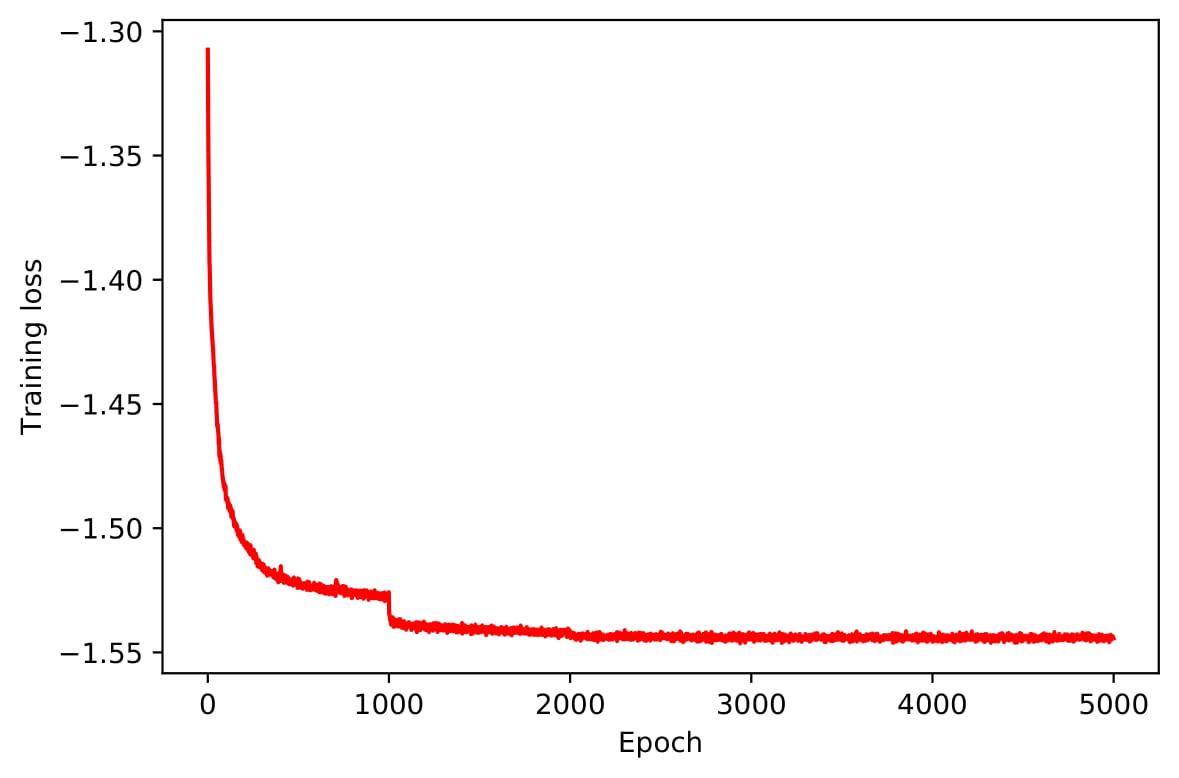}
    		\label{fig:convergence_SP}}
    	\caption{Training curves of a 5-layer MLP under the penalized uLSIF loss \eqref{eq:optim_uLSIF_penlaty} and the penalized SP loss \eqref{eq:penalized_SP} when $\lambda=0.05$ in the 25 2-D Gaussians example.}
    	\label{fig:convergence_uLSIF_SP}
    \end{figure*}
    
    \begin{figure*}[ht]
    	\centering
    	\subfloat[][Average density ratio (solid lines) of \textit{high quality} fake samples and \% high quality samples (dotted lines) versus epoch of DRE training.]{
    		\includegraphics[width=0.48\textwidth, height=5.5cm]{./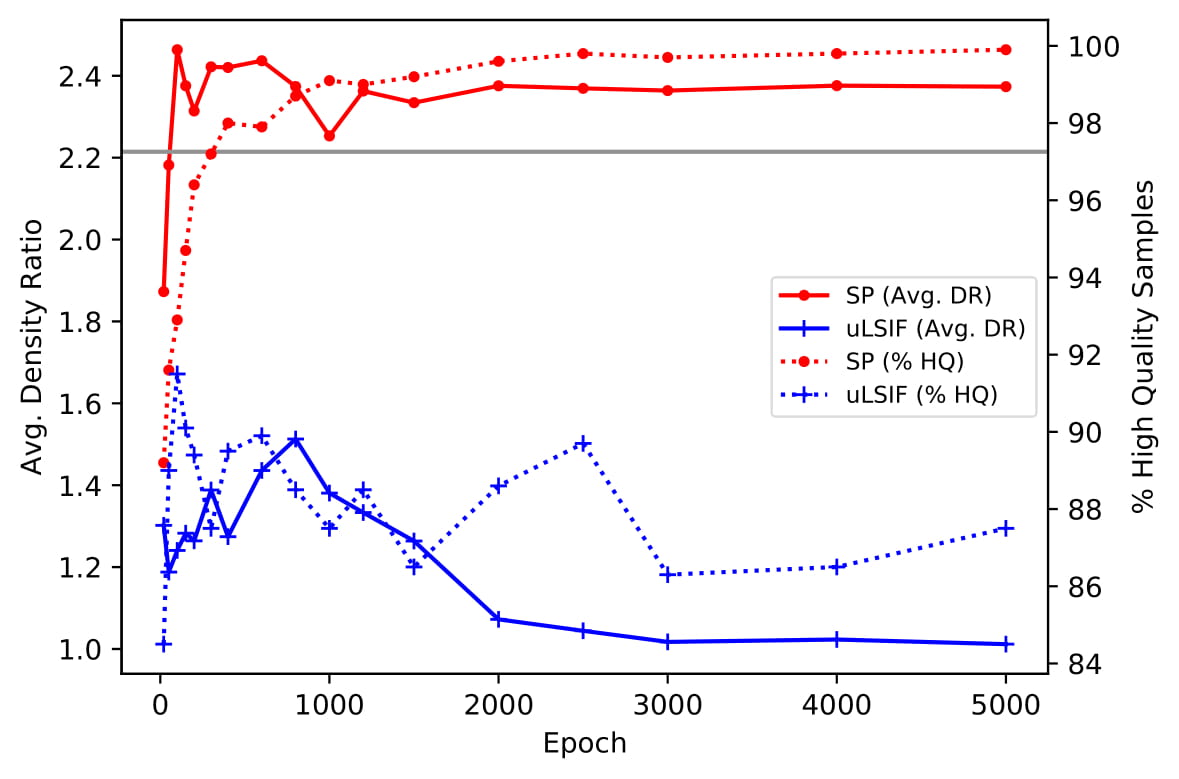}
    		\label{fig:dr_HQ_vs_epoch}}
    	\subfloat[][Average density ratio (solid lines) of \textit{low quality} fake samples and \% high quality samples (dotted lines) versus epoch of DRE training.]{
    		\includegraphics[width=0.48\textwidth, height=5.5cm]{./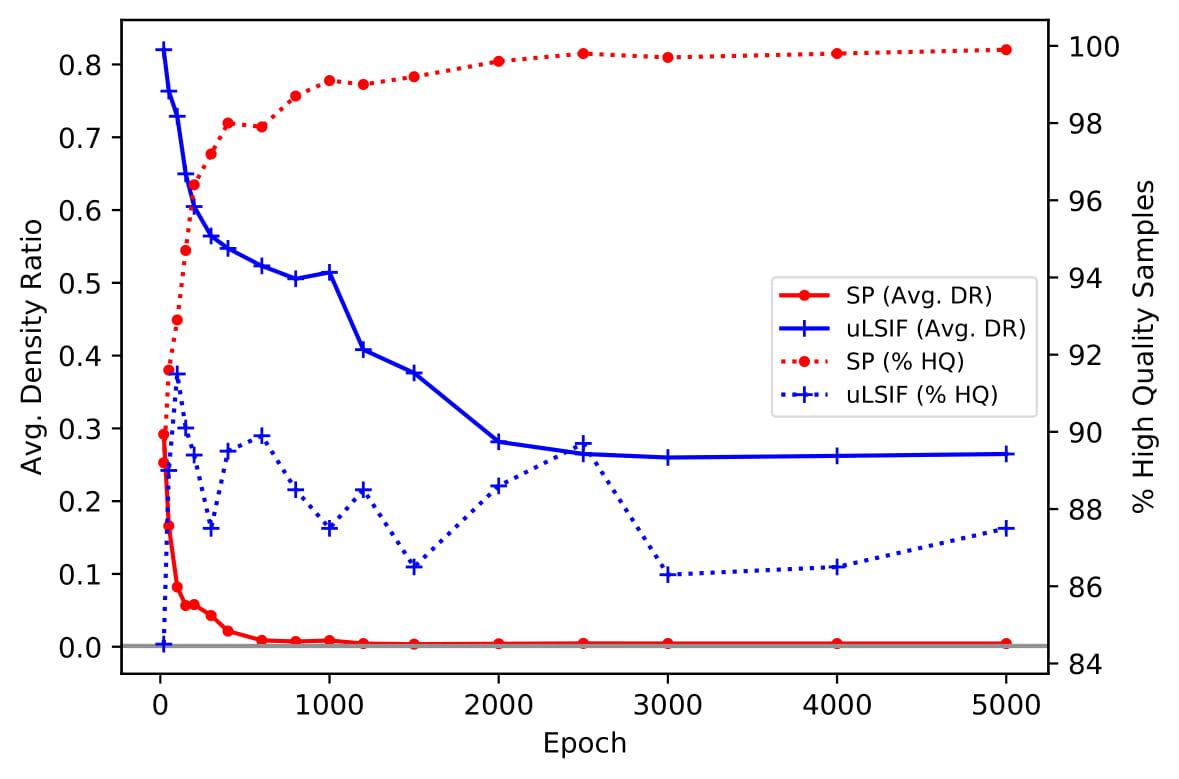}
    		\label{fig:dr_LQ_vs_epoch}}
    	\caption{The density ratio estimation and subsampling performance of a 5-layer MLP trained under the penalized uLSIF loss \eqref{eq:optim_uLSIF_penlaty} and the penalized SP loss \eqref{eq:penalized_SP} when $\lambda=0.05$ in the 25 2-D Gaussians example. The grey lines stand for the ground truth average density ratios.}
    	\label{fig:dre_subsampling_uLSIF_SP}
    \end{figure*}

    \subsection{CIFAR-10 Dataset}\label{sec:exp_cifar10}
    In this section, our main study is to empirically show the superiority of our approach to DRS \cite{azadi2018discriminator} and MH-GAN \cite{turner2018metropolis} in subsampling DCGAN \cite{radford2015unsupervised}, WGAN-GP \cite{gulrajani2017improved}, and MMD-GAN \cite{li2017mmd}  trained on the CIFAR-10 \cite{krizhevsky2009learning} dataset. We also conduct two extra ablation studies to investigate the reason behind the efficacy of our approach. 
    
    {\setlength{\parindent}{0cm}\textbf{Experimental setup of the main study:}} The CIFAR-10 dataset consists of 60,000 $32\times 32$ RGB images which are classified into 10 classes. The dataset is split into a training set of 50,000 images with 5000 per class and a validation set of 10,000 images with 1000 per class. GANs are trained with network architectures and training setups shown in Supp. \ref{appendix:cifar10_nets}. 
    
    We use DRE-F-SP+RS, DRE-F-SP+MH, and DRE-F-SP+SIR to subsample 50,000 fake images from a trained GAN. At density ratio estimation stage, we train a ResNet-34 \cite{he2016deep} on the training set with a modified architecture shown in Supp. \ref{appendix:cifar10_nets} where we incorporate an extra fully connected layer to output a feature map with dimension $(32\times 32\times 3)\times 1=3072\times 1$. A 5-layer MLP is used as the density ratio model $\hat{\psi}(\bm{y};\bm{\beta})$ in Eq.\eqref{eq:DRE_F_SP} to map the extracted features of an image to its density ratio, which is trained with the penalized SP loss \eqref{eq:penalized_SP} on the training set and fake images from the trained GAN. Detailed training setups of the ResNet-34 and the 5-layer MLP are described in Supp. \ref{appendix:cifar10_training_setups}. We conduct hyperparameter selection on a grid of values from 0 to 0.1 on the validation set (shown in Table \ref{tab:cifar10_hyperparameter_selection} of the supplemental material). At the sampling stage, the number of burn-in samples $N^\prime$ for RS in Alg.\ref{alg:DRE-F-SP+RS} is set to 50,000; the MCMC iterations $K$ for MH in Alg.\ref{alg:DRE-F-SP+MH} is set to 640; the pool size $N_p$ for SIR in Alg.\ref{alg:DRE-F-SP+SIR} is set to 100,000.
    
    We consider three competitors: no subsampling, DRS \cite{azadi2018discriminator} and MH-GAN \cite{turner2018metropolis}. We use each subsampling method to draw 50,000 fake images from a trained GAN. No subsampling refers to directly sampling from a generator. When implementing DRS, following the setting of \cite{azadi2018discriminator} on ImageNet dataset, we set $\gamma$ dynamically for each batch of fake samples drawn from the GAN to the $80$th percentile of the $F(\bm{x})$ in Eq.\eqref{eq:DRS_F_hat} for each $\bm{x}$ in this batch. Continuing to train the discriminator on the validation set does not improve the performance of DRS, so we do not conduct ``keep training". Since the discriminator of WGAN-GP outputs a class score instead of a probability, we apply the calibration technique in MH-GAN \cite{turner2018metropolis} to calibrate the trained discriminator on the validation set with logistic regression to let it output class probabilities. When implementing MH-GAN, following \cite{turner2018metropolis}, the MCMC iteration $K$ is set to 640. Note that, as we mentioned in Section \ref{sec:DRS_and_MH-GAN}, DRS and MH-GAN cannot be applied to MMD-GAN.
    
    In the main study, we subsample 50,000 fake images with each subsampling method from each GAN three times. The average quality of 50,000 fake images of each method over three repetitions is reported in Table \ref{tab:results_cifar10_main}. Note that, in real data analysis, we only repeat subsampling three times, but train each GAN and each density ratio model only once. 
    
    {\setlength{\parindent}{0cm}\textbf{Experimental setup of two ablation studies:}} The first ablation study aims at justifying the effectiveness of our proposed DRE-F-SP in subsampling three types of GANs. We consider four other density ratio estimation methods for images in the comparison: DRE-P-uLSIF \cite{nam2015direct}, DRE-P-DSKL \cite{khan2019deep}, DRE-P-BARR \cite{khan2019deep} and BOC \cite{grover2019bias}. The architectures of the 2-layer CNN for DRE-P-uLSIF and the 6-layer CNN for DRE-P-DSKL and DRE-P-BARR are shown in Supp. \ref{appendix:cifar10_nets}. When implementing BOC, we train a CNN as the Bayes optimal classifier with the architecture proposed in \cite{grover2019bias} and shown in Supp. \ref{appendix:cifar10_nets} on 10,000 hold-out validation images and 10,000 fake images. We attach a RS sampler to these DRE methods and conduct the same three repetitions of the main study. We report in Table \ref{tab:results_cifar10_DRE_compare} the average quality of 50,000 fake images for different DRE methods over three repetitions. 
    
    The second ablation study focuses on researching the effect of different loss functions on the final subsampling performance. We replace the SP loss in DRE-F-SP with other loss functions---uLSIF \cite{nam2015direct}, DSKL \cite{khan2019deep} and BARR \cite{khan2019deep}---while using the same RS sampler and the same 5-layer MLP. The average quality of 50,000 fake images for each loss over three repetitions is shown in Table \ref{tab:results_cifar10_loss_compare}.
    
    {\setlength{\parindent}{0cm}\textbf{Evaluation metrics:}} We evaluate the quality of fake images from different subsampling methods by \textit{Inception Score} (IS) \cite{salimans2016improved} and \textit{Fr\'echet Inception Distance} (FID) \cite{heusel2017gans}. They are two popular evaluation metrics for GANs; see Supp. \ref{appendix:IS_FID} for more details. Larger IS and smaller FID are better.
    
    {\setlength{\parindent}{0cm}\textbf{Quantitative results:}} Table \ref{tab:results_cifar10_main} shows the results of the main study and demonstrates our approaches significantly outperform other existing subsampling methods and can also dramatically improve MMD-GAN, where DRS and MH-GAN are not applicable. 
    
    Table \ref{tab:results_cifar10_DRE_compare} shows the results of  ablation study 1. Four existing DRE methods for images are applied in this case, but they are incapable of improving any GAN model, let alone outperforming DRE-F-SP. This ablation study demonstrates that the effectiveness of the three proposed subsampling methods results from our proposed density ratio estimation method---DRE-F-SP. 
    
    Table \ref{tab:results_cifar10_loss_compare} shows the results of the ablation study 2 and demonstrates the novel SP loss plays a crucial role in the success of the density ratio estimation in the feature space.
    
    {\setlength{\parindent}{0cm}\textbf{Visual results:}} We also show in Fig. \ref{fig:cifar_visual_results_dcgan} to \ref{fig:cifar_visual_results_mmdgan} of Supp. \ref{appendix:cifar10_visual_results} some example images from each subsampling method in the main study.

    \begin{table}[htbp]
    	\centering
    	\footnotesize
    	\caption{Average quality of 50,000 fake CIFAR-10 images from different subsampling methods over three repetitions. We draw 50,000 fake images by each method on which we compute the IS and FID. We repeat this sampling three times and report the average IS and FID. Higher IS and lower FID are better. A grid search is conducted for DRE-F-SP to select the hyperparameter, and the results under the optimal $\lambda^*$ are shown in this table. We include the IS and FID of 50,000 training data and 10,000 test data as a reference.}
    	\begin{tabular}{lll}
    		\toprule
    		Method & IS (mean$\pm$std)    & FID (mean$\pm$std) \\
    		\midrule
    		\textbf{- Real Data -} &       &  \\
    		50,000 Training Data & 9.984 & --- \\
    		10,000 Test Data & 9.462 & 0.134 \\
    		\midrule
    		\textbf{- DCGAN -} &       &  \\
    		No Subsampling & $6.261\pm 0.003$ & $3.006\pm 0.011$ \\
    		DRS \cite{azadi2018discriminator}   & $6.385\pm 0.004$ & $2.930\pm 0.008$ \\
    		MH-GAN \cite{turner2018metropolis} & $6.300\pm 0.010$ & $2.982\pm 0.009$ \\
    		DRE-F-SP+RS ($\lambda^*=0$) & $\bm{8.597\pm 0.011}$ & $\bm{1.664\pm 0.007}$ \\
    		DRE-F-SP+MH ($\lambda^*=0$) & $\bm{8.588\pm 0.007}$ & $\bm{1.669\pm 0.004}$ \\
    		DRE-F-SP+SIR ($\lambda^*=0$) & $\bm{8.572\pm 0.021}$ & $\bm{1.685\pm 0.027}$ \\
    		\midrule
    		\textbf{- WGAN-GP -} &       &  \\
    		No Subsampling & $6.445\pm 0.015$ & $2.944\pm 0.004$ \\
    		DRS \cite{azadi2018discriminator}   & $6.427\pm 0.012$ & $2.947\pm 0.013$ \\
    		MH-GAN \cite{turner2018metropolis} & $6.428\pm 0.021$ & $2.948\pm 0.014$ \\
    		DRE-F-SP+RS ($\lambda^*=0.005$) & $\bm{8.625\pm 0.013}$ & $\bm{1.774\pm 0.011}$ \\
    		DRE-F-SP+MH ($\lambda^*=0.005$) & $\bm{8.606\pm 0.013}$ & $\bm{1.796\pm 0.014}$ \\
    		DRE-F-SP+SIR ($\lambda^*=0.005$) & $\bm{8.605\pm 0.043}$ & $\bm{1.826\pm 0.030}$ \\
    		\midrule
    		\textbf{- MMD-GAN -} &       &  \\
    		No Subsampling & $5.508\pm 0.016$ & $3.682\pm 0.007$ \\
    		DRS \cite{azadi2018discriminator}   & ---     & --- \\
    		MH-GAN \cite{turner2018metropolis} & ---     & --- \\
    		DRE-F-SP+RS  ($\lambda^*=0.006$) & $\bm{7.800\pm 0.012}$ & $\bm{2.471\pm 0.017}$ \\
    		DRE-F-SP+MH ($\lambda^*=0.006$) & $\bm{7.782\pm 0.008}$ & $\bm{2.469\pm 0.007}$ \\
    		DRE-F-SP+SIR ($\lambda^*=0.006$) & $\bm{7.740\pm 0.017}$ & $\bm{2.525\pm 0.045}$ \\
    		\bottomrule
    	\end{tabular}%
    	\label{tab:results_cifar10_main}%
    \end{table}%
    
    \begin{table}[htbp]
    	\centering
    	\footnotesize
    	\caption{Ablation study 1 on CIFAR-10. The average quality of 50,000 fake CIFAR-10 images from subsampling methods with different DRE methods but the same RS sampler over three repetitions.}
    	\begin{tabular}{lll}
    		\toprule
    		Method & IS (mean$\pm$std)    & FID (mean$\pm$std) \\
    		\midrule
    		\textbf{- DCGAN -} & & \\
    		DRE-P-uLSIF \cite{nam2015direct} & $6.340\pm 0.005$ & $2.773\pm 0.004$ \\
    		DRE-P-DSKL \cite{khan2019deep} & $5.584\pm 0.015$ & $3.986\pm 0.008$ \\
    		DRE-P-BARR \cite{khan2019deep} & $6.191\pm 0.002$ & $3.156\pm 0.013$ \\
    		BOC \cite{grover2019bias}   & $6.259\pm 0.005$ & $3.003\pm 0.003$ \\
    		DRE-F-SP & $\bm{8.597\pm 0.011}$ & $\bm{1.664\pm 0.007}$ \\
    		\midrule
    		\textbf{- WGAN-GP -} & & \\
    		DRE-P-uLSIF \cite{nam2015direct} & $6.418\pm 0.008$ & $2.897\pm 0.007$ \\
    		DRE-P-DSKL \cite{khan2019deep} & $6.274\pm 0.010$ & $2.998\pm 0.003$ \\
    		DRE-P-BARR \cite{khan2019deep} & $6.427\pm 0.006$ & $2.945\pm 0.005$ \\
    		BOC \cite{grover2019bias}   & $6.431\pm 0.022$ & $2.953\pm 0.007$ \\
    		DRE-F-SP & $\bm{8.625\pm 0.013}$ & $\bm{1.774\pm 0.011}$ \\
    		\midrule
    		\textbf{- MMD-GAN -} & & \\
    		DRE-P-uLSIF \cite{nam2015direct} & $5.427\pm 0.007$ & $3.776\pm 0.004$ \\
    		DRE-P-DSKL \cite{khan2019deep} & $5.473\pm 0.002$ & $3.668\pm 0.008$ \\
    		DRE-P-BARR \cite{khan2019deep} & $5.465\pm 0.008$ & $3.733\pm 0.000$ \\
    		BOC \cite{grover2019bias}   & $5.384\pm 0.013$ & $3.884\pm 0.006$ \\
    		DRE-F-SP & $\bm{7.800\pm 0.012}$ & $\bm{2.471\pm 0.017}$ \\
    		\bottomrule
    	\end{tabular}%
    	\label{tab:results_cifar10_DRE_compare}%
    \end{table}%
    
    \begin{table}[htbp]
    	\centering
    	\footnotesize
    	\caption{Ablation study 2 on CIFAR-10. The average quality of 50,000 fake CIFAR-10 images from subsampling methods with different loss functions but the same DR model and RS sampler over three repetitions.}
    	\begin{tabular}{lll}
    		\toprule
    		Method & IS (mean$\pm$std)    & FID (mean$\pm$std) \\
    		\midrule
    		\textbf{- DCGAN -} &       &  \\
    		uLSIF \cite{nam2015direct} & $8.036\pm 0.007$ & $2.194\pm 0.018$ \\
    		DSKL \cite{khan2019deep}  & $6.619\pm 0.010$ & $2.736\pm 0.004$ \\
    		BARR \cite{khan2019deep}  & $6.910\pm 0.005$ & $2.582\pm 0.006$ \\
    		SP    & $\bm{8.597\pm 0.011}$ & $\bm{1.664\pm 0.007}$ \\
    		\midrule
    		\textbf{- WGAN-GP -} &       &  \\
    		uLSIF \cite{nam2015direct} & $8.435\pm 0.006$ & $1.943\pm 0.020$ \\
    		DSKL \cite{khan2019deep}  & $5.941\pm 0.003$ & $3.623\pm 0.005$ \\
    		BARR \cite{khan2019deep}  & $6.966\pm 0.009$ & $2.528\pm 0.010$ \\
    		SP    & $\bm{8.625\pm 0.013}$ & $\bm{1.774\pm 0.011}$ \\
    		\midrule
    		\textbf{- MMD-GAN -} &       &  \\
    		uLSIF \cite{nam2015direct} & $7.760\pm 0.017$ & $2.503\pm 0.017$ \\
    		DSKL \cite{khan2019deep}  & $5.590\pm 0.005$ & $3.765\pm 0.003$ \\
    		BARR \cite{khan2019deep}  & $5.763\pm 0.008$ & $3.488\pm 0.005$ \\
    		SP    & $\bm{7.800\pm 0.012}$ & $\bm{2.471\pm 0.017}$ \\
    		\bottomrule
    	\end{tabular}%
    	\label{tab:results_cifar10_loss_compare}%
    \end{table}%
    
    \section{Conclusion}\label{sec:discussion}
    We propose a novel subsampling framework (including DRE-F-SP+RS, DRE-F-SP+MH, and DRE-F-SP+SIR) for GANs to replace DRS \cite{azadi2018discriminator} and MH-GAN \cite{turner2018metropolis}. In this framework, a novel SP loss function is proposed for density ratio estimation, and its rate of convergence is determined theoretically with respect to training size. Based on the SP loss, we further propose to do density ratio estimation in the feature space learned by a specially designed ResNet-34. We demonstrate the efficiency of the overall framework on a 25 2-D Gaussians example and the CIFAR-10 dataset. Experimental results show that our proposed framework can dramatically improve different types of GANs and substantially outperform DRS and MH-GAN. Our approach can also improve GANs (e.g., MMD-GAN), where DRS and MH-GAN are not applicable.


    \bibliographystyle{IEEEtran}
    \bibliography{BIB_Importance_Reampling_GANs}
    
    \newpage

    \begin{appendices}
    	\renewcommand{\thesection}{S.\Roman{section}} 
    	\renewcommand{\thesubsection}{\thesection.\alph{subsection}}
    	\renewcommand\thefigure{\thesection.\arabic{figure}}
    	\renewcommand\thetable{\thesection.\arabic{table}}
    	\renewcommand{\theequation}{S.\arabic{equation}}
    	
    	\section{More Details of Simulation in Section \ref{sec:exp_simulation}}\label{appendix:Sim}
    	\setcounter{figure}{0}
    	\setcounter{table}{0}
    	\subsection{Network Architectures}\label{appendix:Sim_nets}
    	\begin{table}[h]%
    		\centering
    		\caption{Network architectures for the generator and discriminator in the simulation. ``fc" denotes a fully-connected layer.}%
    		\subfloat[][Generator]{\begin{tabular}{c}
    				\toprule
    				$z\in \mathbbm{R}^{128}\sim N(0,I)$ \\ \hline
    				fc$\rightarrow 100$; ReLU \\ \hline
    				fc$\rightarrow 100$; ReLU \\ \hline
    				fc$\rightarrow 100$; ReLU \\ \hline
    				fc$\rightarrow 2$ \\ 
    				\bottomrule
    		\end{tabular}}\\
    		\subfloat[][Discriminator]{\begin{tabular}{c}
    				\toprule
    				A sample $x\in\mathbbm{R}^{2}$ \\ \hline
    				fc$\rightarrow 100$; ReLU \\ \hline
    				fc$\rightarrow 100$; ReLU \\ \hline
    				fc$\rightarrow 100$; ReLU \\ \hline
    				fc$\rightarrow 1$; Sigmoid \\ 
    				\bottomrule
    		\end{tabular}}
    		\label{tab:sim_gan_architecture}%
    	\end{table}
    	
    	\begin{table}[h]
    		\centering
    		\caption{Architecture of the 5-layer MLP for DRE in the simulation. We use group normalization (GN) \cite{wu2018group} in each hidden layer instead of batch normalization \cite{ioffe2015batch} because we find batch normalization performs quite differently in the training stage and evaluation stage.}
    		\begin{tabular}{c}
    			\toprule
    			A sample $x\in\mathbbm{R}^2$\\ \hline
    			fc$\rightarrow 2048$; GN (4 groups); ReLU; dropout ($p=0.2$)\\ \hline
    			fc$\rightarrow 1024$; GN (4 groups); ReLU; dropout($p=0.2$)\\ \hline
    			fc$\rightarrow 512$; GN (4 groups); ReLU; dropout($p=0.2$)\\ \hline
    			fc$\rightarrow 256$; GN (4 groups); ReLU; dropout($p=0.2$)\\ \hline
    			fc$\rightarrow 128$; GN (4 groups); ReLU; dropout($p=0.2$)\\ \hline
    			fc$\rightarrow 1$; ReLU\\ 
    			\bottomrule
    		\end{tabular}
    		\label{tab:sim_mlp_architecture}
    	\end{table}

    	\subsection{Training Setups}\label{appendix:Sim_training_setups}
    	The GAN model is trained for 50 epochs with the Adam \cite{pmlr-v15-coates11a} optimizer, a constant learning rate $10^{-3}$ and batch size 512. DR models are trained with the setups in Table \ref{tab:sim_setups}, and the hyperparameter selections are in Table \ref{tab:sim_parameter_selection}.
    	
    	\begin{table}[h]
    		\caption{Setups for training the 5-layer MLP under different loss functions in the simulation.}
    		\begin{center}
    			\begin{tabular}{ccccc}
    				\toprule
    				Loss & uLISF & DSKL & BARR & SP \\ 
    				\midrule
    				Optimizer & Adam \cite{kingma2014adam} & Adam \cite{kingma2014adam} & Adam \cite{kingma2014adam} & Adam \cite{kingma2014adam} \\ \hline
    				Constant LR & 1e-5 & 1e-5 & 1e-5 & 1e-3 \\ \hline
    				Epochs & 400 & 400 & 400 & 400 \\ \hline
    				Batch Size & 512 & 512 & 512 & 512 \\ 
    				\bottomrule
    			\end{tabular}
    		\end{center}
    		\label{tab:sim_setups}
    	\end{table}
    	
    	\begin{table}[h]
    		\centering
    		\caption{Hyperparameter selection in Simulation. Two-Sample Kolmogorov-Smirnov test statistic is shown for each $\lambda$ at each round.}
    		\begin{tabular}{cccccc}
    			\toprule
    			& \multicolumn{5}{c}{Round 1} \\
    			\cline{2-6}
    			$\lambda$ & 0     & 0.005 & 0.01  & 0.05  & 0.1 \\
    			\midrule
    			KS Stat. & 0.00464 & 0.00388 & \textbf{0.00386} & 0.00496 & 0.0046 \\
    			\bottomrule
    			& \multicolumn{5}{c}{Round 2} \\
    			\cline{2-6}
    			$\lambda$ & 0     & 0.005 & 0.01  & 0.05  & 0.1 \\
    			\midrule
    			KS Stat. & 0.00896 & \textbf{0.00838} & 0.01044 & 0.0096 & 0.01068 \\
    			\bottomrule
    			& \multicolumn{5}{c}{Round 3} \\
    			\cline{2-6}
    			$\lambda$ & 0     & 0.005 & 0.01  & 0.05  & 0.1 \\
    			\midrule
    			KS Stat. & 0.00494 & 0.00508 & \textbf{0.00398} & 0.008 & 0.00588 \\
    			\bottomrule
    		\end{tabular}%
    		\label{tab:sim_parameter_selection}%
    	\end{table}%
    	
    	\section{How does the loss function affect density ratio estimation and the subsequent subsampling?}\label{appendix:convergence_uLSIF_SP}
    	\setcounter{figure}{0}
    	\setcounter{table}{0}
    	
    	\subsection{Experimental Setups}\label{appendix:convergence_uLSIF_SP_setups}
    	
    	We repeat the following steps 5 times and report the average result. For each one, we do the following:
    	
    	\begin{enumerate}
    		\item Train the GAN for 50 epochs. Train the density ratio model (MLP-5) with the SP loss and uLSIF loss respectively for 5000 epochs. We use the Adam \cite{pmlr-v15-coates11a} optimizer. The initial learning rate for both loss functions is set to $10^{-3}$ and decayed every 1000 epochs. Set 17 checkpoints during training from the 20th epoch to the 5000th epoch. The hyperparameter $\lambda$ is set to 0 or 0.05. 
    		\item Use GMM to model $p_g$. The optimal number of components is selected by minimizing BIC.  Since we can draw a large number of fake samples from $p_g$, the estimation of $p_g$ by GMM should be very accurate. In this experiment, we use 100,000 fake samples to fit the GMM. The true distribution $p_r$ is a mixture of 25 Gaussian with parameters known to us. Therefore, we can know the ground truth density ratio function $r=p_r/p_g$. 
    		\item At each checkpoint, evaluate the ground truth density ratio function $r$ and two estimated density ratio functions $\hat{r}$ (trained with SP and uLSIF respectively) on all 10000 fake samples, only high-quality fake samples, and only low-quality fake samples. Report the average density ratio of these fake samples at each checkpoint.
    		\item At each checkpoint, we use DRE-SP+SIR to draw 10,000 fake samples and report the percentage of high quality samples.
    	\end{enumerate}

    	\subsection{Results for $\lambda=0.05$}\label{appendix:convergence_uLSIF_SP_lambda0.05}
    	In Fig. \ref{fig:convergence_uLSIF_SP} of the main article, we show the results when $\lambda=0.05$. Note that, in Figure \ref{fig:convergence_uLSIF} and \ref{fig:convergence_SP} we only report the training loss of the first round of 5 repetitions; however, in Figure \ref{fig:dr_HQ_vs_epoch} and \ref{fig:dr_LQ_vs_epoch}, the average density ratios of high/low quality samples and the percentage of high quality samples are averaged over 5 repetitions.
    	
    	In Fig. \ref{fig:dr_all_versus_epochDRE_lambda0.05}, we also show the average density ratio over all 10,000 fake samples and the percentage of high quality samples (averaged over 5 repetitions). We can see, when using the SP loss, the average density ratio over all 10,000 fake samples is close to the ground truth. However, when using the uLSIF loss, the average density ratio keeps decreasing (a sign of overfitting).
    	
    	In Fig. \ref{fig:convergence_uLSIF}, the training loss of uLSIF stops decreasing after 3000 epochs because of a too small learning rate which seems to conflict with our argument in Section \ref{sec:DRE_images}. In Fig. \ref{fig:convergence_uLSIF_not_decay_lr}, however, we show the training curve for a constant learning rate $10^{-5}$ and we can see a constantly decreasing trend.
    	
    	\begin{figure}[h]
    		\centering
    		\includegraphics[width=0.48\textwidth, height=5.5cm]{./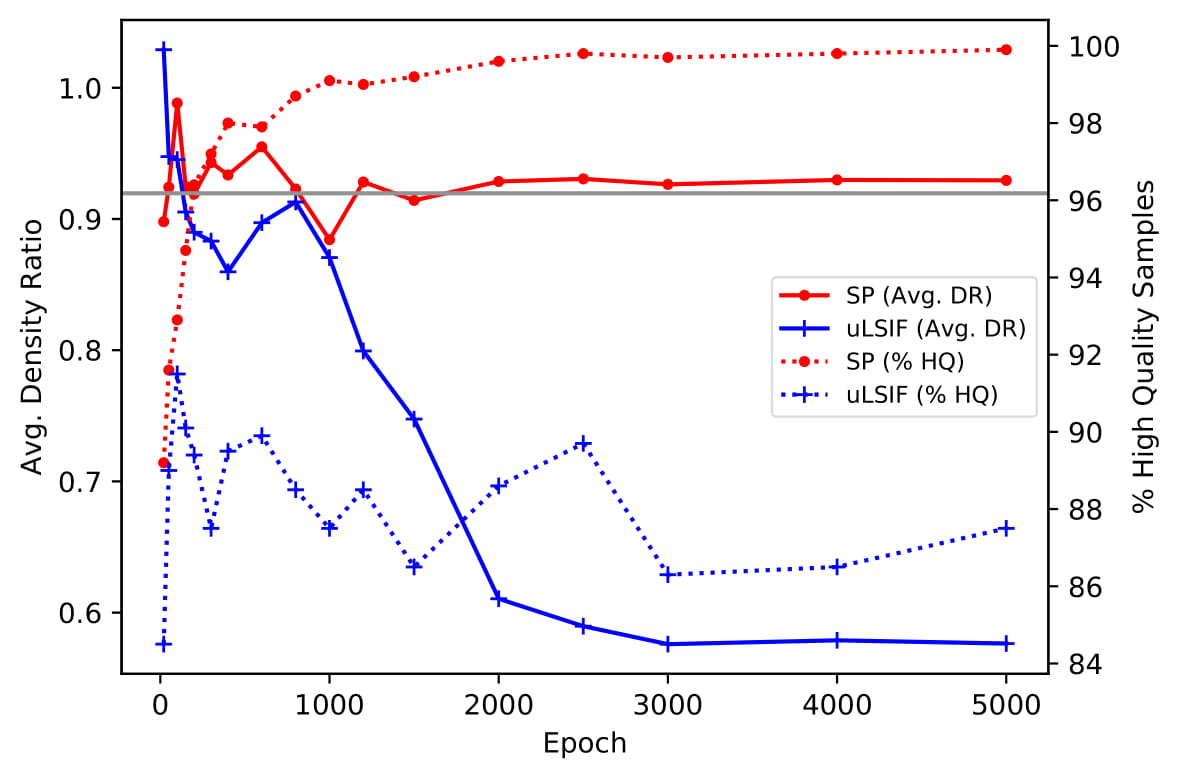}
    		\caption{Average density ratio of 10,000 fake samples and percentage of high quality samples versus the epoch of DRE training when $\lambda=0.05$. The grey horizontal line stands for the ground truth average density ratio.}
    		\label{fig:dr_all_versus_epochDRE_lambda0.05}
    	\end{figure}
    	
    	\begin{figure}[h]
    		\centering
    		\includegraphics[width=0.48\textwidth, height=5.5cm]{./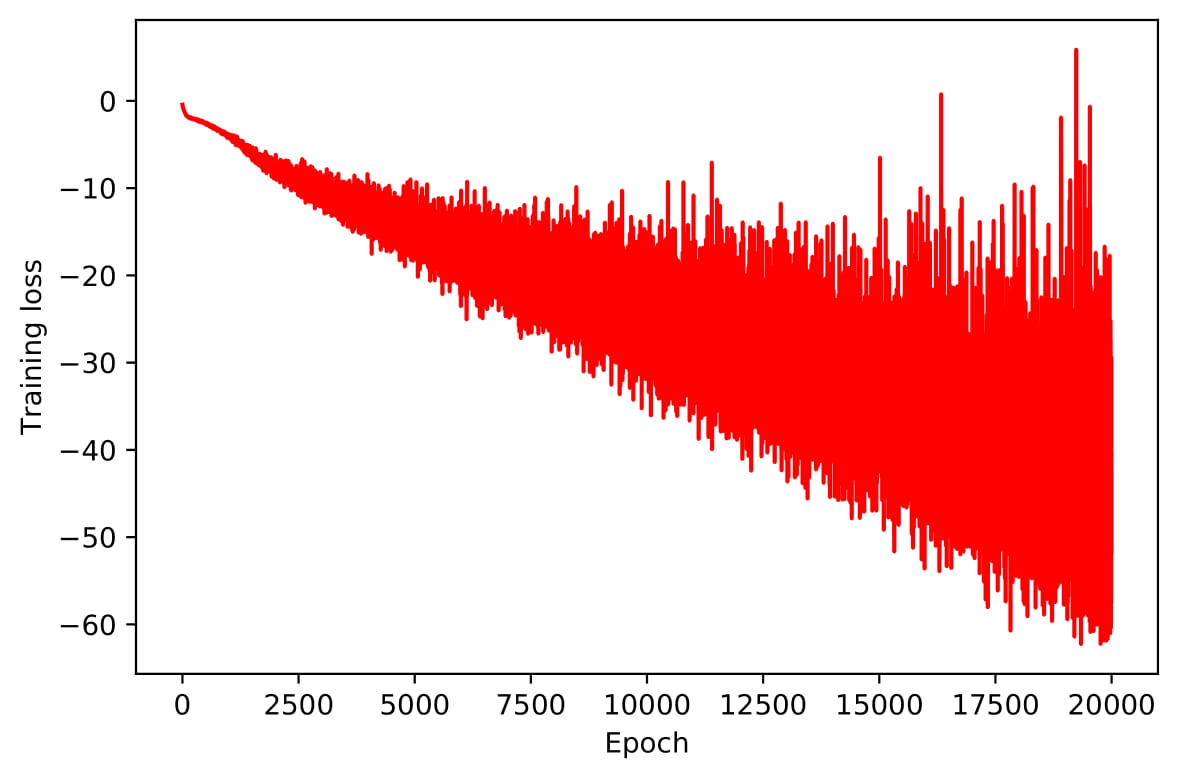}
    		\caption{The training curve of a 5-layer MLP under the penalized uLSIF loss \eqref{eq:optim_uLSIF_penlaty} with $\lambda=0.05$ and a constant learning rate $10^{-5}$.}
    		\label{fig:convergence_uLSIF_not_decay_lr}
    	\end{figure}
    	
    	\subsection{Results for $\lambda=0$}\label{appendix:convergence_uLSIF_SP_lambda0}
    	Besides $\lambda=0.05$, we also set $\lambda$ to 0 and show the corresponding results in Figure \ref{fig:dr_all_versus_epochDRE_lambda0} and \ref{fig:dr_HQorLQ_versus_epochDRE_lambda0}. From Fig. \ref{fig:dr_all_versus_epochDRE_lambda0}, we can see both loss functions suffer from overfitting. However, the overfitting problem for the uLSIF loss is more obvious than the SP loss, because Fig. \ref{fig:dr_HQorLQ_versus_epochDRE_lambda0} shows that the difference between high quality and low quality samples in terms of the average density ratios is bigger when using SP loss. Therefore, even with the overfitting problem, the SP loss still leads to a much better subsampling performance than the uLSIF loss does.
    	
    	\begin{figure}[h]
    		\centering
    		\includegraphics[width=0.48\textwidth, height=5.5cm]{./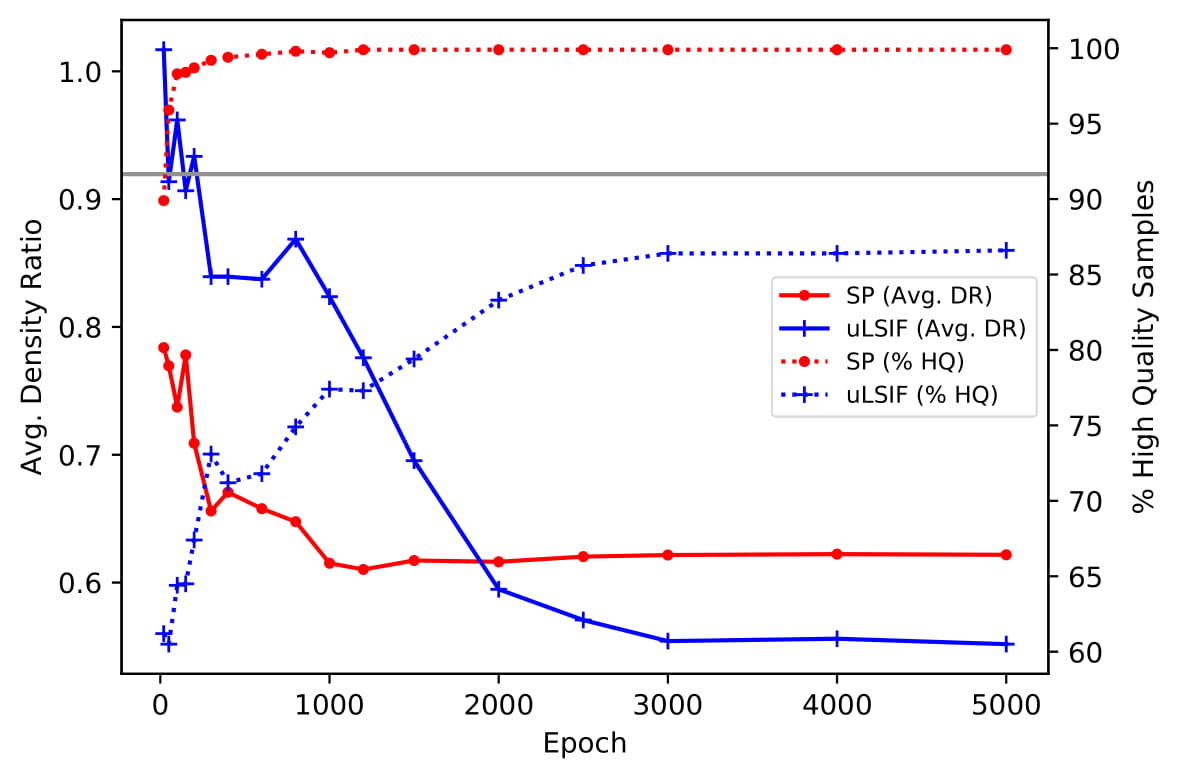}
    		\caption{Average density ratio of 10,000 fake samples and percentage of high quality samples versus the epoch of DRE training when $\lambda=0$. The grey horizontal line stands for the ground truth average density ratio.}
    		\label{fig:dr_all_versus_epochDRE_lambda0}
    	\end{figure}
    	
    	\begin{figure}[h]
    		\centering
    		\subfloat[][Average density ratio of high quality fake samples versus the epoch of DRE training when $\lambda=0$.]{
    			\includegraphics[width=0.48\textwidth, height=5.5cm]{./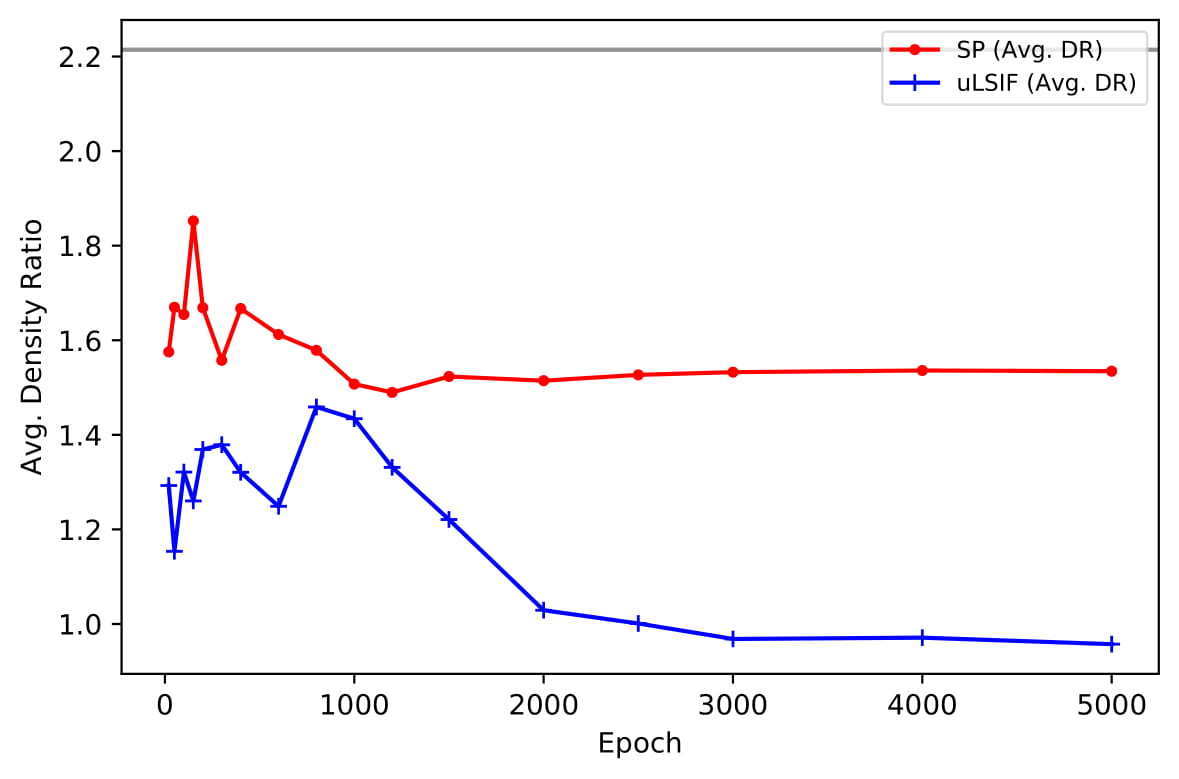}
    			\label{fig:dr_HQ_versus_epochDRE_lambda0}}\\
    		\subfloat[][Average density ratio of low quality fake samples versus the epoch of DRE training when $\lambda=0$.]{
    			\includegraphics[width=0.48\textwidth, height=5.5cm]{./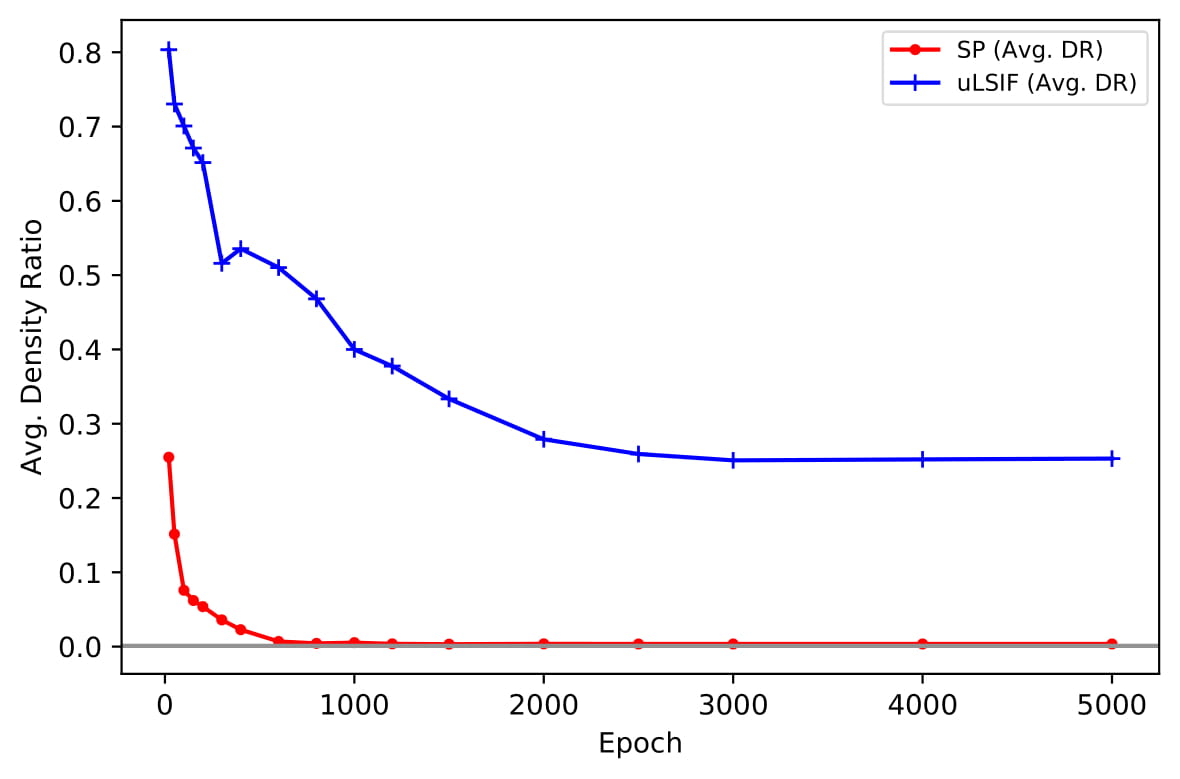}
    			\label{fig:dr_LQ_versus_epochDRE_lambda0}}
    		\caption{Average density ratio of high/low quality fake samples versus the epoch of DRE training when $\lambda=0$. The grey horizontal line stands for the ground truth.}
    		\label{fig:dr_HQorLQ_versus_epochDRE_lambda0}
    	\end{figure}

    	\section{Real Datasets} \label{appendix:real_data}
    	\setcounter{figure}{0}
    	\setcounter{table}{0}
    	\subsection{Evaluation Metrics}\label{appendix:IS_FID}
    	\textit{Inception Score} (IS) \cite{salimans2016improved} is a popular evaluation metric for GANs which is defined as follows:
    	\begin{equation}
    	\label{eq:IS}
    	IS = \exp\{\mathbbm{E}_{\bm{x}}[\mathbbm{KL}(p(y|\bm{x})\|p(y))]\}=\exp\{H(y)-\mathbbm{E}_{\bm{x}}[H(y|\bm{x})] \},
    	\end{equation}
    	where $p(y|\bm{x})$ is the conditional label distribution for an image $\bm{x}$, $p(y)$ is the marginal label distribution, $H(y)$ is the entropy of $y$, and $H(y|\bm{x})$ is the entropy of $y$ conditioning on $\bm{x}$. The distribution $p(y|\bm{x})$ is often modeled by a pre-trained CNN, say Inception-V3 \cite{szegedy2016rethinking}, and $p(y)\approx (1/N)\sum_{i=1}^Np(y|\bm{x}_i)$. IS evaluates the quality of a set of fake images from two perspectives: high classifiability and diversity with respect to class labels. We assume high quality images are more classifiable so we favor smaller $\mathbbm{E}_{\bm{x}}[H(y|\bm{x})]$. On the other hand, high diversity means the GAN model can generate images from all potential classes instead of a few classes so we expect high entropy in the class labels (predicted by the pre-trained Inception-V3) of those fake images (i.e., larger $H(y)$). Therefore, the larger the IS is, the better quality the fake images have. 
    	
    	\textit{Fr\'echet Inception Distance} (FID) \cite{heusel2017gans} is another popular evaluation metric for GAN models. The FID is defined on a feature space learned by a pre-trained CNN and we assume a feature $\bm{y}$ extracted by this pre-trained CNN follows a multivariate normal distribution with mean $\bm{\mu}$ and covariance $\bm{\Sigma}$. In other words, we assume $\bm{y}^r\sim\mathcal{N}(\bm{\mu}_r, \bm{\Sigma}_r)$ and 
    	$\bm{y}^g\sim\mathcal{N}(\bm{\mu}_{g}, \bm{\Sigma}_{g})$, where $\bm{y}^r$ and $\bm{y}^g$ are extracted features of real and fake images respectively. This assumption looks very strong, but empirical studies show that FID is consistent with human judgments and works more robustly than IS does \cite{heusel2017gans}. FID is defined as follows
    	\begin{equation}
    	\label{eq:FID}
    	FID = \|\bm{\mu}_r-\bm{\mu}_{g}\|^2_2 +\text{Tr}(\bm{\Sigma}_r+\bm{\Sigma}_g-2(\bm{\Sigma}_r\bm{\Sigma}_g)^{\frac{1}{2}}),
    	\end{equation}
    	where $\bm{\mu}_r$, $\bm{\mu}_g$, $\bm{\Sigma}_g$ and $\bm{\Sigma}_r$ can be estimated from samples. Note that FID is computed based on both real images and fake images while IS is only computed on fake images.
    	
    	\subsection{CIFAR-10} \label{appendix:cifar10}
    	\subsubsection{Network Architectures}\label{appendix:cifar10_nets}
    	We implement DCGAN and WGAN-GP with generator and discriminator shown in Table \ref{tab:cifar10_gan_architecture}. For MMD-GAN, we directly use codes in \url{https://github.com/OctoberChang/MMD-GAN}. Please see \cite{li2017mmd} for more details about MMD-GAN.
    	
    	\begin{table}[h]%
    		\centering
    		\caption{Network architectures for the generator and discriminator of DCGAN and WGAN-GP in the experiment on CIFAR-10. The slopes of all LeakyReLU are set to 0.2. We denote stride and padding by s and p respectively.}%
    		\subfloat[][Generator]{\begin{tabular}{c}
    				\toprule
    				$z\in \mathbbm{R}^{128}\sim N(0,I)$ \\ \hline
    				fc$\rightarrow 4\times 4\times 512$ \\ \hline
    				deconv, $4\times 4$, $\text{s}=2$, $\text{p}=1$, $256$; BN; ReLU \\ \hline
    				deconv, $4\times 4$, $\text{s}=2$, $\text{p}=1$, $128$; BN; ReLU \\ \hline
    				deconv, $4\times 4$, $\text{s}=2$, $\text{p}=1$, $64$; BN; ReLU \\ \hline
    				conv, $3\times 3$, $\text{s}=1$, $\text{p}=1$, $3$; Tanh \\
    				\bottomrule
    		\end{tabular}}\\
    		\subfloat[][Discriminator]{\begin{tabular}{c}
    				\toprule
    				RGB image $x\in\mathbbm{R}^{3\times 32\times 32}$ \\ \hline
    				conv, $3\times 3$, $\text{s}=1$, $\text{p}=1$, 64; LeakyReLU \\ \hline
    				conv, $4\times 4$, $\text{s}=2$, $\text{p}=1$, 64; LeakyReLU \\ \hline
    				conv, $3\times 3$, $\text{s}=1$, $\text{p}=1$, 128; LeakyReLU \\ \hline
    				conv, $4\times 4$, $\text{s}=2$, $\text{p}=1$, 128; LeakyReLU \\ \hline
    				conv, $3\times 3$, $\text{s}=1$, $\text{p}=1$, 256; LeakyReLU \\ \hline
    				conv, $4\times 4$, $\text{s}=2$, $\text{p}=1$, 256; LeakyReLU \\ \hline
    				conv, $3\times 3$, $\text{s}=1$, $\text{p}=1$, 512; LeakyReLU \\ \hline
    				fc$\rightarrow 1$\\\midrule
    				sigmoid (for DCGAN only)\\
    				\bottomrule
    		\end{tabular}}
    		\label{tab:cifar10_gan_architecture}%
    	\end{table}

    	\begin{figure}[h]
    		\centering
    		\includegraphics[width=0.2\textwidth, height=3in]{./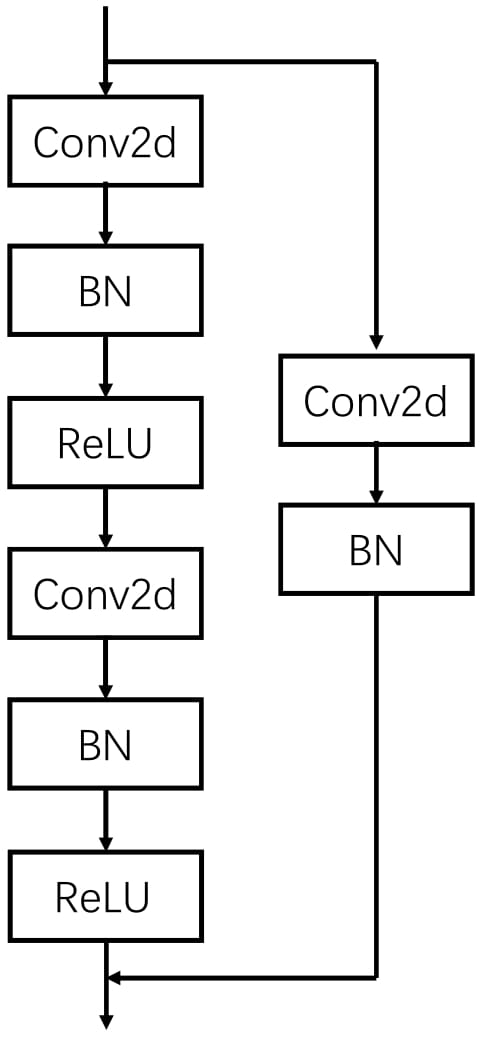}
    		\caption{A residual block (ResBlock) in ResNet-34 for CIFAR-10. Please check our codes for detailed setups of each layer.}
    		\label{fig:resblock}
    	\end{figure}
    	
    	\begin{table}[h]
    		\centering
    		\caption{Architecture of the ResNet-34 for feature extraction for CIFAR-10. ``down" refers to down sampling. We add an extra fully-connected layer fc1 to output features for density ratio estimation in feature space. $\times n$ represents $n$ consecutive such blocks.}
    		\begin{tabular}{c}
    			\toprule
    			RGB image $x\in\mathbbm{R}^{3\times 32\times 32}$ \\
    			\hline
    			conv, 3x3, stride=1, p=1, 64; BN; ReLU \\
    			\hline
    			\{ResBlock, 64\} $\times 3$ \\
    			\hline
    			ResBlock, down, 128 \\
    			\hline
    			\{ResBlock, 128\}$ \times 3$ \\
    			\hline
    			ResBlock, down, 256 \\
    			\hline
    			\{ResBlock, 256\} $\times 5$  \\
    			\hline
    			ResBlock, down, 512 \\
    			\hline
    			\{ResBlock, 512\} $\times 2$ \\
    			\hline
    			Avg. pooling, $4\times 4$, $\text{s}=4$ \\
    			\hline
    			fc1$\rightarrow 32\times32\times3=3072$ \\
    			\hline
    			fc2$\rightarrow 10$ \\
    			\bottomrule
    		\end{tabular}%
    		\label{tab:cifar10_resnet34}%
    	\end{table}%
    	
    	\begin{table}[h]
    		\centering
    		\caption{5-layer MLP for DRE in feature space for CIFAR-10.}
    		\begin{tabular}{c}
    			\toprule
    			extracted feature $\bm{y}\in \mathbbm{R}^{3072}$ \\
    			\hline
    			fc$\rightarrow 2048$, GN (4 groups), ReLU, Dropout($p=0.4$) \\\hline
    			fc$\rightarrow 1024$, GN (4 groups), ReLU, Dropout($p=0.4$) \\\hline
    			fc$\rightarrow 512$, GN (4 groups), ReLU, Dropout($p=0.4$) \\\hline
    			fc$\rightarrow 256$, GN (4 groups), ReLU, Dropout($p=0.4$) \\\hline
    			fc$\rightarrow 128$, GN (4 groups), ReLU, Dropout($p=0.4$) \\\hline
    			fc$\rightarrow 1$, ReLU \\
    			\bottomrule
    		\end{tabular}%
    		\label{tab:cifar10_MLP5}%
    	\end{table}%
    	
    	\begin{table}[h]
    		\centering
    		\caption{2-layer CNN for DRE in pixel space \cite{nam2015direct} for CIFAR-10}
    		\begin{tabular}{c}
    			\toprule
    			RGB image $\bm{x}\in \mathbbm{R}^{3\times 32\times 32}$ \\\hline
    			conv, $9\times 9$, $s=1$, 6 \\\hline
    			Avg. pooling, $2\times 2$, $s=2$ \\\hline
    			Sigmoid \\\hline
    			conv, $9\times 9$, $s=1$, 12 \\\hline
    			Avg. pooling, $2\times 2$, $s=2$ \\\hline
    			Sigmoid \\\hline
    			fc$\rightarrow 1$; ReLU \\
    			\bottomrule
    		\end{tabular}%
    		\label{tab:cifar10_2layerCNN}%
    	\end{table}%
    	
    	\begin{table}[h]
    		\centering
    		\caption{6-layer CNN for DRE in pixel space \cite{khan2019deep} for CIFAR-10}
    		\begin{tabular}{c}
    			\toprule
    			RGB image $\bm{x}\in \mathbbm{R}^{3\times 32\times 32}$ \\\hline
    			conv, $3\times 3$, $s=1$, 60 \\\hline
    			Max pooling, $2\times 2$, $s=1$ \\\hline
    			conv, $3\times 3$, $s=1$, 50 \\\hline
    			conv, $3\times 3$, $s=1$, 40 \\\hline
    			Max pooling, $2\times 2$, $s=1$ \\\hline
    			conv, $3\times 3$, $s=1$, 20 \\\hline
    			Max pooling, $2\times 2$, $s=1$ \\\hline
    			conv, $2\times 2$, $s=1$, 10 \\\hline
    			conv, $2\times 2$, $s=1$, 5 \\\hline
    			fc$\rightarrow 250$, ReLU, Dropout($p=0.25$) \\\hline
    			fc$\rightarrow 1$; ReLU \\
    			\bottomrule
    		\end{tabular}%
    		\label{tab:cifar10_6layerCNN}%
    	\end{table}%

    	\begin{table}[H]
    		\centering
    		\caption{Binary classifier for the BOC-based DRE \cite{grover2019bias} for CIFAR-10}
    		\begin{tabular}{c}
    			\toprule
    			RGB image $\bm{x}\in \mathbbm{R}^{3\times 32\times 32}$ \\\hline
    			conv, $3\times 3$, $s=1$, $p=1$, 64; ReLU; BN \\\hline
    			Max pooling, $2\times 2$, $s=2$ \\\hline
    			conv, $3\times 3$, $s=1$, $p=1$, 64; ReLU; BN \\\hline
    			Max pooling, $2\times 2$, $s=2$ \\\hline
    			conv, $3\times 3$, $s=1$, $p=1$, 64; ReLU; BN \\\hline
    			Max pooling, $2\times 2$, $s=2$ \\\hline
    			conv, $3\times 3$, $s=1$, $p=1$, 64; ReLU; BN \\\hline
    			Max pooling, $2\times 2$, $s=2$ \\\hline
    			fc$\rightarrow 1$; Sigmoid \\
    			\bottomrule
    		\end{tabular}%
    		\label{tab:cifar10_BOC}%
    	\end{table}%

    	\subsubsection{Training Setups}\label{appendix:cifar10_training_setups}
    	In the CIFAR-10 setting, three types of GAN are trained on the 50,000 training images with setups in Table \ref{tab:cifar10_gan_training_setups}. The modified ResNet-34 for feature extraction is trained on the training set for 200 epochs with the SGD optimizer, initial learning rate 0.1 (decay at epoch 100 and 150 with factor 0.1), weight decay $10^{-4}$, and batch size 256. The training setups for different DRE methods are shown in Table \ref{tab:cifar10_dre_setups}.

    	\begin{table}[h] 
    		\centering
    		\caption{Training setups for three types of GAN on CIFAR-10.}
    		\begin{tabular}{cccc}
    			\toprule
    			& DCGAN & WGAN-GP & MMD-GAN \\
    			\midrule
    			Optimizer & Adam  & Adam  & Adam \\
    			Constant learning rate & 2E-04 & 2E-04 & 5E-05 \\
    			Epochs & 500   & 2000  & 4000 \\
    			Batch Size & 256   & 256   & 256 \\
    			\bottomrule
    		\end{tabular}%
    		\label{tab:cifar10_gan_training_setups}%
    	\end{table}%
    	
    	\begin{table}[h]
    		\scriptsize
    		\caption{Training setups for DRE methods on CIFAR-10. Corresponding subsampling results are shown in Table \ref{tab:results_cifar10_main} to \ref{tab:results_cifar10_loss_compare} of the main article.}
    		\begin{center}
    			\begin{tabular}{ccccc}
    				\toprule
    				& DRE-P-uLSIF & DRE-P-DSKL & DRE-P-BARR & BOC \\ \midrule
    				Optimizer & Adam & Adam & Adam & Adam \\ \hline
    				Initia LR & 1E-4 & 1E-4 & 1E-4 & 1E-3 \\ \hline
    				LR Decay & No & No & No & No \\ \hline
    				Epochs & 200 & 200 & 200 & 100 \\ \hline
    				Batch Size & 512 & 512 & 512 & 100 \\ \bottomrule
    				& DRE-F-uLSIF & DRE-F-DSKL & DRE-F-BARR & DRE-F-SP \\ \midrule
    				Optimizer & Adam & Adam & Adam & Adam \\ \hline
    				Initia LR & 1E-5 & 1E-5 & 1E-5 & 1E-4 \\ \hline
    				LR Decay & No & No & No & epoch 100 \\ 
    				&&&&($\times 0.1$)\\\hline
    				Epochs & 200 & 200 & 200 & 200 \\ \hline
    				Batch Size & 512 & 512 & 512 & 512 \\ \bottomrule
    			\end{tabular}
    		\end{center}
    		\label{tab:cifar10_dre_setups}
    	\end{table}
    	
    	\begin{table}[h]
    		\centering
    		\caption{Hyperparameter selection for CIFAR-10. Two-Sample Kolmogorov-Smirnov test statistic is shown for each $\lambda$ and each GAN.}
    		\begin{tabular}{cccccc}
    			\toprule
    			& \multicolumn{5}{c}{DCGAN} \\
    			\cline{2-6}
    			$\lambda$ & 0     & 0.005 & 0.01  & 0.05  & 0.1 \\ \midrule
    			KS Stat. & \textbf{1.380E-01} & 1.390E-01 & 1.386E-01 & 1.389E-01 & 1.394E-01 \\
    			\bottomrule
    			& \multicolumn{5}{c}{WGAN-GP} \\
    			\cline{2-6}
    			$\lambda$ & 0     & 0.005 & 0.01  & 0.05  & 0.1 \\ \midrule
    			KS Stat. & 1.131E-01 & \textbf{1.118E-01} & 1.138E-01 & 1.165E-01 & 1.204E-01 \\
    			\bottomrule
    			& \multicolumn{5}{c}{MMD-GAN} \\
    			\cline{2-6}
    			$\lambda$ & 0     & 0.005 & 0.006 & 0.008 & 0.01 \\ \midrule
    			KS Stat. & 1.220E-01 & 1.231E-01 & \textbf{1.209E-01} & 1.233E-01 & 1.209E-01 \\
    			\bottomrule
    		\end{tabular}%
    		\label{tab:cifar10_hyperparameter_selection}%
    	\end{table}%
    	
    	\subsubsection{Performance Measures} \label{appendix:cifar10_performance_measures}
    	To evaluate the quality of fake images by IS and FID, we train the Inception-V3 on 50,000 CIFAR-10 training images. FID is computed based on the final average pooling features from the pre-trained Inception-V3. We compute the FID between 50,000 fake images and 50,000 training images.

    	\subsubsection{Visual Results} \label{appendix:cifar10_visual_results}
    	We show some example CIFAR-10 images from different subsampling methods in Fig. \ref{fig:cifar_visual_results_dcgan} to \ref{fig:cifar_visual_results_mmdgan}. For each method, we draw images from two classes (car and horse) with 50 images per class (first 5 rows correspond to cars and the rest correspond to horses). The improvement of our methods is more obvious on images from WGAN-GP in Fig. \ref{fig:cifar_visual_results_wgangp} where our methods generate more recognizable cars and horses.

    	\begin{figure*}[h]
    		\centering
    		\subfloat[][No subsampling]{\includegraphics[width=0.45\textwidth, height=7cm]{./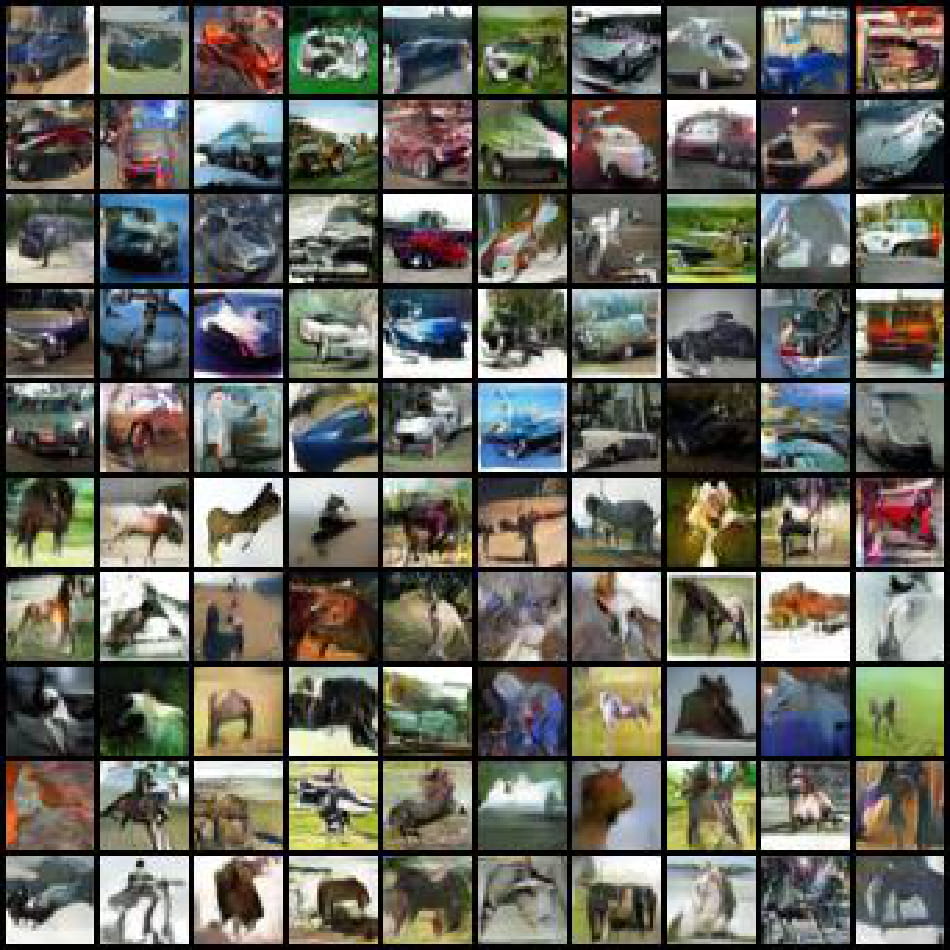}\label{fig:cifar10_visual_dcgan}}\quad
    		\subfloat[][DRS \cite{azadi2018discriminator}]{\includegraphics[width=0.45\textwidth, height=7cm]{./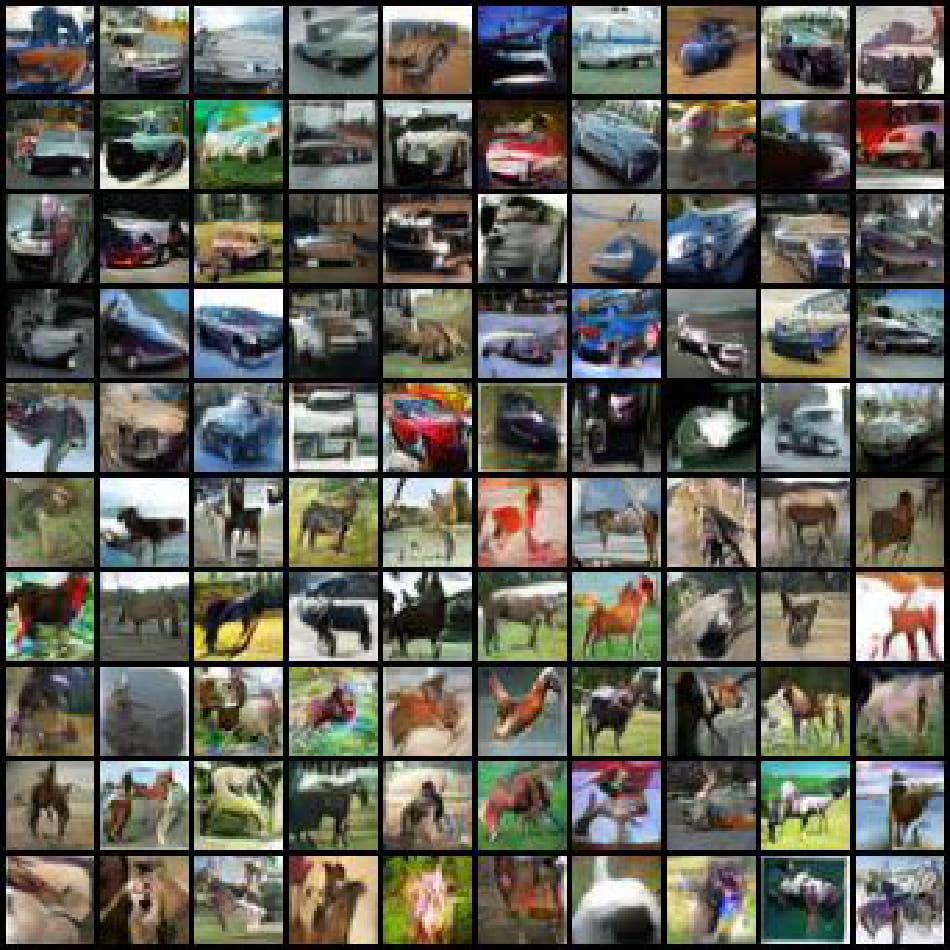}\label{fig:cifar10_visual_dcgan_DRS}}
    		\\
    		\subfloat[][MH-GAN \cite{turner2018metropolis}]{\includegraphics[width=0.45\textwidth, height=7cm]{./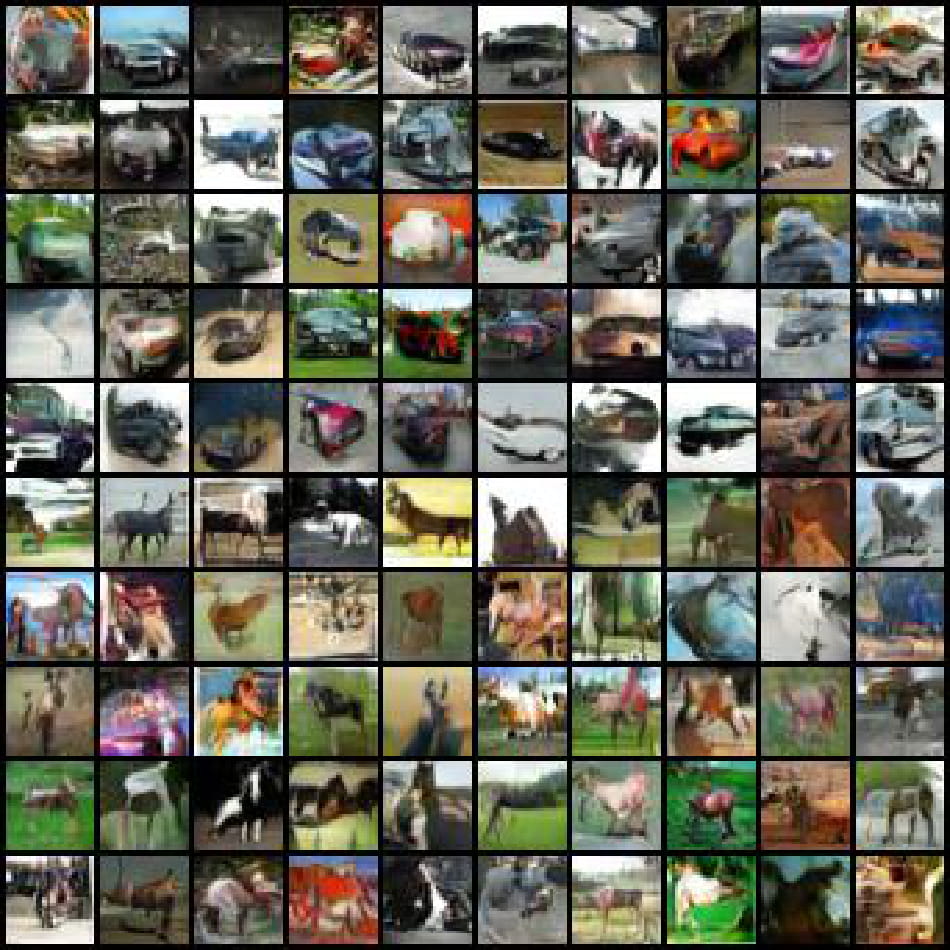}\label{fig:cifar10_visual_dcgan_MHGAN}}\quad
    		\subfloat[][DRE-F-SP+RS]{\includegraphics[width=0.45\textwidth, height=7cm]{./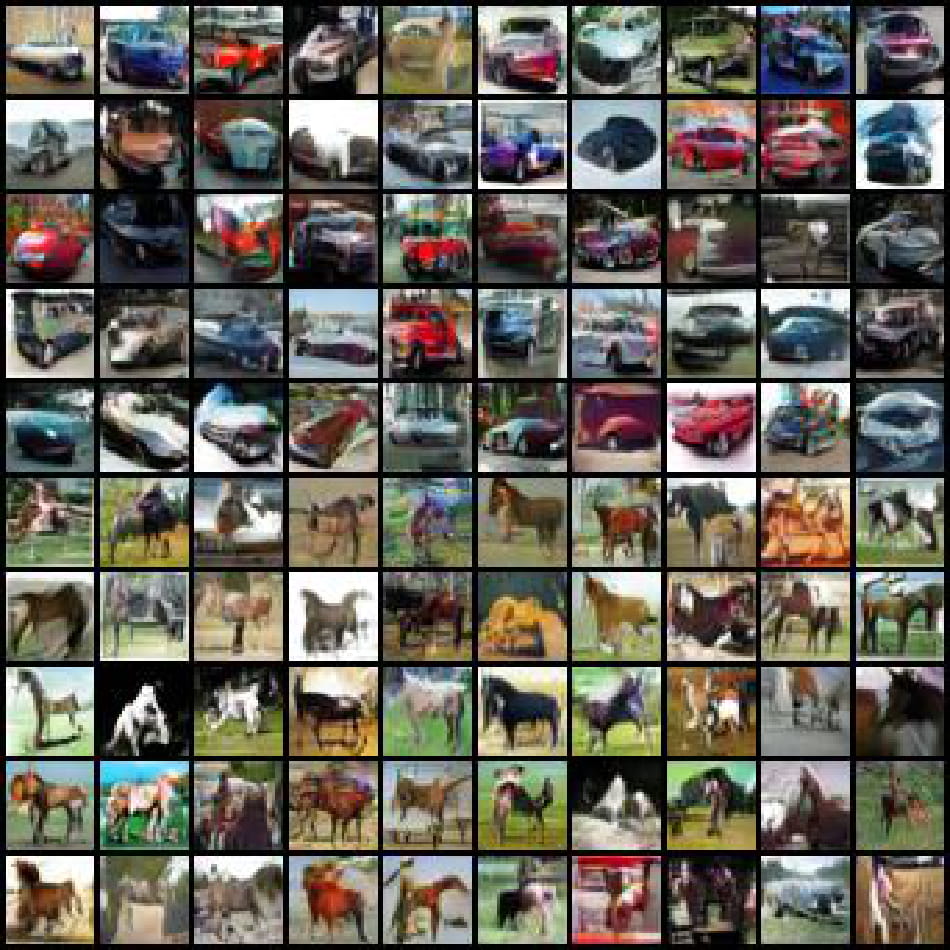}\label{fig:cifar10_visual_dcgan_SP_RS}}
    		\\
    		\subfloat[][DRE-F-SP+MH]{\includegraphics[width=0.45\textwidth, height=7cm]{./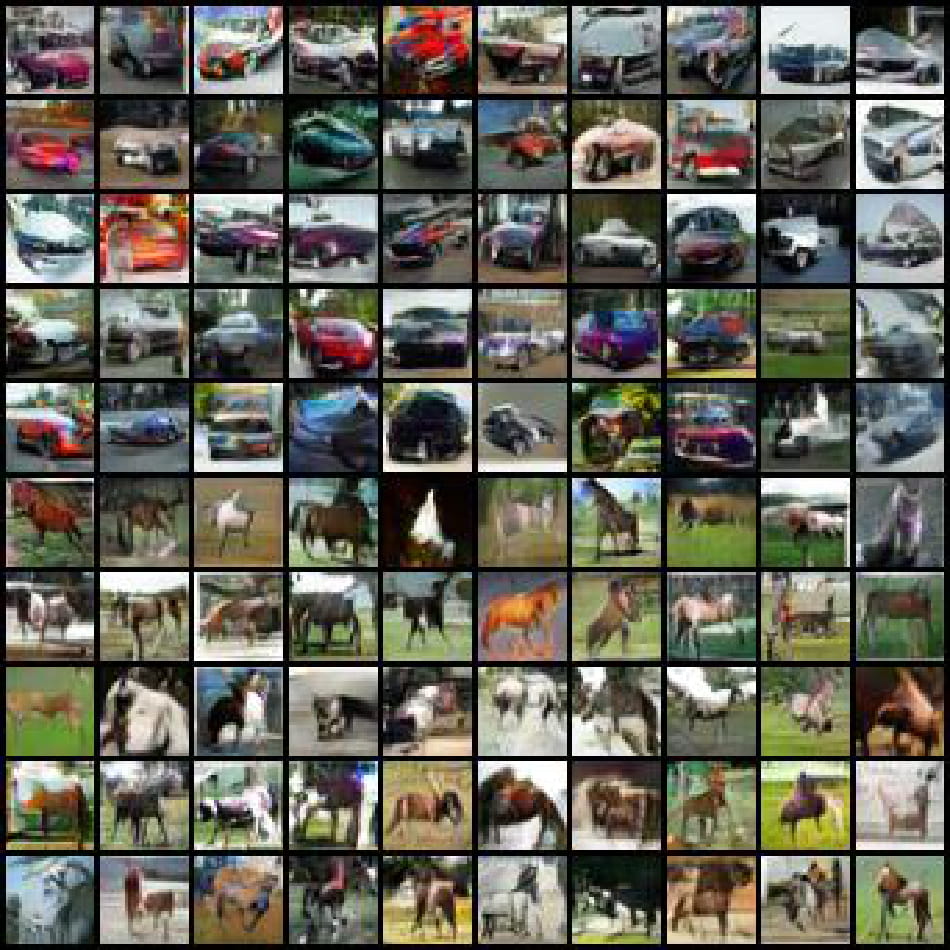}\label{fig:cifar10_visual_dcgan_SP_MH}}\quad
    		\subfloat[][DRE-F-SP+SIR]{\includegraphics[width=0.45\textwidth, height=7cm]{./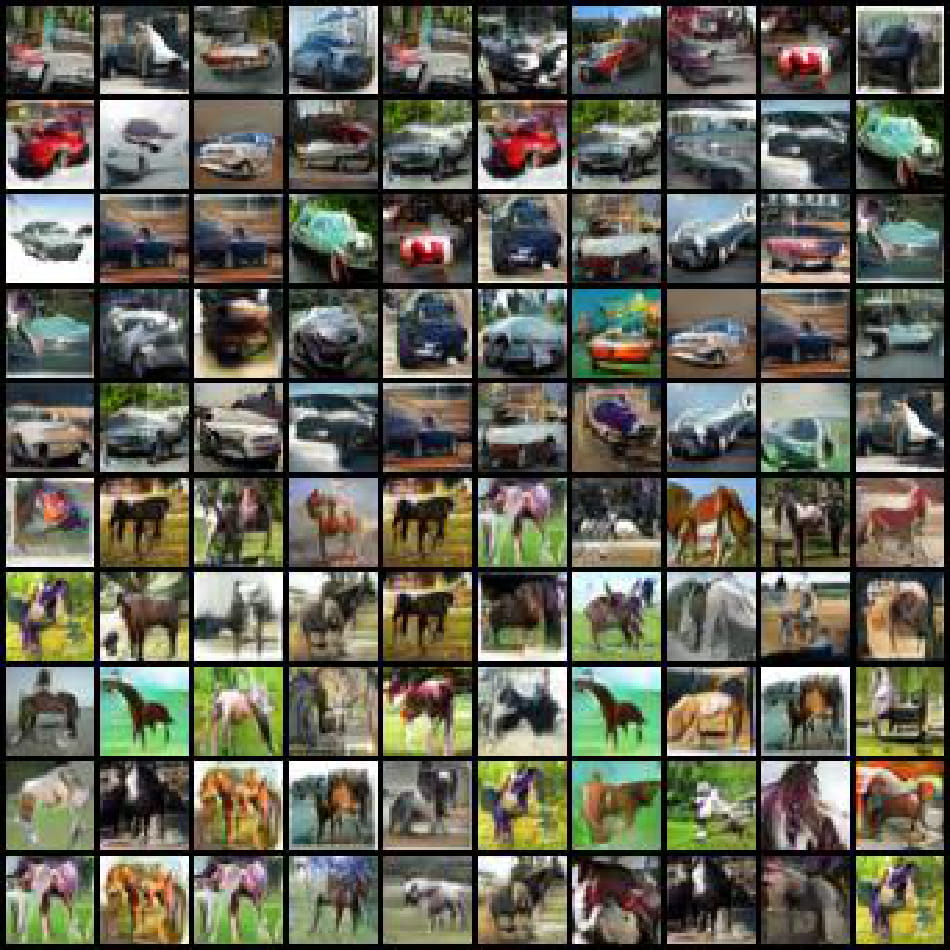}\label{fig:cifar10_visual_dcgan_SP_SIR}}
    		\caption{Fake CIFAR-10 images (car and horse) from DCGAN}.
    		\label{fig:cifar_visual_results_dcgan}
    	\end{figure*}
    	\begin{figure*}[h]
    		\centering
    		\subfloat[][No subsampling]{\includegraphics[width=0.45\textwidth, height=7cm]{./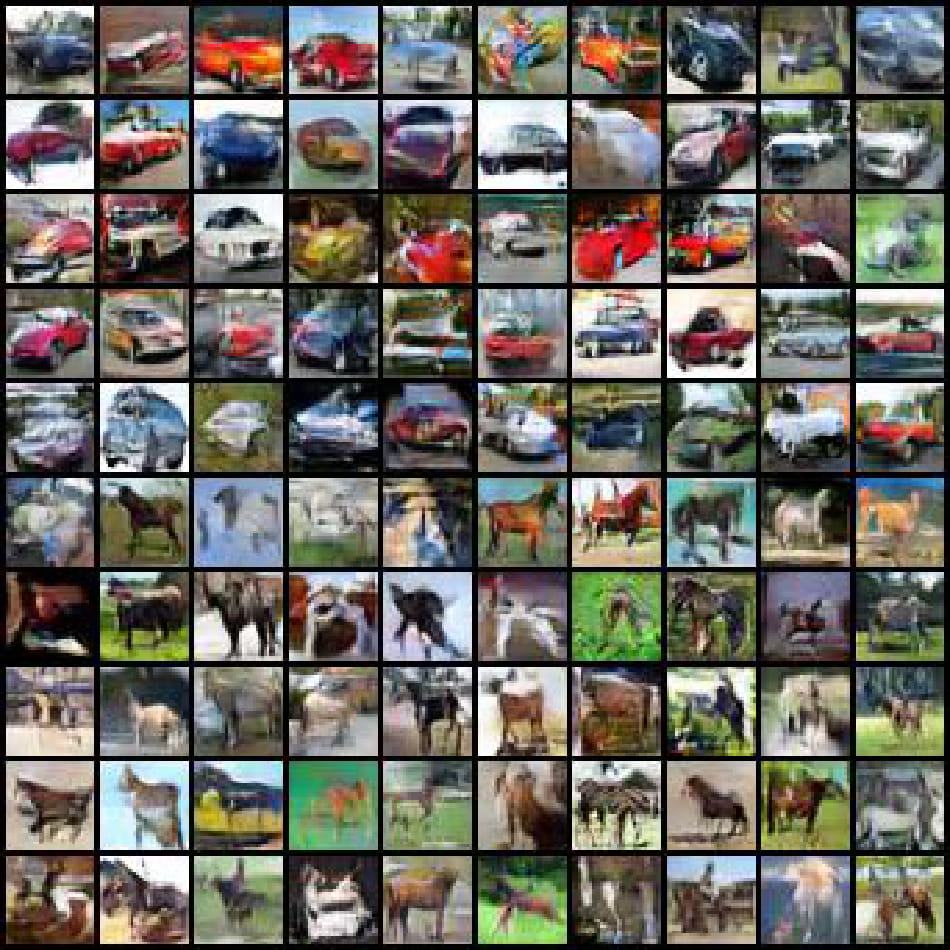}\label{fig:cifar10_visual_wgangp}}\quad
    		\subfloat[][DRS \cite{azadi2018discriminator}]{\includegraphics[width=0.45\textwidth, height=7cm]{./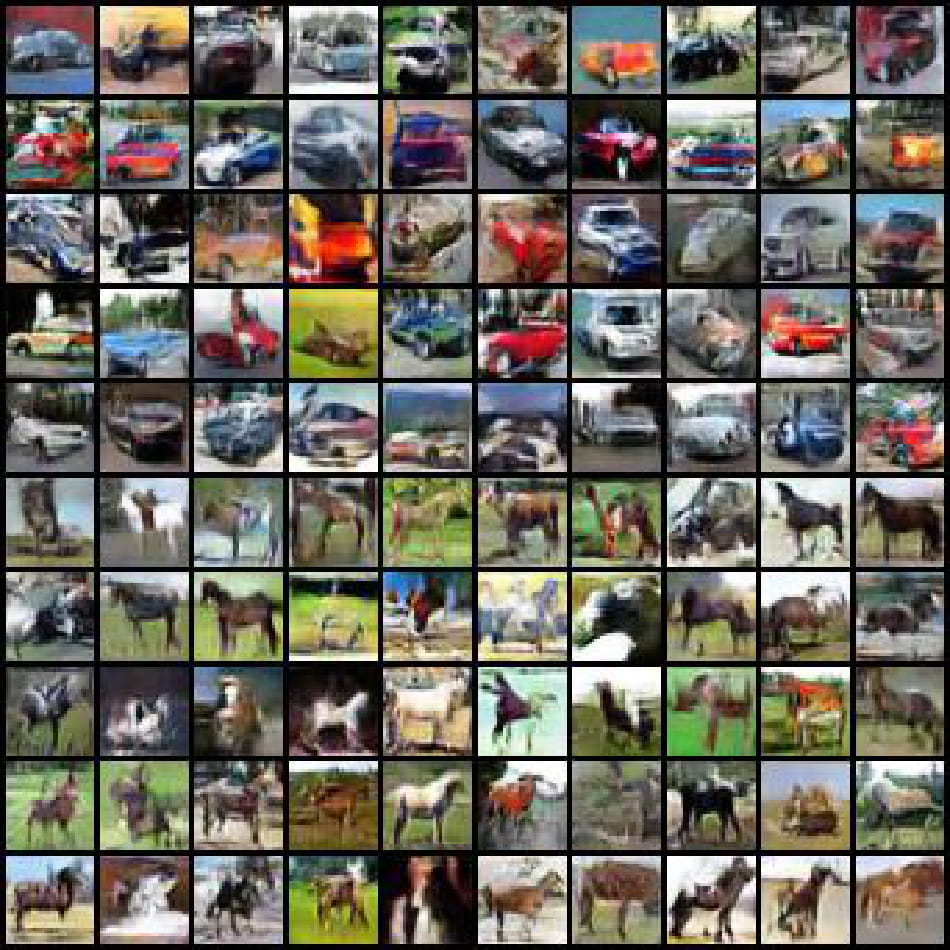}\label{fig:cifar10_visual_wgangp_DRS}}
    		\\
    		\subfloat[][MH-GAN \cite{turner2018metropolis}]{\includegraphics[width=0.45\textwidth, height=7cm]{./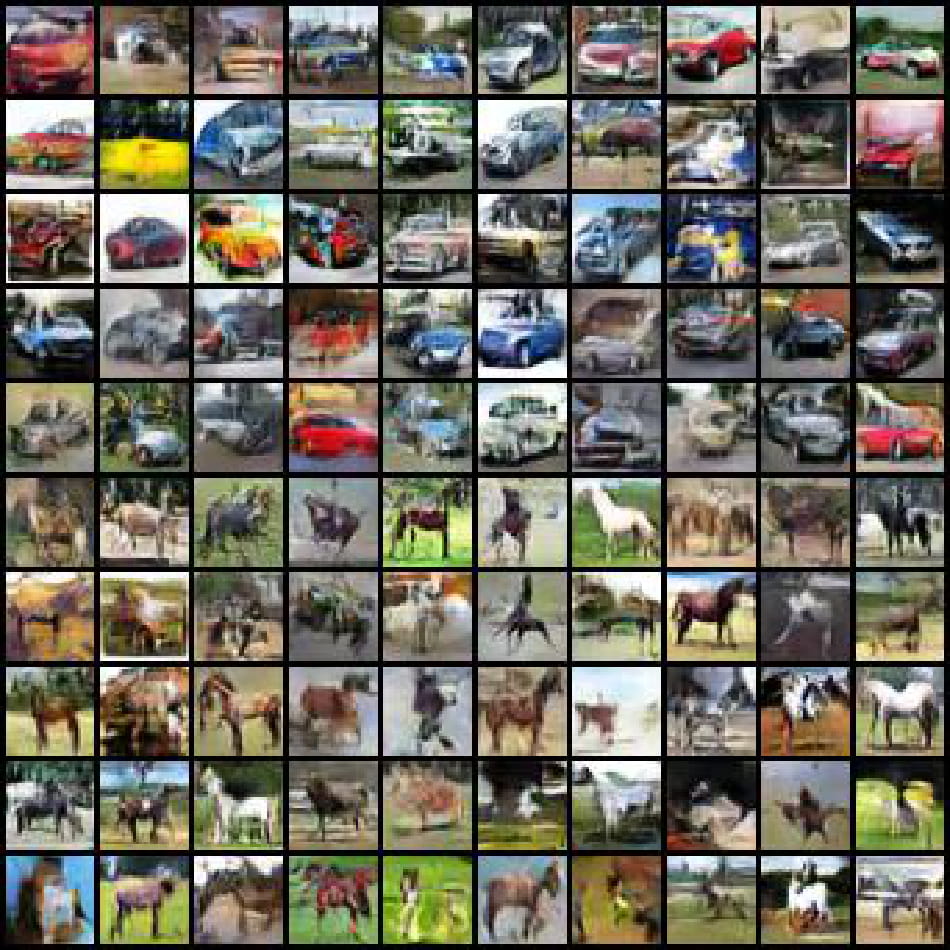}\label{fig:cifar10_visual_wgangp_MHGAN}}\quad
    		\subfloat[][DRE-F-SP+RS]{\includegraphics[width=0.45\textwidth, height=7cm]{./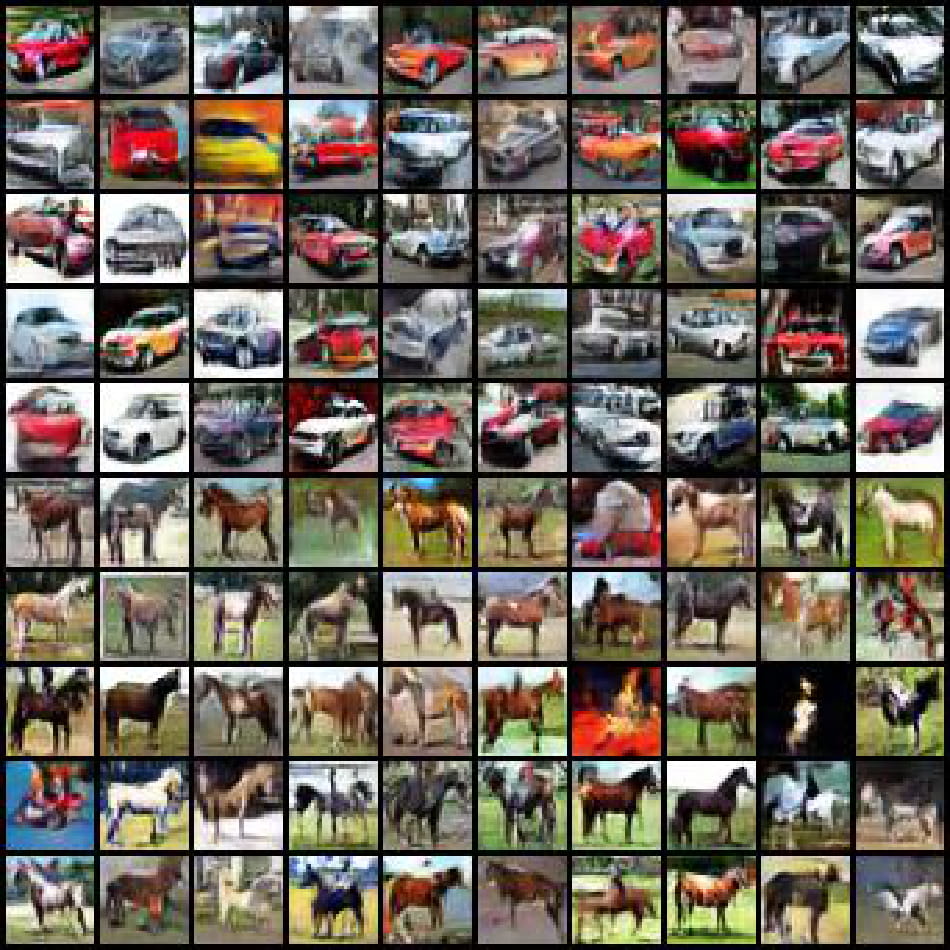}\label{fig:cifar10_visual_wgangp_SP_RS}}
    		\\
    		\subfloat[][DRE-F-SP+MH]{\includegraphics[width=0.45\textwidth, height=7cm]{./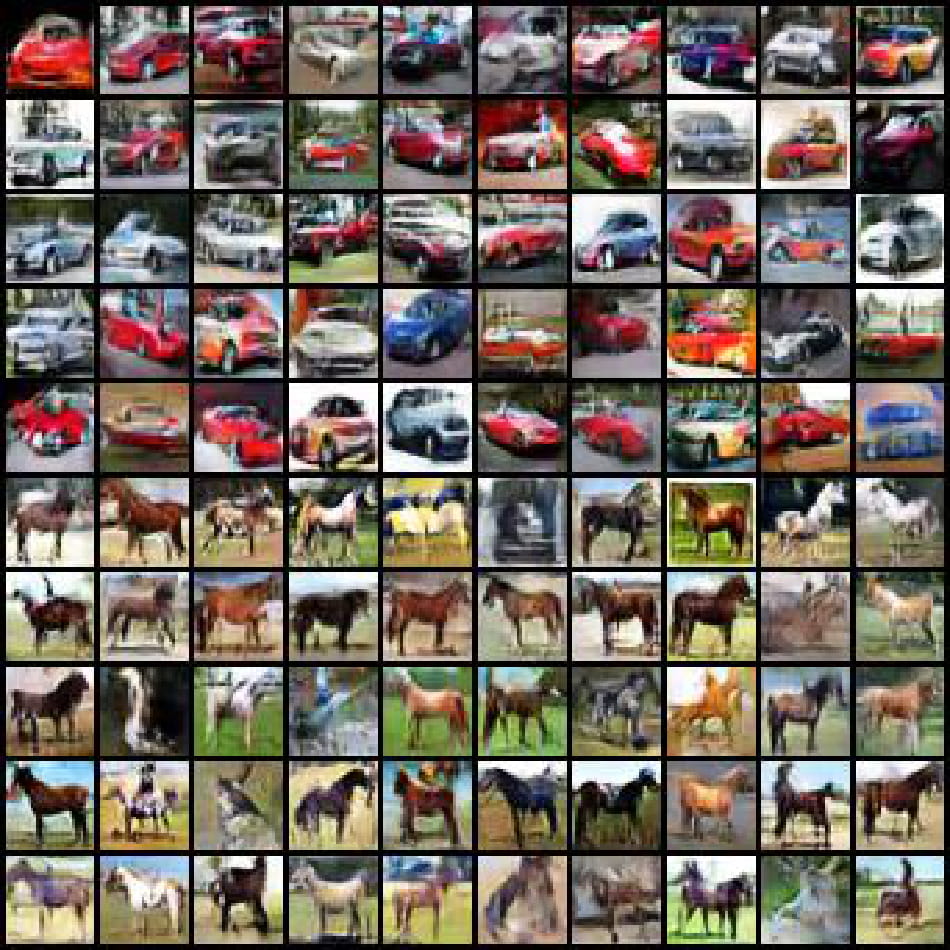}\label{fig:cifar10_visual_wgangp_SP_MH}}\quad
    		\subfloat[][DRE-F-SP+SIR]{\includegraphics[width=0.45\textwidth, height=7cm]{./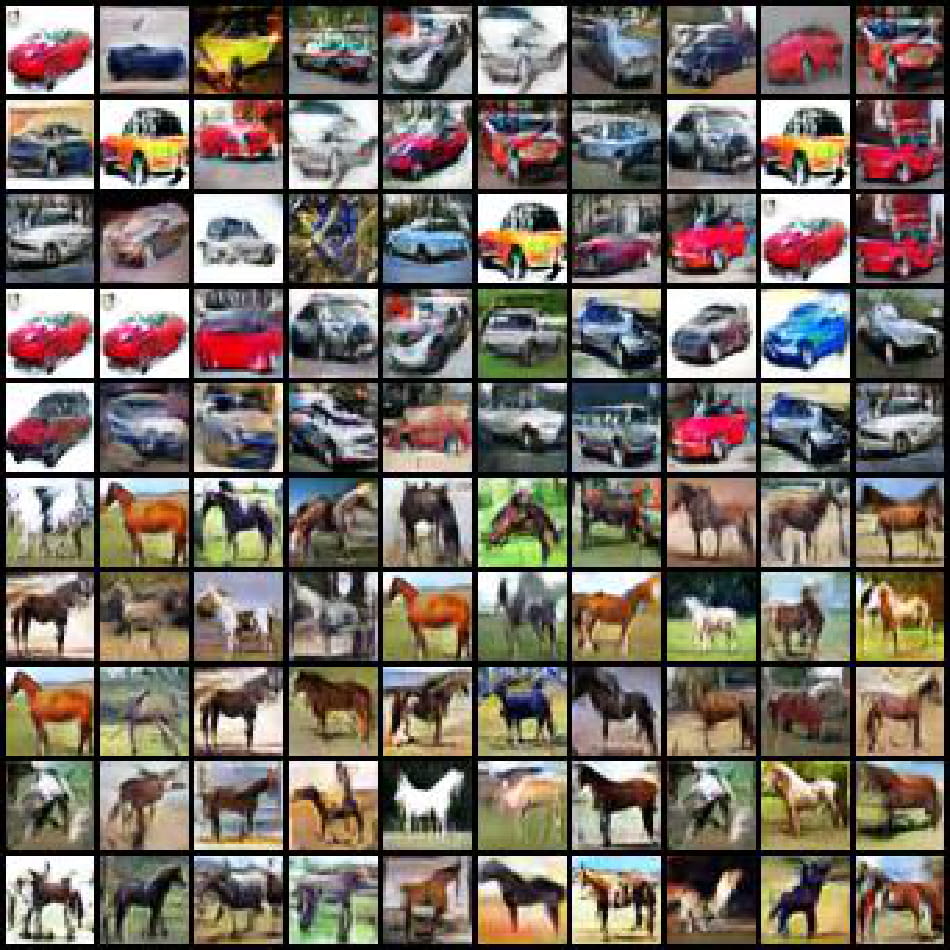}\label{fig:cifar10_visual_wgangp_SP_SIR}}
    		\caption{Fake CIFAR-10 images (car and horse) from WGAN-GP.}
    		\label{fig:cifar_visual_results_wgangp}
    	\end{figure*}
    	\begin{figure*}[h]
    		\centering
    		\subfloat[][No subsampling]{\includegraphics[width=0.45\textwidth, height=7cm]{./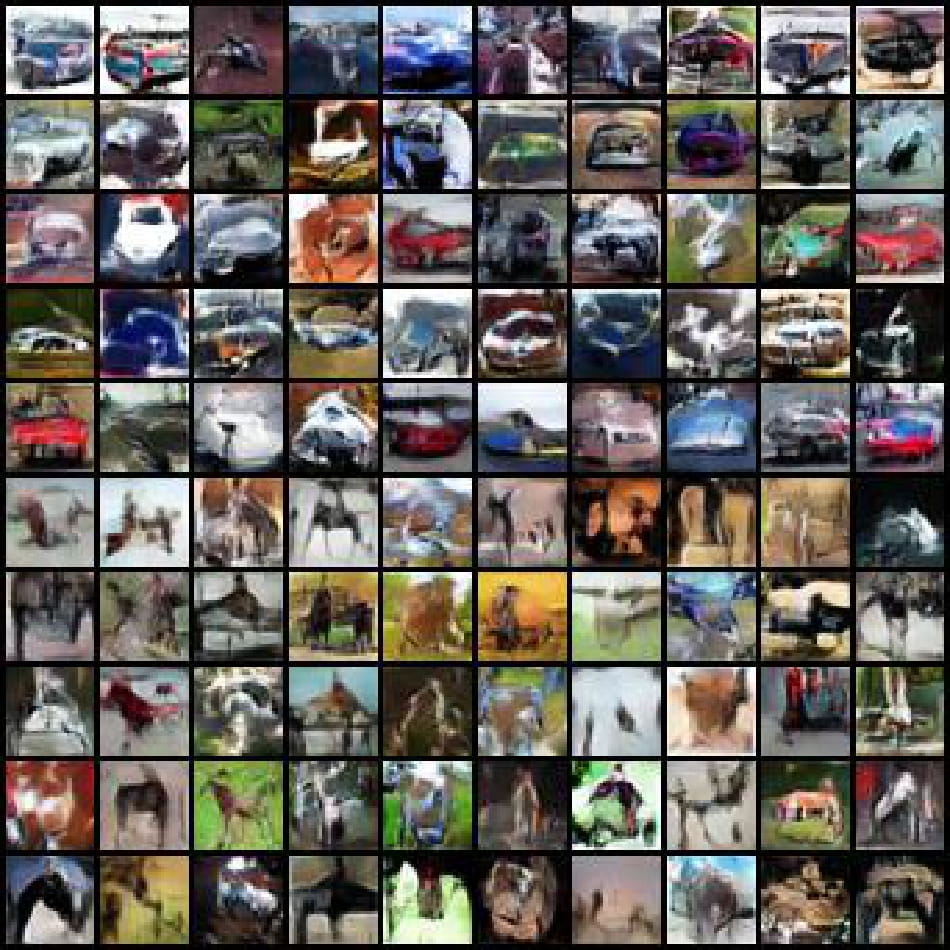}\label{fig:cifar10_visual_mmdgan}}\quad
    		\subfloat[][DRE-F-SP+RS]{\includegraphics[width=0.45\textwidth, height=7cm]{./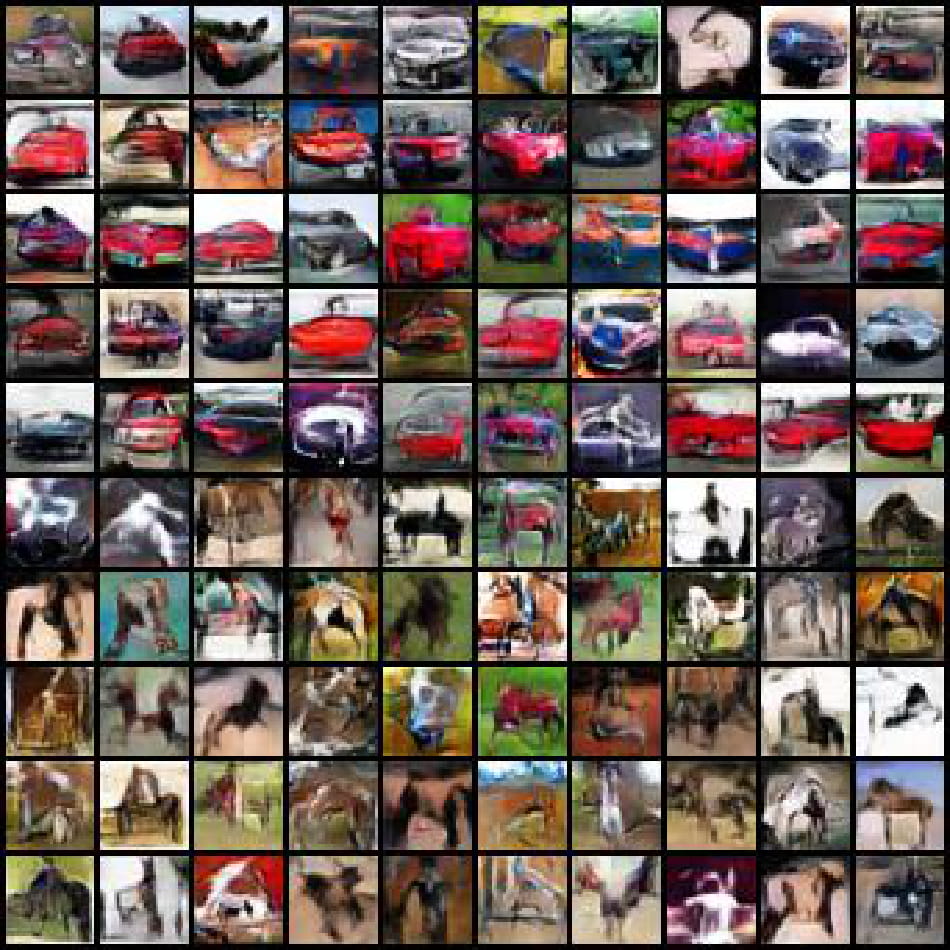}\label{fig:cifar10_visual_mmdgan_SP_RS}}
    		\\
    		\subfloat[][DRE-F-SP+MH]{\includegraphics[width=0.45\textwidth, height=7cm]{./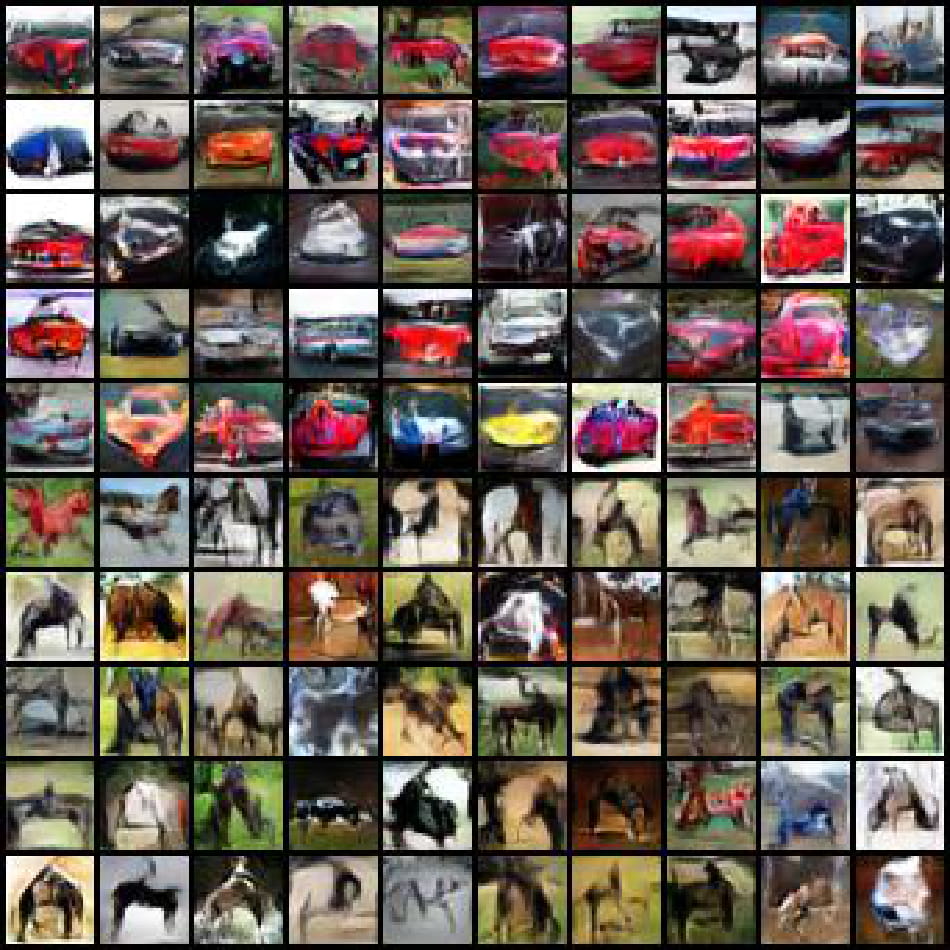}\label{fig:cifar10_visual_mmdgan_SP_MH}}\quad
    		\subfloat[][DRE-F-SP+SIR]{\includegraphics[width=0.45\textwidth, height=7cm]{./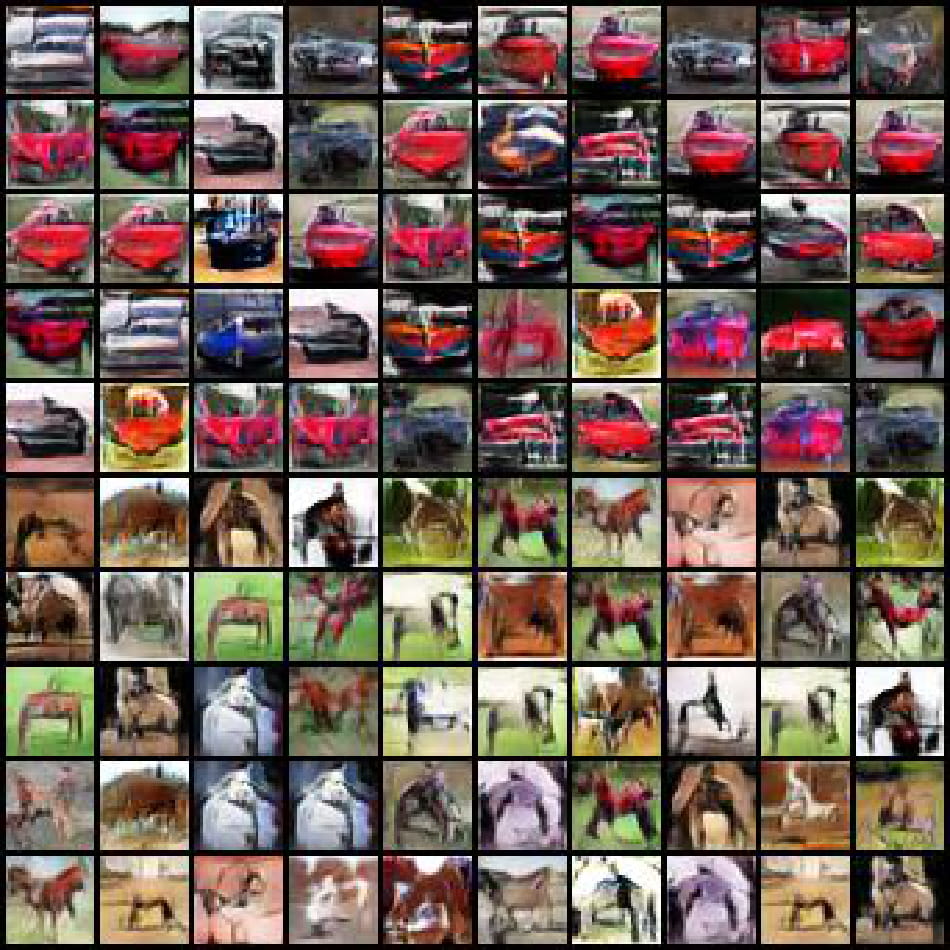}\label{fig:cifar10_visual_mmdgan_SP_SIR}}
    		\caption{Fake CIFAR-10 images (car and horse) from MMD-GAN.}
    		\label{fig:cifar_visual_results_mmdgan}
    	\end{figure*}

    	\subsection{CIFAR-10: An Extra Experiment} \label{appendix:cifar10_extra}
    	\cite{turner2018metropolis} compares the authors' MH-GAN with DRS \cite{azadi2018discriminator} and no subsampling when the DCGAN is trained for only 60 epochs. To show how the epoch of DCGAN training influences the subsampling results, we fix all settings (e.g., $\lambda=0$) but only vary the epoch of DCGAN training and visualize the results in Fig. \ref{fig:FID_and_IS_over_epochs}. In terms of IS and FID, our proposed DRE-F-SP+RS is consistently better than other methods at all epochs. 
    	
    	\begin{figure*}[h]
    		\centering
    		\subfloat[][IS versus the epoch of GAN training]{
    			\includegraphics[width=0.48\textwidth, height=6cm]{./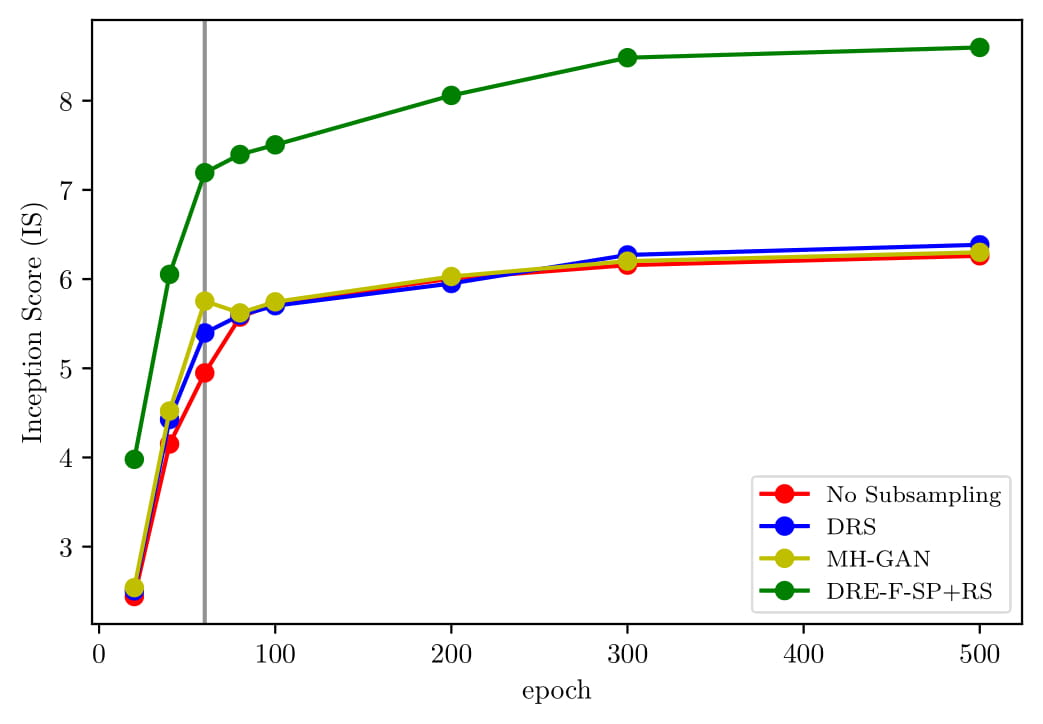}
    			\label{fig:IS_vs_epoch}}
    		\subfloat[][FID versus the epoch of GAN training]{
    			\includegraphics[width=0.48\textwidth, height=6cm]{./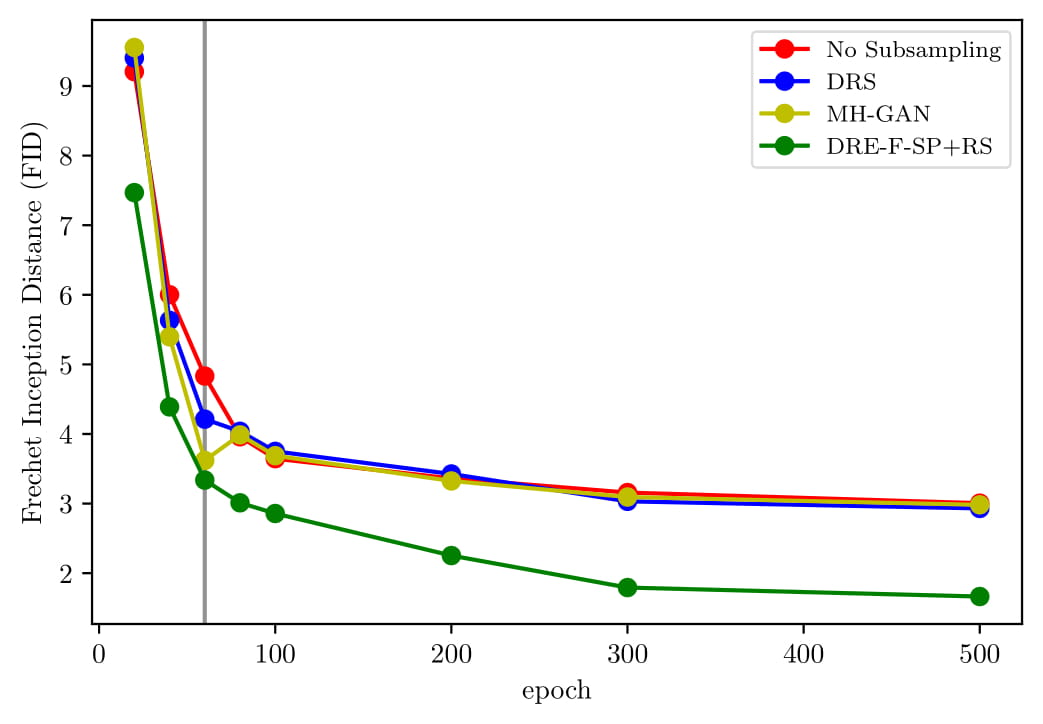}
    			\label{fig:FID_vs_epoch}}
    		\caption{Inception Score (IS) and Frechet Inception Distance (FID) versus the epoch of DCGAN training. The grey vertical line indicates the epoch 60. DRS \cite{azadi2018discriminator} and MH-GAN \cite{turner2018metropolis} perform well relative to the baseline of No Subsampling only at epoch 60 but the DCGAN is not well-trained at that epoch. It is more reasonable to train the DCGAN well first (e.g., 500 epochs) and then apply any subsampling method to further improve the image quality.}
    		\label{fig:FID_and_IS_over_epochs}
    	\end{figure*}

    	\subsection{Reduced MNIST} \label{appendix:reduced_mnist}	
    	
    	\subsubsection{Data}\label{appendix:mnist_data}
    	The MNIST dataset has 70,000 $28\times 28$ gray-scale handwritten digits from 10 classes (10 digits). The dataset is split into a training set of 60,000 images with 6000 per class and a test set of 10,000 images with 1000 per class. Since MNIST is a very simple dataset, to increase the difficulty for both the GAN training and the subsampling, we randomly select 5000 images from the original training set to form a reduced training set.

    	\subsubsection{Network Architectures}\label{appendix:mnist_nets}
    	We implement DCGAN with the generator and the discriminator shown in Table \ref{tab:mnist_gan_architecture}. The ResNet-34 for feature extraction and the MLP-5 for DRE are identical to the CIFAR-10 experiment except a few setups (e.g., the number of input channels of the first convolutional layer of ResNet-34) and please see our codes for more details. 
    	
    	\begin{table}[h]%
    		\centering
    		\footnotesize
    		\caption{Network architectures for the generator and the discriminator of DCGAN in the experiment on the MNIST. The slopes of all LeakyReLU are set to 0.2. We denote stride and padding by s and p respectively.}%
    		\subfloat[][Generator]{\begin{tabular}{c}
    				\toprule
    				$z\in \mathbbm{R}^{128}\sim N(0,I)$ \\ \hline
    				dense$\rightarrow 7\times 7\times 256$ \\ \hline
    				deconv, $4\times 4$, $\text{s}=2$, $\text{p}=1$, $128$; BN; ReLU \\ \hline
    				deconv, $4\times 4$, $\text{s}=2$, $\text{p}=1$, $64$; BN; ReLU \\ \hline
    				conv, $3\times 3$, $\text{s}=1$, $\text{p}=1$, $3$; Tanh \\
    				\bottomrule
    		\end{tabular}}%
    		\qquad
    		\subfloat[][Discriminator]{\begin{tabular}{c}
    				\toprule
    				RGB image $x\in\mathbbm{R}^{1\times 28\times 28}$ \\ \hline
    				conv, $3\times 3$, $\text{s}=1$, $\text{p}=1$, 64; LeakyReLU \\ \hline
    				conv, $4\times 4$, $\text{s}=2$, $\text{p}=1$, 64; LeakyReLU \\ \hline
    				conv, $3\times 3$, $\text{s}=1$, $\text{p}=1$, 128; LeakyReLU \\ \hline
    				conv, $4\times 4$, $\text{s}=2$, $\text{p}=1$, 128; LeakyReLU \\ \hline
    				conv, $3\times 3$, $\text{s}=1$, $\text{p}=1$, 256; LeakyReLU \\ \hline
    				fc$\rightarrow 1$; Sigmoid\\
    				\bottomrule
    		\end{tabular}}
    		\label{tab:mnist_gan_architecture}%
    	\end{table}

    	\subsubsection{Training Setups}\label{appendix:mnist_training_setups}
    	In the MNIST setting, the DCGAN is trained on the reduced training set with the Adam optimizer, constant learning rate $10^{-4}$, batch size 256, and 500 epochs. The modified ResNet-34 for feature extraction is trained on the reduced training set for 200 epochs with the SGD optimizer, initial learning rate 0.01 (decay at epoch 100 with factor 0.1), weight decay $10^{-4}$, and batch size 512. To implement DRE-F-SP, we use the MLP-5 as in the CIFAR-10 setting, and it is trained with the Adam optimizer, initial learning rate $10^{-4}$ (decay at the epoch 400 with factor 0.1), batch size 512, and 500 epochs. The optimal hyperparameter is selected based on Table \ref{tab:mnist_hyperparameter_selection}
    	
    	\begin{table}[h]
    		\centering
    		\footnotesize
    		\caption{Hyperparameter selection for MNIST. Two-Sample Kolmogorov-Smirnov test statistic is shown for each $\lambda$.}
    		\begin{tabular}{ccccc}
    			\toprule
    			& \multicolumn{4}{c}{DCGAN} \\
    			\cline{2-5}
    			$\lambda$ & 0     & 0.001 & 0.01    & 0.1 \\ \midrule
    			KS Stat. & 1.203E-01 & 1.182E-01 & 1.168E-01 & \textbf{1.115E-01} \\
    			\bottomrule
    		\end{tabular}%
    		\label{tab:mnist_hyperparameter_selection}%
    	\end{table}%
    	
    	\subsubsection{Performance Measures} \label{appendix:mnist_performance_measures}
    	To evaluate the quality of fake images by IS and FID, we train the Inception-V3 on all 60,000 MNIST training images (not the reduced training set). FID is computed based on the final average pooling features from the pre-trained Inception-V3. We compute the FID between 50,000 fake images and all 60,000 training images.
    	
    	\subsubsection{Quantitative Results} \label{appendix:mnist_quantitative_results}
    	Table \ref{tab:results_reduced_mnist_main} shows the results of the experiment and demonstrates that our approaches significantly outperform two existing subsampling methods.
    	
    	\begin{table}[h]
    		\centering
    		\footnotesize
    		\caption{Average quality of 50,000 fake MNIST images from different subsampling methods over three repetitions. We draw 50,000 fake images by each method on which we compute the IS and FID. We repeat this sampling for three times and report the average IS and FID. Higher IS and lower FID are better. A grid search is conducted for DRE-F-SP to select the hyperparameter, and the results under the optimal $\lambda^*$ are shown in this table. We include the IS and FID of all 60,000 training data and 10,000 test data as a reference.}
    		\begin{tabular}{lll}
    			\toprule
    			Method & IS (mean$\pm$std)    & FID (mean$\pm$std) \\
    			\midrule
    			\textbf{- Real Data -} &       &  \\
    			All 60,000 Training Data & 9.984 & - \\
    			10,000 Test Data & 9.462 & 0.134 \\
    			\midrule
    			\textbf{- DCGAN -} &       &  \\
    			No Subsampling & $7.791\pm 0.004$ & $1.723\pm 0.004$ \\
    			DRS \cite{azadi2018discriminator} & $8.032\pm 0.007$ & $1.658\pm 0.007$ \\
    			MH-GAN \cite{turner2018metropolis} & $8.461\pm 0.001$ & $1.269\pm 0.007$ \\
    			DRE-F-SP+RS ($\lambda^*=0.1$) & $\bm{9.676\pm 0.005}$ & $\bm{0.374\pm 0.007}$ \\
    			DRE-F-SP+MH ($\lambda^*=0.1$) & $\bm{9.677\pm 0.006}$ & $\bm{0.368\pm 0.008}$ \\
    			DRE-F-SP+SIR ($\lambda^*=0.1$) & $\bm{9.675\pm 0.003}$ & $\bm{0.380\pm 0.010}$ \\
    			\bottomrule
    		\end{tabular}%
    		\label{tab:results_reduced_mnist_main}%
    	\end{table}%
    	
    	\subsubsection{Visual Results} \label{appendix:mnist_visual_results}
    	We show some example MNIST images from different subsampling methods in Fig. \ref{fig:mnist_visual_results_dcgan1}. For each method, we draw images from all 10 classes with 30 images per class. 
    	
    	\begin{figure*}[h]
    		\centering
    		\subfloat[][No subsampling]{\includegraphics[width=0.45\textwidth, height=20cm]{./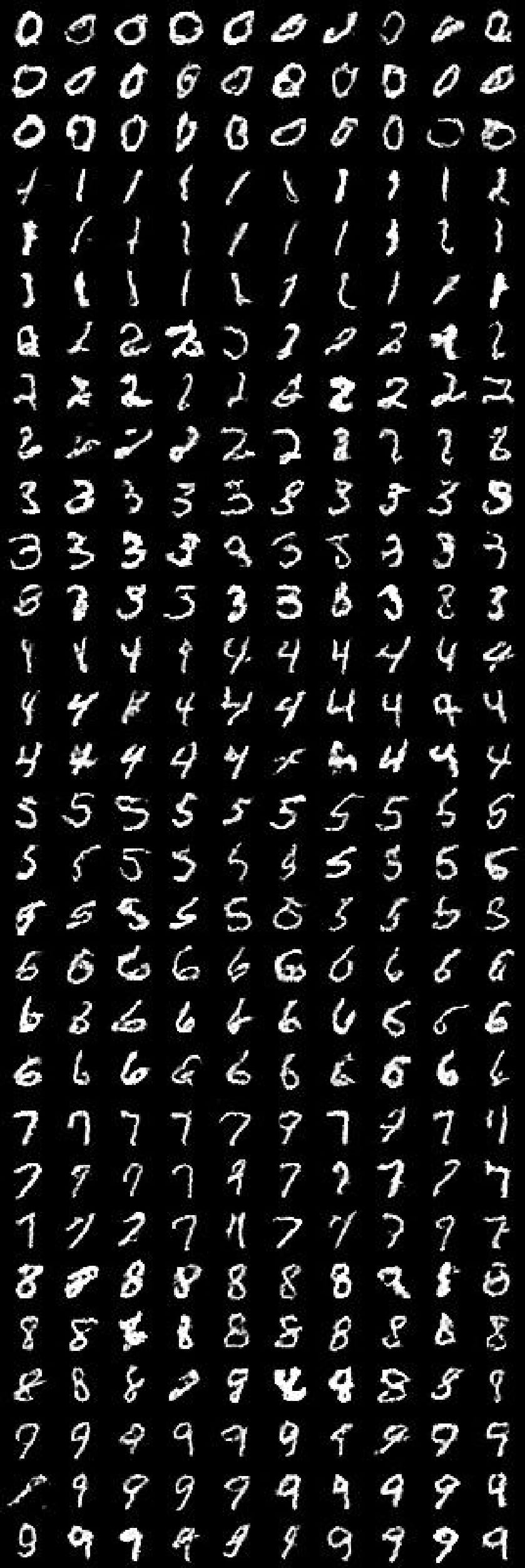}\label{fig:mnist_visual_dcgan}}\quad
    		\subfloat[][DRS \cite{azadi2018discriminator}]{\includegraphics[width=0.45\textwidth, height=20cm]{./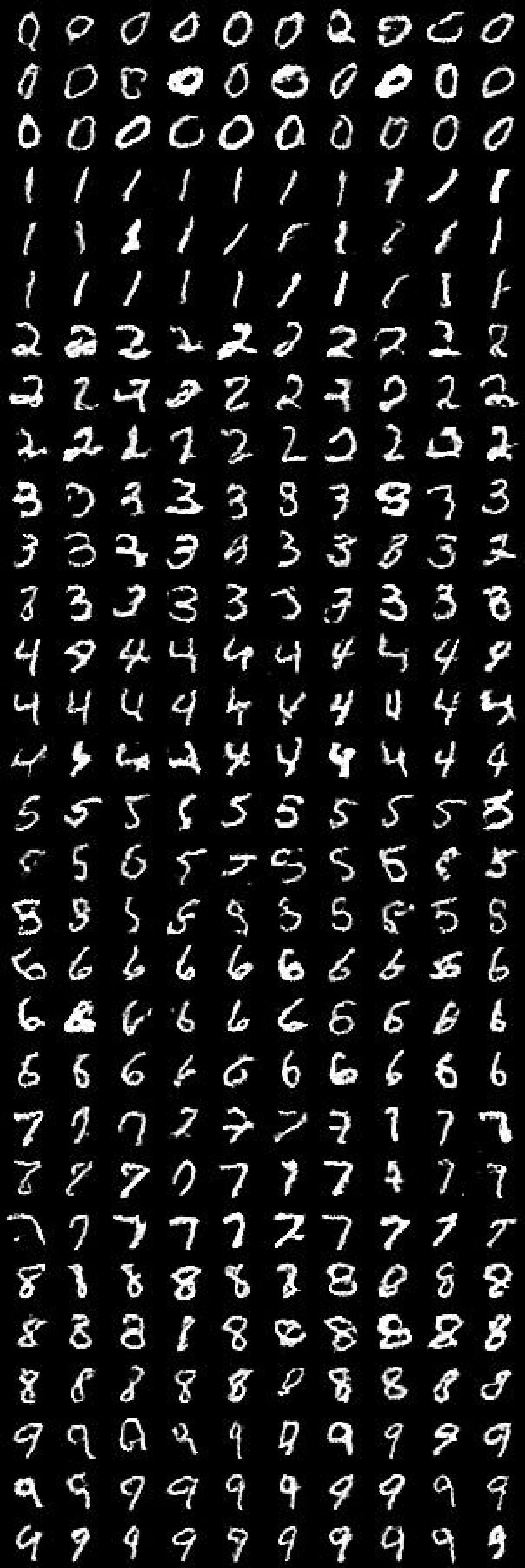}\label{fig:mnist_visual_dcgan_drs}}
    		\caption{Fake MNIST images from DCGAN: Part 1}.
    		\label{fig:mnist_visual_results_dcgan1}
    	\end{figure*}
    	
    	\begin{figure*}[h]
    		\centering
    		\ContinuedFloat
    		
    		\subfloat[][MH-GAN \cite{turner2018metropolis}]{\includegraphics[width=0.45\textwidth, height=20cm]{./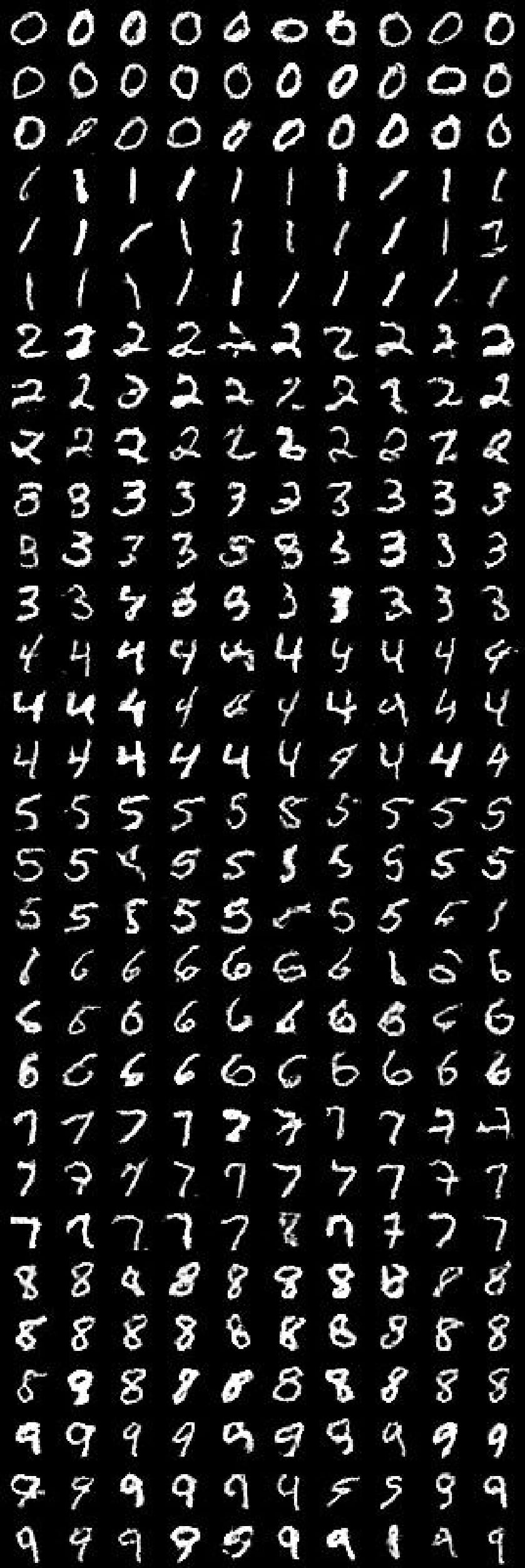}\label{fig:mnist_visual_mhgan}}\quad
    		\subfloat[][DRE-F-SP+RS]{\includegraphics[width=0.45\textwidth, height=20cm]{./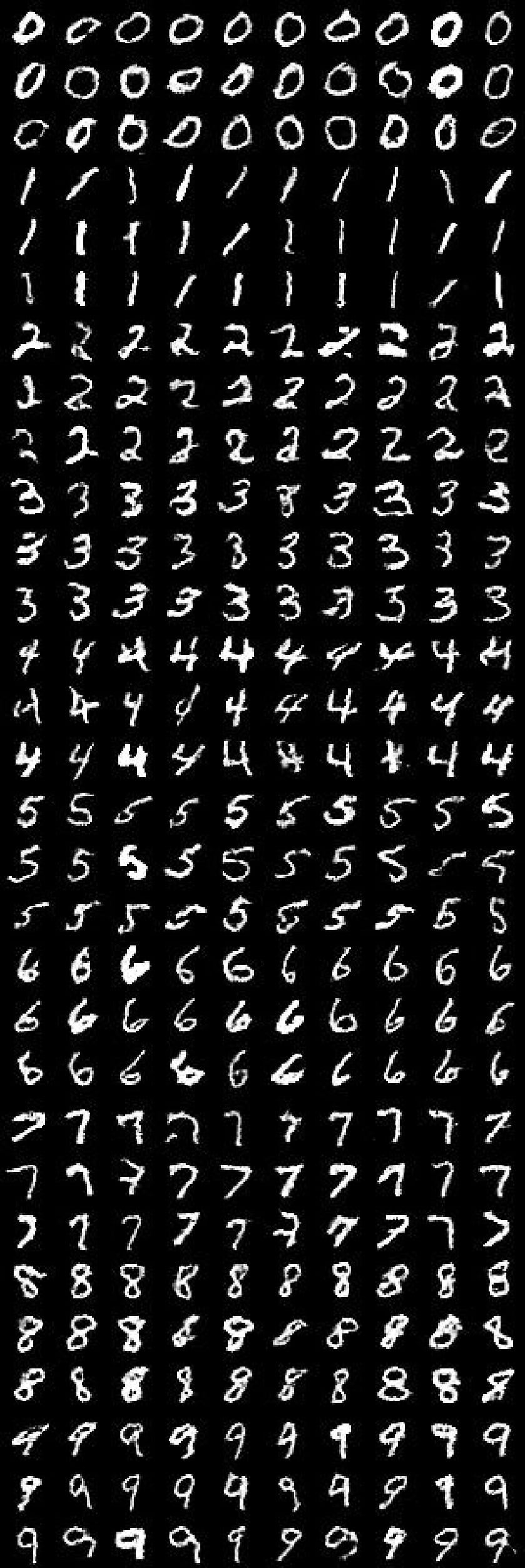}\label{fig:mnist_visual_dcgan_DRE-F-SP+RS}}
    		\caption{Fake MNIST images from DCGAN: Part 2}.
    		\label{fig:mnist_visual_results_dcgan2}
    	\end{figure*}
    	
    	\begin{figure*}[h]
    		\centering
    		\ContinuedFloat
    		
    		\subfloat[][DRE-F-SP+MH]{\includegraphics[width=0.45\textwidth, height=20cm]{./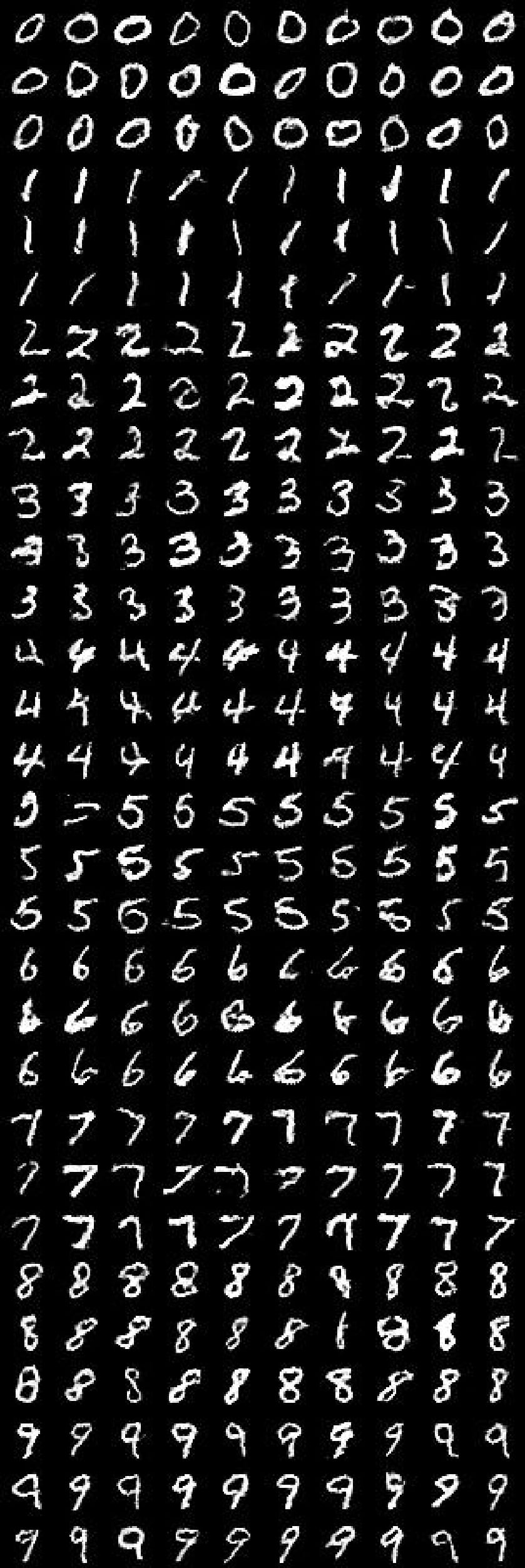}\label{fig:mnist_visual_DRE-F-SP+MH}}\quad
    		\subfloat[][DRE-F-SP+SIR]{\includegraphics[width=0.45\textwidth, height=20cm]{./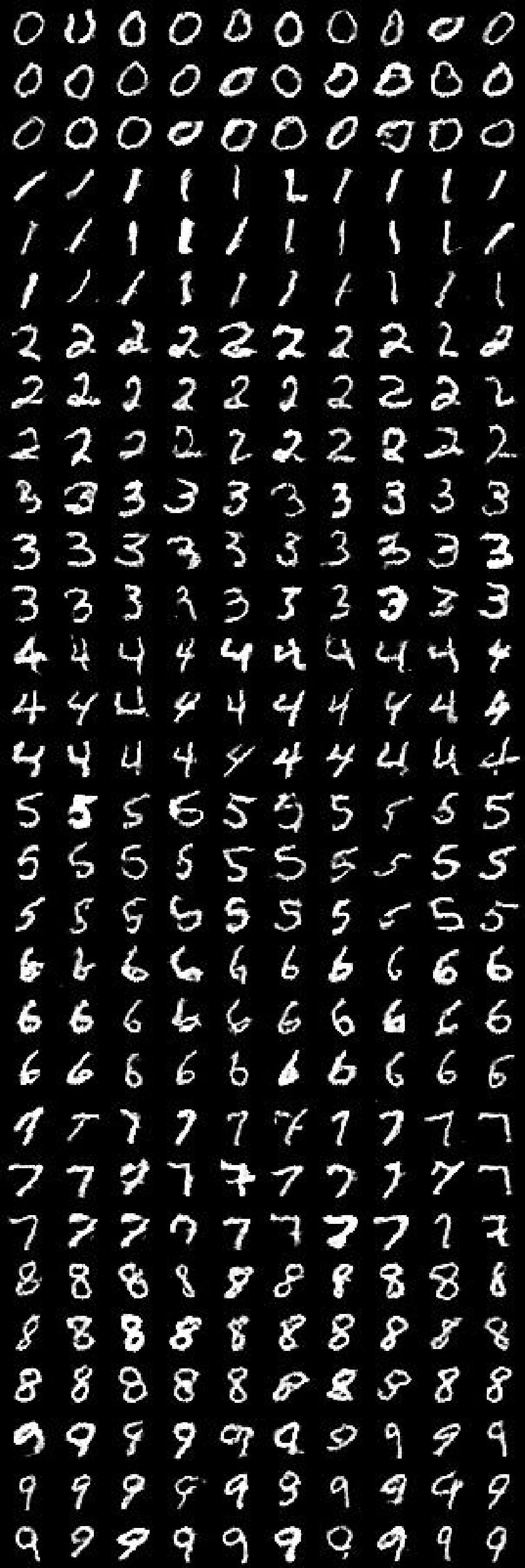}\label{fig:mnist_visual_dcgan_DRE-F-SP+SIR}}
    		\caption{Fake MNIST images from DCGAN: Part 3}.
    		\label{fig:mnist_visual_results_dcgan3}
    	\end{figure*}
    	
    	\subsection{CelebA Dataset} \label{appendix:celeba}	
    	\subsubsection{Data}\label{appendix:celeba_data}
    	The CelebA dataset \cite{liu2015faceattributes} has 202,599 $218\times 178$ RGB face images. Each image has 40 binary attributes. We create 6 classes in terms of 3 binary attributes (``Wearing\_Lipstick", ``Smiling", ``Mouth\_Sligthly\_Open"); some minority classes are merged together to balance the dataset. Then the dataset is randomly split into a training set of 192,599 images and a test set of 10,000 images. All images are resized to $64\times 64$.
    	
    	\subsubsection{Network Architectures}\label{appendix:celeba_nets}
    	We implement SNGAN \cite{miyato2018spectral} based on codes from \url{https://github.com/christiancosgrove/pytorch-spectral-normalization-gan} and \url{https://github.com/pfnet-research/sngan_projection}. The ResNet-34 for feature extraction and the MLP-5 for DRE are identical to the CIFAR-10 experiment except a few setups (e.g., the stride size of the first residual block of ResNet-34) and please see our codes for more details. 
    	
    	\subsubsection{Training Setups}\label{appendix:celeba_training_setups}
    	The SNGAN is trained on the training set with the Adam optimizer, constant learning rate $10^{-4}$ for the generator, constant learning rate $4\times 10^{-4}$ for the discriminator, batch size 256, and 100 epochs. The modified ResNet-34 for feature extraction is trained on the training set for 100 epochs with the SGD optimizer, initial learning rate 0.001 (decay at epoch 30 and 70 with factor 0.1), weight decay $10^{-4}$, and batch size 256. The MLP-5 is trained on the training set with the Adam optimizer, initial learning rate $10^{-4}$ (decay at epoch 30 and 70), batch size 512, and 100 epochs. The optimal hyperparameter is selected based on Table \ref{tab:celeba_hyperparameter_selection}.
    	
    	\begin{table}[h]
    		\centering
    		\footnotesize
    		\caption{Hyperparameter selection for CelebA. Two-Sample Kolmogorov-Smirnov test statistic is shown for each $\lambda$.}
    		\begin{tabular}{cccccc}
    			\toprule
    			& \multicolumn{5}{c}{SNGAN} \\
    			\cline{2-6}
    			$\lambda$ & 0     & \textbf{0.005} & 0.01 &0.05   & 0.1 \\ \midrule
    			KS Stat. & 	0.2643 & \textbf{0.2622} & 0.2646 & 0.2643 & 0.2645 \\
    			\bottomrule
    		\end{tabular}%
    		\label{tab:celeba_hyperparameter_selection}%
    	\end{table}%

    	\subsubsection{Performance Measures} \label{appendix:celeba_performance_measures}
    	To evaluate the quality of fake images by IS and FID, we use the Inception-V3 which is pre-trained on the ImageNet \cite{imagenet_cvpr09}. FID is computed based on the final average pooling features from the pre-trained Inception-V3. The computation is based on codes from \url{https://github.com/sbarratt/inception-score-pytorch} and \url{https://github.com/mseitzer/pytorch-fid}. We compute the FID between 50,000 fake images and all 192,599 training images. 
    	
    	\subsubsection{Quantitative Results} \label{appendix:celeba_quantitative_results}
    	Table \ref{tab:results_celeba_main} shows the results of the experiment and demonstrates that our approaches significantly outperform two existing subsampling methods.
    	
    	\begin{table}[h]
    		\centering
    		\footnotesize
    		\caption{Average quality of 50,000 fake CelebA images from different subsampling methods over three repetitions. We draw 50,000 fake images by each method on which we compute the IS and FID. We repeat this sampling for three times and report the average IS and FID. Higher IS and lower FID are better. A grid search is conducted for DRE-F-SP to select the hyperparameter, and the results under the optimal $\lambda^*$ are shown in this table. We include the IS and FID of all training data and 10,000 test data as a reference.}
    		\begin{tabular}{lll}
    			\toprule
    			Method & IS (mean$\pm$std)    & FID (mean$\pm$std) \\
    			\midrule
    			\textbf{- Real Data -} &       &  \\
    			All Training Data & 3.321 & - \\
    			10,000 Test Data & 3.319 & 1.572 \\
    			\midrule
    			\textbf{- SNGAN -} &       &  \\
    			No Subsampling & $2.781\pm 0.006$ & $6.607\pm 0.013$ \\
    			DRS \cite{azadi2018discriminator} & $2.796\pm 0.006$ & $6.490\pm 0.015$ \\
    			MH-GAN \cite{turner2018metropolis} & $2.766\pm 0.005$ & $6.649\pm 0.019$ \\
    			DRE-F-SP+RS ($\lambda^*=0.005$) & $\bm{2.865\pm 0.009}$ & $\bm{6.088\pm 0.007}$ \\
    			DRE-F-SP+MH ($\lambda^*=0.005$) & $\bm{2.870\pm 0.009}$ & $\bm{6.032\pm 0.019}$ \\
    			DRE-F-SP+SIR ($\lambda^*=0.005$) & $\bm{2.877\pm 0.021}$ & $\bm{6.116\pm 0.021}$ \\
    			\bottomrule
    		\end{tabular}%
    		\label{tab:results_celeba_main}%
    	\end{table}%
    	
    	\subsubsection{Visual Results} \label{appendix:celeba_visual_results}
    	We show some example CelebA images from different subsampling methods in Fig. \ref{fig:celeba_visual_results_sngan}. We manually check each generated image and outline it in yellow if we find obvious distortion or asymmetry in its face area (we ignore the background). We can see our proposed methods have fewer marked images.
    	
    	\begin{figure*}[h]
    		\centering
    		\subfloat[][No subsampling (28 yellow squares)]{\includegraphics[width=0.45\textwidth, height=7.5cm]{./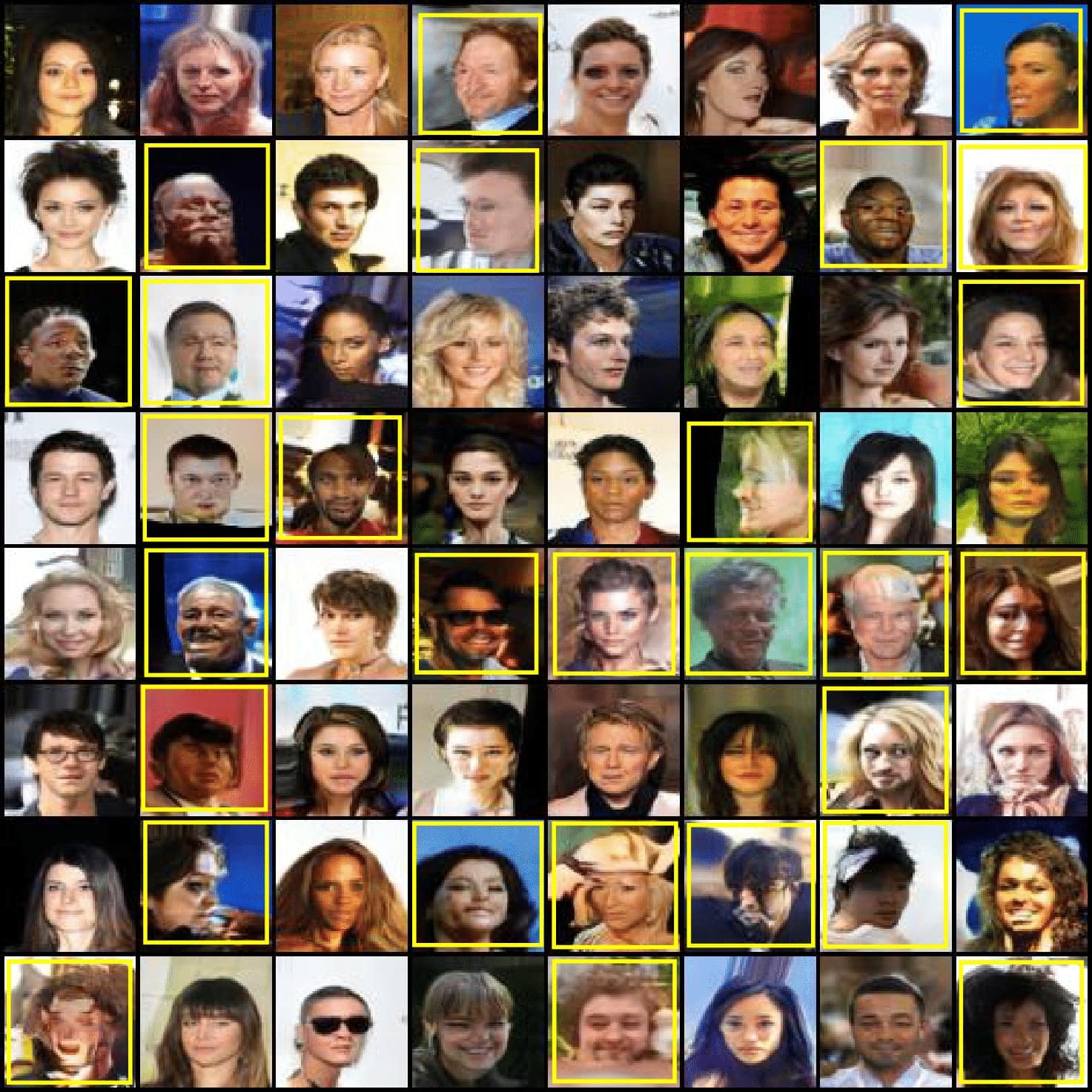}\label{fig:celeba_visual_sngan}}\quad
    		\subfloat[][DRS \cite{azadi2018discriminator} (25 yellow squares)]{\includegraphics[width=0.45\textwidth, height=7.5cm]{./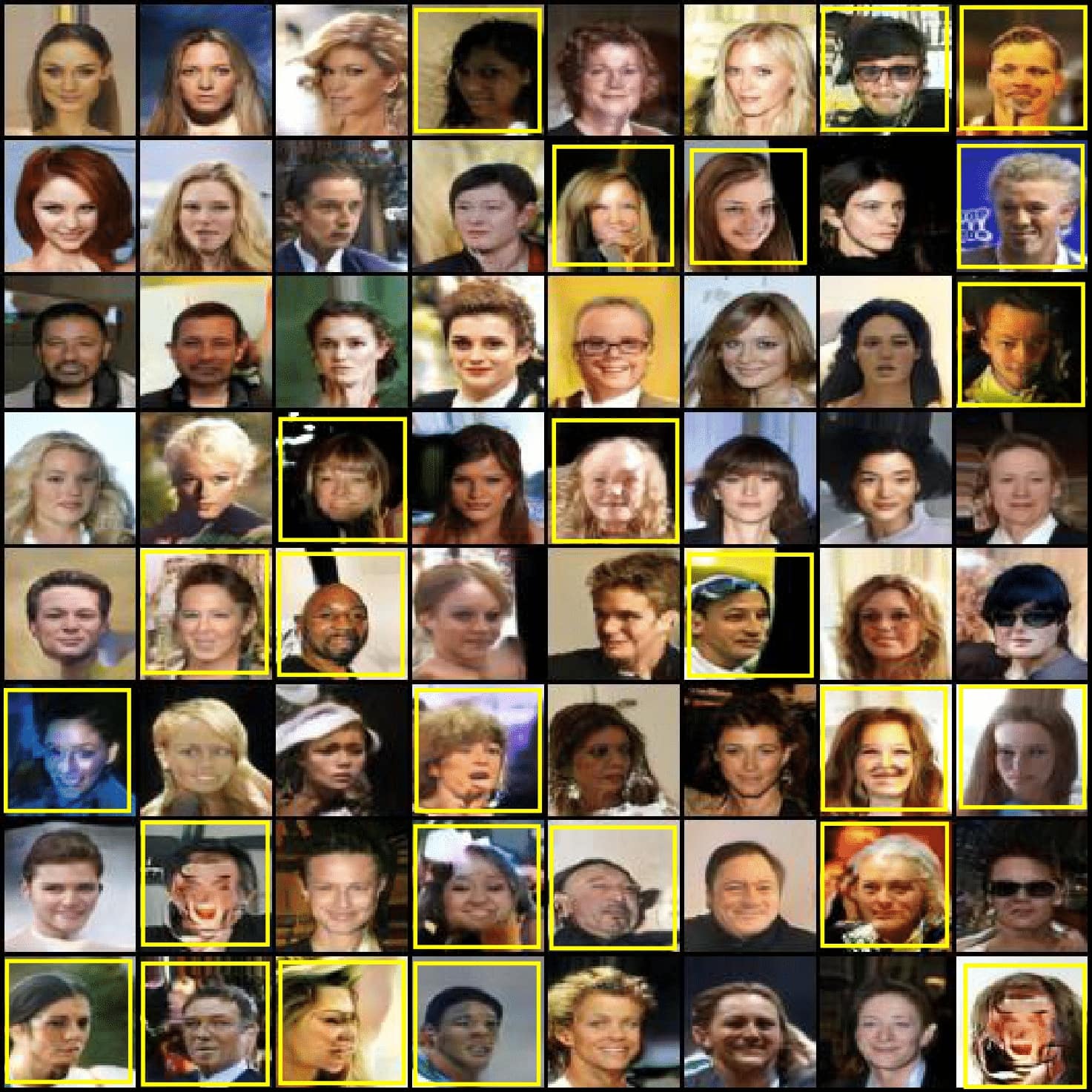}\label{fig:celeba_visual_sngan_DRS}}
    		\\
    		\subfloat[][MH-GAN \cite{turner2018metropolis} (22 yellow squares)]{\includegraphics[width=0.45\textwidth, height=7.5cm]{./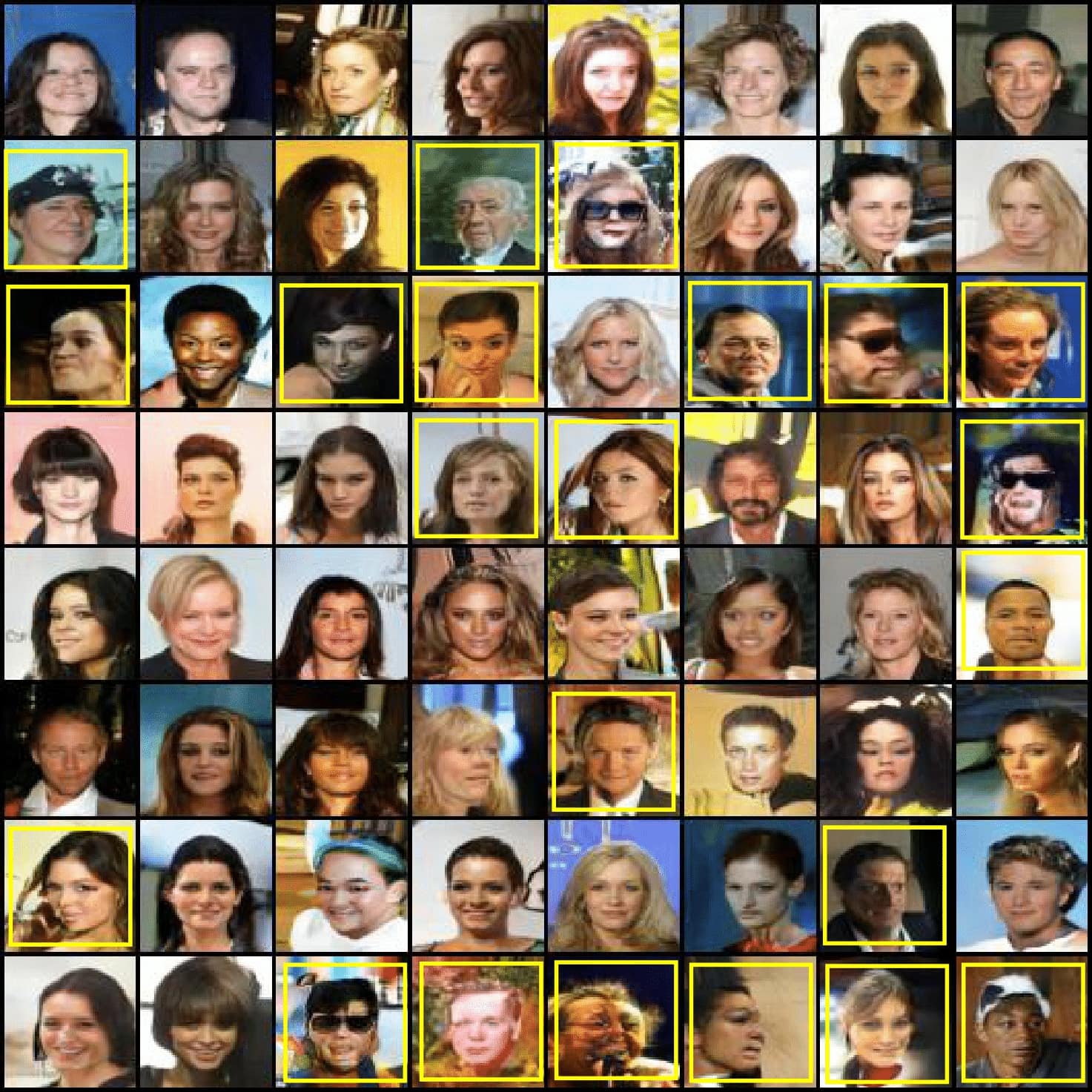}\label{fig:celeba_visual_sngan_MHGAN}}\quad
    		\subfloat[][DRE-F-SP+RS (13 yellow squares)]{\includegraphics[width=0.45\textwidth, height=7.5cm]{./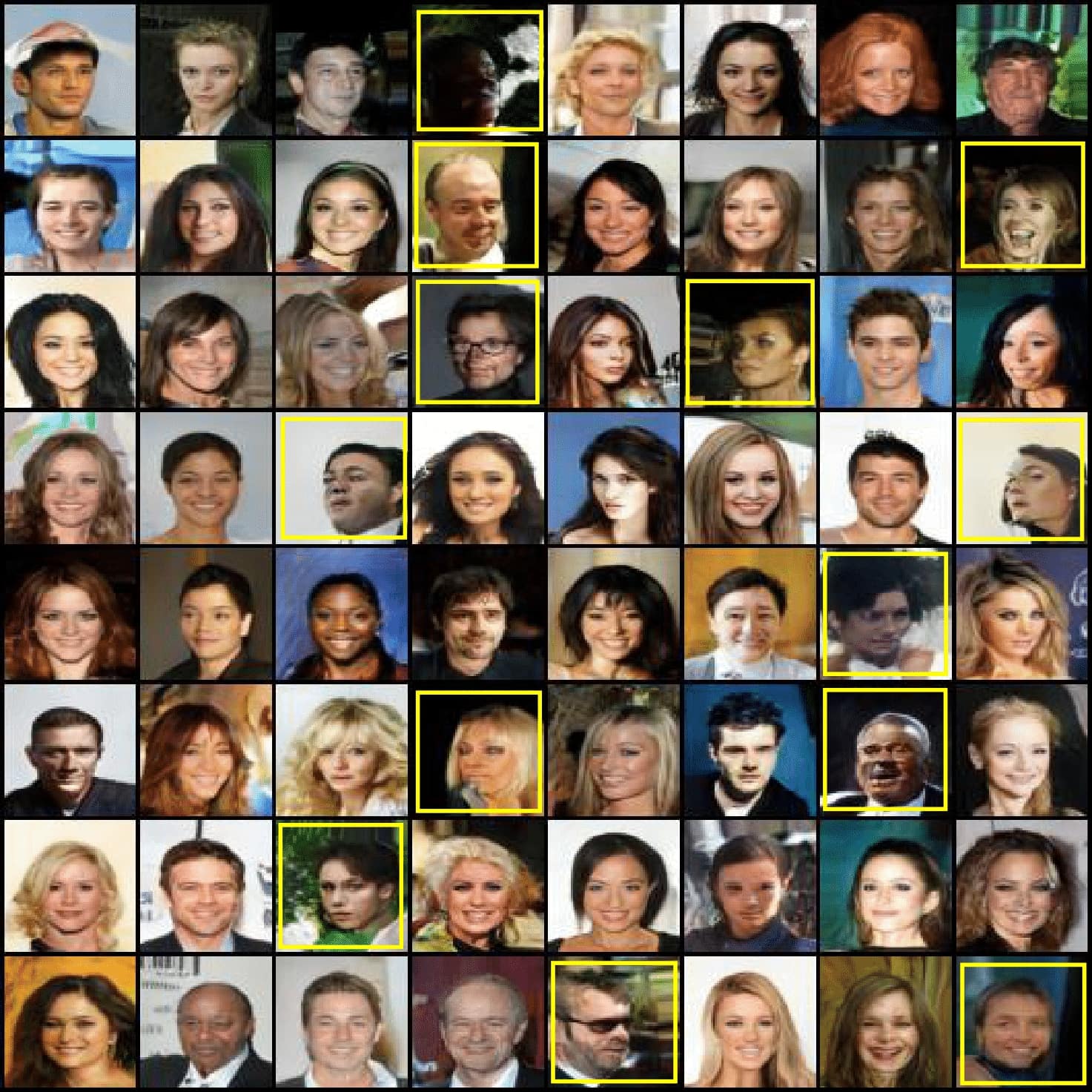}\label{fig:celeba_visual_sngan_SP_RS}}
    		\\
    		\subfloat[][DRE-F-SP+MH (10 yellow squares)]{\includegraphics[width=0.45\textwidth, height=7.5cm]{./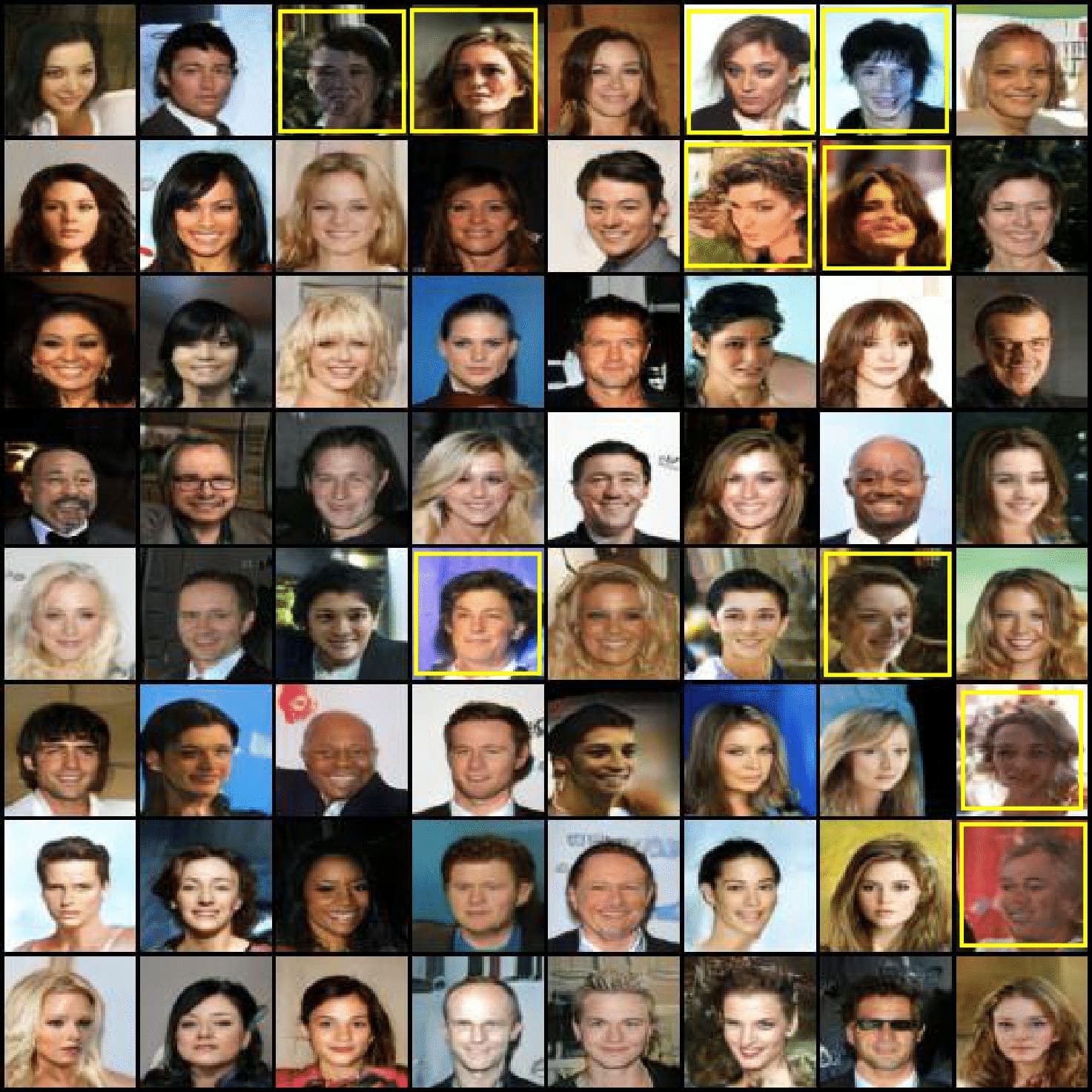}\label{fig:celeba_visual_sngan_SP_MH}}\quad
    		\subfloat[][DRE-F-SP+SIR (13 yellow squares)]{\includegraphics[width=0.45\textwidth, height=7.5cm]{./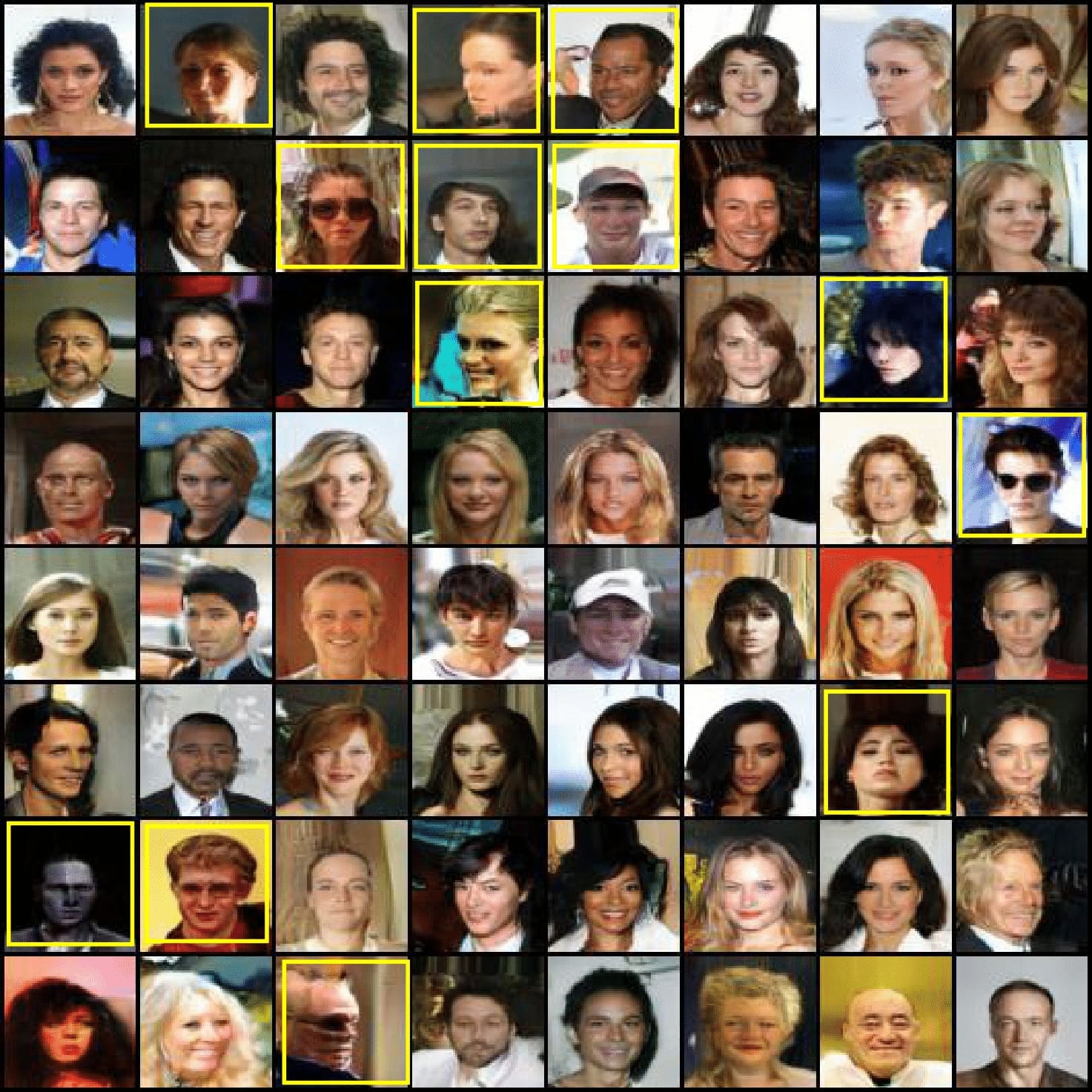}\label{fig:celeba_visual_sngan_SP_SIR}}
    		\caption{Fake CelebA images from SNGAN (yellow outlines indicate samples with obvious distortions and asymmetry)}.
    		\label{fig:celeba_visual_results_sngan}
    	\end{figure*}
    	
    \end{appendices}

\end{document}